\numberwithin{equation}{section}
\theoremstyle{plain}
\newtheorem{thm}{Theorem}[section]
\newtheorem{lem}{Lemma}[section]
\newtheorem{prop}{Proposition}[section]
\theoremstyle{remark}
\newtheorem{remark}{Remark}[section]
\newcommand{\Div}{\mbox{div}}
\newcommand{\bx}{{\bf x}}
\newcommand{\bX}{{\bf X}}
\newcommand{\bY}{{\bf Y}}
\newcommand{\bZ}{{\bf Z}}
\newcommand{\cC}{{\cal C}}
\newcommand{\cD}{{\cal D}}
\newcommand{\cE}{{\cal E}}
\newcommand{\cL}{{\cal L}}
\newcommand{\cM}{{\cal M}}
\newcommand{\cP}{{\cal P}}
\newcommand{\cR}{{\cal R}}
\newcommand{\cS}{{\cal S}}
\newcommand{\mR}{\mathbb{R}}
\newcommand{\mE}{\mathbb{E}}
\newcommand{\mF}{\mathbb{F}}
\newcommand{\mP}{\mathbb{P}}
\newcommand{\fracpartial}[2]{\frac{\partial #1}{\partial  #2}}
\newcommand{\fracderiv}[2]{\frac{d #1}{d #2}}
\newcommand{\ra}{\rightarrow}
\newcommand{\iy}{\infty}
\newcommand{\pt}{\partial}
\newcommand{\cov}{\mbox{Cov}}
\newcommand{\ve}{\varepsilon}
\begin{document}

\begin{frontmatter}

\title{%Wasserstein Gradient Flow Limit Theorems and Joint Computational and Statistical Analysis of Policy Gradient Algorithms} 
Computational and Statistical Asymptotic Analysis of the JKO Scheme for Iterative Algorithms to update distributions} 
%\author{Yazhen Wang  and Shang Wu \\
%Department of Statistics, \\
%University of Wisconsin-Madison and Fudan University} 
%\date{}
%\date{March 1, 2017}
%\maketitle
%in Reinforcement Learning}
%\title{A sample article title with some additional note\thanksref{t1}}
\runtitle{Computational and Statistical Analysis of the JKO Scheme}
%\thankstext{T1}{A sample additional note to the title.}

\begin{aug}
%%%%%%%%%%%%%%%%%%%%%%%%%%%%%%%%%%%%%%%%%%%%%%%
%% Only one address is permitted per author. %%
%% Only division, organization and e-mail is %%
%% included in the address.                  %%
%% Additional information can be included in %%
%% the Acknowledgments section if necessary. %%
%% ORCID can be inserted by command:         %%
%% \orcid{0000-0000-0000-0000}               %%
%%%%%%%%%%%%%%%%%%%%%%%%%%%%%%%%%%%%%%%%%%%%%%%
\author[A]{\fnms{Shang}~\snm{Wu}\ead[label=e1]{shangwu@fudan.edu.cn}}
\and 
\author[B]{\fnms{Yazhen}~\snm{Wang}\ead[label=e2]{yzwang@stat.wisc.edu}}
%\orcid{0000-0000-0000-0000}}
%\and
%\author[C]{\fnms{Third}~\snm{Author}\ead[label=e3]{third@somewhere.com}}
%%%%%%%%%%%%%%%%%%%%%%%%%%%%%%%%%%%%%%%%%%%%%%
%% Addresses                                %%
%%%%%%%%%%%%%%%%%%%%%%%%%%%%%%%%%%%%%%%%%%%%%%
\address[A]{Fudan University\printead[presep={,\ }]{e1}}

\address[B]{University of Wisconsin-Madison\printead[presep={,\ }]{e2}}

\end{aug}

\begin{abstract}
The seminal paper of Jordan, Kinderlehrer, and Otto \cite{jordan1998variational} introduced what is now widely known as the JKO scheme, an iterative algorithmic framework for computing distributions. This scheme can be interpreted as a Wasserstein gradient flow and has been successfully applied in machine learning contexts, such as deriving policy solutions in reinforcement learning \cite{zhang2018policy}.
In this paper, we extend the JKO scheme to accommodate models with unknown parameters. Specifically, we develop statistical methods to estimate these parameters and adapt the JKO scheme to incorporate the estimated values. To analyze the adopted statistical JKO scheme, we establish an asymptotic theory 
via stochastic partial differential equations that describes its limiting dynamic behavior. Our framework allows both the sample size used in parameter estimation and the number of algorithmic iterations to go to infinity. 
This study offers a unified framework for joint computational and statistical asymptotic analysis of the statistical JKO scheme. On the computational side, we examine the scheme's dynamic behavior as the number of iterations increases, while on the statistical side, we investigate the large-sample behavior of the resulting distributions computed through the scheme.
We conduct numerical simulations to evaluate the finite-sample performance of the proposed methods and validate the developed asymptotic theory.

%\textbf{Key words}: 

%\textbf{Running title}: 

\end{abstract}
%\newpage
\begin{keyword}[class=MSC]
\kwd[Primary ]{60J60}
\kwd{62E20}
\kwd[; secondary ]{35A15}
\kwd{35K15}
\end{keyword}

\begin{keyword}
\kwd{JKO scheme}
\kwd{Wasserstein gradient flow}
\kwd{Langevin diffusion}
\kwd{Fokker-Planck equation}
\kwd{Higher-order asymptotic distribution}
\kwd{Stochastic partial differential equation}
\end{keyword}

\end{frontmatter}
%%%%%%%%%%%%%%%%%%%%%%%%%%%%%%%%%%%%%%%%%%%%%%
%% Please use \tableofcontents for articles %%
%% with 50 pages and more                   %%
%%%%%%%%%%%%%%%%%%%%%%%%%%%%%%%%%%%%%%%%%%%%%%
%\tableofcontents

 \section{Introduction}
 
 The seminal work of Jordan, Kinderlehrer, and Otto \cite{jordan1998variational} developed what is now widely known as the JKO scheme, a foundational method for generating iterative algorithms to compute distributions and reshaping our understanding of sampling algorithms. The JKO scheme can be interpreted as a gradient flow of the free energy with respect to the Wasserstein metric, often referred to as the Wasserstein gradient flow. This interpretation has led to significant advancements in machine learning, including applications in reinforcement learning to solve policy-distribution optimization 
 problems \cite{zhang2018policy}. While the JKO scheme traditionally assumes that the underlying model is fully known, in this paper, we relax this assumption by allowing models with unknown parameters. We develop statistical approaches to estimate these parameters and adapt the JKO scheme to work with the estimated values. 
 
 Specifically, Langevin equations---stochastic differential equations---play a key role in describing the evolution of physical systems, facilitating stochastic gradient descent in machine learning, and enabling Markov chain Monte Carlo (MCMC) simulations in numerical computing. For examples and detailed discussions, see \cite{bishop2006pattern, bernton2018langevin, shen2019randomized, dalalyan2020sampling, chewi2021analysis, lee2021structured, ma2021there}. Solutions to Langevin equations, known as Langevin diffusions, are stochastic processes whose distributions evolve according to the Fokker-Planck equations \cite{gardiner1985handbook, risken1996fokker}.
The JKO scheme provides an iterative framework for computing these distributions by solving a sequence of optimization problems \cite{jordan1998variational}. It has been established that the outputs of this iterative algorithm converge to the true distributions as the number of iterations approaches infinity. Furthermore, the scheme offers a new perspective by interpreting Langevin diffusion as the gradient flow of the Kullback-Leibler (KL) divergence over the Wasserstein space of probability measures \cite{ambrosio2008gradient}. This insight has provided a deeper understanding of MCMC algorithms based on Langevin diffusions and has paved the way for novel optimization techniques to develop advanced MCMC and gradient descent algorithms. Relevant examples can be found in \cite{alquier2016properties, liu2016stein, blei2017variational, wibisono2018sampling, duncan2019geometry, durmus2019analysis, lu2019accelerating, salim2020wasserstein, bunne2022proximal, yan2023gaussian}.
In this paper, we establish an asymptotic theory for the adopted JKO scheme, focusing on its dynamic behavior and convergence properties in the presence of estimated model parameters.
% The scheme renders an interpretation of a Langevin diffusion as a gradient flow of a Kullback-Leibler (KL) divergence over the Wasserstein space of probability measures  \cite{ambrosio2008gradient}.

We consider a Langevin equation with an unknown parameter and analyze the associated JKO scheme. The unknown parameter is estimated using observations of the Langevin diffusion at discrete time points. Based on this estimated parameter, we formulate the JKO scheme to develop iterative algorithms for computing the distributions of the Langevin diffusion. Both online and offline parameter estimation approaches are considered. We derive the asymptotic distributional theory for the scaled difference between the computed and true distributions of the Langevin diffusion. This analysis is carried out as both the number of iterations in the algorithm and the number of observations used for parameter estimation tend to infinity. The scaling is achieved by the square root of the number of observations and the time step size. %For a similar analytical procedure, see the stochastic gradient descent analysis in \cite{wang2020asymptotic}. 
The asymptotic distribution of the scaled difference is governed by stochastic partial differential equations. The theory establishes a novel, unified framework for conducting a joint computational and statistical asymptotic analysis of the JKO scheme. From a statistical perspective, this joint analysis provides tools for inferential analysis of statistical methods associated with the JKO scheme. From a computational perspective, it allows us to understand and quantify random fluctuations and their impact on the dynamic and convergence behavior of learning algorithms derived from the JKO scheme.

The rest of the paper proceeds as follows. Section 2 introduces the Langevin and Fokker-Planck equations, along with a review of the Kullback-Leibler divergence and the Wasserstein distance. Section 3 presents the JKO scheme and its associated gradient flow. It also describes methods for estimating unknown parameters in the Langevin equation using observations of Langevin diffusion at discrete time points. Both online and offline parameter estimation approaches are considered, and the JKO scheme is formulated under the estimation scenario to develop iterative algorithms for computing the distributions of the Langevin diffusion. Section 4 establishes the asymptotic distributional theory for the outputs of the JKO scheme in the offline case as the number of observations used in parameter estimation increases to infinity and the time step approaches zero. Section 5 extends this analysis to the online case, where we propose various estimators under different online frameworks and derive the corresponding asymptotic distributions. Section 6 applies our theoretical framework to scenarios where the distribution is constrained within the Bures-Wasserstein space. Both offline and online results are established for this application. Section 7 presents a numerical study using a simple example to evaluate the performance of our theoretical results. All technical proofs are deferred to the Supplementary Materials.

\section{Reviews on Langevin diffusions and gradient flows in the Wasserstein space} %Wasserstein distance}% gradient flow 
\label{Sec-review} 

%\subsection{Langevin diffusion}
Consider the Langevin equation 
\begin{equation} \label{Langevin}
  dX(t) = - \nabla \! \Psi(X(t)) dt + \sqrt{ 2/\beta } \, dB_t ,  \;\; X(0) = X_0, 
\end{equation} 
where $\nabla$ is the gradient operator, $\Psi(x)$ denotes a potential that is a smoothing function from $\mR^d$ to $\mR_+=[0, \infty)$, $\beta>0$ is a constant, $B_t$ is a standard $d$-dimensional Brownian motion, and initial value $X_0$ is a $d$-dimensional
random vector. % with probability density $\rho^0$ on $\mR^d$. 
The solution $X(t)$ to the Langevin equation (\ref{Langevin}) refers to the Langevin
diffusion. This equation arises, for example, as the approximation to the motion of chemically bound particles \cite{chandrasekhar1943stochastic, schuss1980singular}. The probability density function $\rho(t,x)$ of $X(t)$ is given by the solution to the 
Fokker-Planck equation with the following form \cite{schuss1980singular, gardiner1985handbook, risken1996fokker, vempala2019rapid}
\begin{equation} \label{Fokker-Planck}
  \fracpartial{\rho(t,x)}{t} = \Div [  \rho(t,x)  \nabla \Psi(x) ] + \beta^{-1} \Delta \rho(t,x), \;\; \rho(0,x) = \rho^0(x), 
\end{equation} 
where %$\Psi(x)$ denotes a potential that is a smoothing function from $\mR^d$ to $\mR_+=[0, \infty)$, $\beta>0$ is a constant, and 
$\rho^0(x)$ is the probability density function of the initial value $X_0$ on $\mR^d$.
Note that the solution $\rho(t,x)$ of (\ref{Fokker-Planck}) must be a probability density on $\mR^d$ for almost every fixed time $t$---that is, $\rho(t,x) \geq 0$ for almost 
every $(t,x) \in \mR^d \times [0, \infty)$, and $ \int_{\mR^d} \rho(t,x) dx = 1$ for almost every $t \in [0, \infty)$.  

For the potential $\Psi(x)$ satisfying appropriate growth conditions, $X(t)$ has a unique stationary distribution with probability density function $\pi(x)$ that takes the form of 
the Gibbs distribution \cite{gardiner1985handbook, risken1996fokker},
\begin{equation} \label{Gibbs} 
   \pi(x) = \frac{1}{Z} \exp[  - \beta \Psi(x) ] , \;\; Z = \int_{\mR^d} \exp[  - \beta \Psi(x) ] dx , 
\end{equation} 
where $Z$ is called the partition function. 

The Gibbs distribution $\pi$ satisfies a variational principle  \cite{jordan2017extended}---it minimizes $\mF_\Psi(\rho)$ over all probability densities $\rho$ on $\mR^d$, where 
$\mF_\Psi(\rho)$ denotes the following %free energy 
functional, 
\begin{eqnarray} \label{energy-functional} 
\mF_\Psi (\rho) &=& \cE_\Psi(\rho) - \beta^{-1} \cS(\rho) = \beta^{-1} \cD_{KL}( \rho \| \pi) 
  - \beta^{-1} \log Z, 
\end{eqnarray} 
\begin{equation} \label{E-S-functional} 
\cE_\Psi(\rho) =  \int_{\mR^d} \Psi(x) \rho(x) dx , \;\; \cS(\rho)  = -\int_{\mR^d} \rho(x) \log[\rho(x)] \,dx. 
 \end{equation}
Here $\cE_\Psi(\rho)$ represents an energy functional, $\cS(\rho)$ stands for the %Gibbs-Boltzmann 
entropy functional, and they obey 
\[ \fracpartial{\cE_\Psi(\rho)}{\rho} = \Psi, \;\;  \fracpartial{\cS(\rho)}{\rho} = - \log \rho - 1 . \]
Furthermore, if $\rho(t,x)$ satisfies  (\ref{Fokker-Planck}), then $\mF_\Psi(\rho)$ will decrease with time \cite{jordan1996route, risken1996fokker}. 
$\cD_{KL}(P_1 \| P_2)$ in (\ref{energy-functional}) 
denotes the Kullback-Leibler divergence (also known as relative entropy) between probability measures $P_1$ and 
$P_2$ on $\mR^d$ \cite{amari2000methods, ay2017information}, 
\[ \cD_{KL}(P_1 \| P_2) = \int_{\mR^d} \fracderiv{P_1}{P_2} \log \left( \fracderiv{P_1}{P_2}\right) d P_2, \]
where $\fracderiv{P_1}{P_2}$ represents the Radon-Nikodym derivative of $P_1$ with respect to $P_2$. If $P_1$ and $P_2$ have 
probability density functions $\rho_1$ and $\rho_2$ with respect to the Lebesgue measures on $\mR^d$ , then we write $\cD_{KL}(\rho_1 \| \rho_2)$ for 
$\cD_{KL}(P_1 \| P_2)$ and 
\[  \cD_{KL}(\rho_1 \| \rho_2) = \int_{\mR^d} \rho_1(\bx)  \log \left[ \frac{\rho_1(\bx)} {\rho_2(\bx) }\right] d\bx. \]

Suppose that $P_1$ and $P_2$ are two probability measures on $\mR^d$. The Wasserstein distance of order two between $P_1$ and 
$P_2$ is defined by 
\begin{equation} \label{Wasserstein-1}
 d^2_W (P_1, P_2) = \inf_{P \in \cP(P_1, P_2)} \int_{\mR^d \times \mR^d} \| x - y \|^2 P(dxdy) ,  
\end{equation} 
where the symbol $\| \cdot \|$ stands for the usual Euclidean norm on $\mR^d$, and $\cP(P_1, P_2)$ denotes the set of all probability measures on $\mR^d \times \mR^d$ with the marginal measures given by $P_1$ and $P_2$---that is, $\cP(P_1, P_2)$ consists of all probability measures $P$ on $\mR^d \times \mR^d$ that satisfy 
\[  P(A \times \mR^d) = P_1(A), \;  P(\mR^d \times A) = P_2(A) \]
for every Borel set $A$ on $\mR^d$. 
$d_W$ defines a metric on the set of probability measures with finite second moments \cite{givens1984class, rachev1991probability}, and the infimum can be reached when $P_1$ and $P_2$ have finite second moments \cite{givens1984class}. The Wasserstein distance is equivalent to the definition \cite{rachev1991probability}
\begin{equation} \label{Wasserstein-2}
 d^2_W (P_1, P_2) = \inf_{P \in \cP(P_1, P_2)}  \mE_P\left\{ \| \bX - \bY \|^2; (\bX, \bY) \sim P \right \} , 
 \end{equation}
where $\mE_P$ denotes the expectation under probability measure $P$, and the infimum is taken over all random variables $\bX$ and $\bY$ such that 
$(\bX, \bY)$ has a joint distribution $P$, with the marginal distribution $P_1$ for $\bX$ and the marginal distribution $P_2$ for $\bY$. In other words,  the infimum is taken over all possible couplings of the random 
variables $\bX$ and $\bY$ with the marginal distributions $P_1$ and $P_2$, respectively. This equivalent definition (\ref{Wasserstein-2}) and 
the well-known Skorokhod representation theorem  \cite{billingsley2017probability} immediately indicate that convergence in the Wasserstein distance is equivalent to the usual 
weak convergence plus convergence of second moments. 
If the probability measures $P_1$ and $P_2$ are absolutely continuous with respect to the Lebesgue measure on $\mR^d$, with probability 
densities given by $\rho_1$ and $\rho_2$, respectively, we denote by $\cP(\rho_1, \rho_2)$ the set of all probability measures on $\mR^d \times \mR^d$ 
with the first and second marginal densities given by $\rho_1$ and $\rho_2$, respectively.  Correspondingly, we write $d_W (\rho_1, \rho_2)$ for the Wasserstein distance between $P_1$ and $P_2$. We define the Wasserstein space of distributions on $\mR^d$ to be the space of all distributions
 equipped with the Wasserstein distance. See also \cite{brenier1991polar, gangbo1995optimal, caffarelli2017allocation, chizat2018interpolating}.
 
Note that the Gibbs distribution $\pi$ satisfies 
the variational principle for $\mF_\Psi(\rho)$ in (\ref{energy-functional}), and $\mF_\Psi(\rho)$ is a functional over the Wasserstein space of distributions, which is equal to a Kullback-Leibler divergence up to a constant. Thus, 
the distributional evolution of the Langevin diffusion is a gradient flow of a Kullback-Leibler divergence over the Wasserstein space of probability distributions \cite{ambrosio2008gradient}.

\section{The JKO scheme}
\label{SEC-JKO} 
We fix some notations and conditions.  Given a positive number $x$, denote its integer part by $[x]$. 
Let $\mR_+ = [0, \infty)$ and $\mR^d$ be the $d$-dimensional Euclidean space. For $a, b \in \mR^d$, let 
$\langle a,b\rangle=a'b$ be the natural inner product of $a$ and $b$. 
Denote by $L^\infty$ and $L^1$ the classes of all functions which are bounded or their absolute integrals are bounded, respectively. 
$C^\infty(\mR^d)$ denotes the class of all functions on $\mR^d$ with continuous derivatives of all order, and 
$C^\infty_0(\mR^d)$ denotes the class of all functions in $C^\infty(\mR^d)$ that have bounded supports. 

Suppose that $\Psi(x)$ satisfies the assumption, %\[ \mbox{(A3): }
\[ \mbox{(A1): } \Psi \in C^\infty(\mR^d), \;\; \Psi(x) \geq 0, \;\; |\nabla \Psi(x) | \leq c [ \Psi(x) + 1] \mbox{ for all } x \in \mR^d, \]
where $c$ is a constant. %and $C^\infty(\mR^d)$ denotes the class of all functions on $\mR^d$ with continuous derivatives of all order. 
Denote by $\Xi$ the set of all probability density functions on $\mR^d$ with finite second moments---namely, 
\[ \Xi = \left\{ p: \mR^d \rightarrow  \mR_+ \mbox{ is measurable } \left|  \int_{\mR^d} p(x) dx = 1, \cM(\rho) %= \int_{\mR^d} \| x \|^2 p(x) dx 
   < \infty  \right\} \right.  , \]
where $\cM(\rho)$ denotes the second moment of $p$, 
\begin{equation} \label{M-functional} 
  \cM(\rho) = \int_{\mR^d} \| x \|^2 p(x) dx . 
\end{equation} 

\subsection{The plain scheme} 
\label{SEC-JKO-plain}
The original JKO scheme is referred to as the plain JKO scheme in this paper. 
Given an initial probability distribution $\rho^{(0)}=\rho^0$ on $\mR^d$ and a time step $\delta$, consider the following iterative discrete algorithm for 
computing $\rho^{(k)}$ from $\rho^{(k-1)}$ \cite{%jordan1995statistical,
jordan1998variational}, 
\begin{equation} \label{JKO-1}
 \rho^{(k)} = \arg \min_{\rho \in \Xi} \left\{ \frac{1}{2} \left[ d_W \left(\rho, \rho^{(k-1)}  \right)\right]^2 + \delta \mF_\Psi(\rho) \right\}  , 
\end{equation}
where $\mF_\Psi(\cdot)$ is the energy functional defined in (\ref{energy-functional}) and $d_W (\rho_1, \rho_2)$ is the Wasserstein distance between $\rho_1$ and $\rho_2$ given by (\ref{Wasserstein-1}). For the origin of the JKO scheme, see \cite{jordan1995statistical, jordan1997ideal, otto1998dynamics}.

Given the probability density function sequence $\rho^{(k)}$, define the interpolation $\rho_\delta: \mR_+ \times\mR^d \rightarrow \mR_+$ as follows. For any $t \in \mR_+$, %there exists a unique $k$ such that $t \in ( (k-1) \delta, k\delta]$, i.e. 
let $k=\lceil t/\delta\rceil$ (the integer part of $t/\delta$), and define $\rho_\delta(t,x) = \rho^{(k)}(x)$---that is, for each $t\geq 0$, we define the probability density function  
\begin{equation}\label{JKO-2} 
 \rho_\delta(t,x) = \rho^{(\lceil t/\delta\rceil)}(x) . 
\end{equation} 
The plain JKO scheme refers to the described iterative algorithm and interpolation to obtain $\rho_\delta(t,x)$. 

It is shown in \cite{jordan1998variational} that as $\delta \rightarrow 0$, for all $t \in (0, \infty)$, $\rho_\delta(t,\cdot)$ weakly converges to $\rho(t,\cdot)$ in $L^1(\mR^d)$ and strongly converges to $\rho$ in $L^1( (0, T)\times \mR^d)$ for all $T < \infty$. To be specific, 
%It is shown that as $\delta \rightarrow 0$, for all $t \in (0, \infty)$, $\rho_\delta(t,\cdot)$ weakly converges to $\rho(t,\cdot)$ in $L^1(\mR^d)$---that is, 
for all $t \in (0, \infty)$, we have as $\delta \rightarrow 0$, 
\[      \int_{\mR^d} \rho_\delta(t,x) h(x)dx \longrightarrow  \int_{\mR^d} \rho(t,x) h(x)dx, \]
for any bounded continuous function $h(\cdot)$ on $\mR^d$, and % furthermore, 
%$\rho_\delta$ strongly converges to $\rho$ in $L^1( (0, T)\times \mR^d)$ for all $T < \infty$---namely,
\[   \int_0^T dt \int_{\mR^d} | \rho_\delta(t,x) - \rho(t,x) | dx \rightarrow 0 \mbox{ as } \delta \rightarrow 0 . \]    
There is also an exact upper bound of the difference. For any $\xi\in \cC^\infty_0(\mR_+\times \mR^d)$, we have
\[ \left|  \int_{\mR_+ \times \mR^d} [ \rho_\delta (t,x) - \rho(t,x)] \xi(t, x) dt dx \right|  = O(\delta ) ,  \]
where $\rho(t,x)$ is the solution to the Fokker-Planck equation (\ref{Fokker-Planck}) with initial condition 
\[   \rho(t,\cdot) \rightarrow \rho^{(0)}(\cdot) \mbox{ strongly in } L^1(\mR^d) \mbox{ for } t \downarrow 0  \Longleftrightarrow 
    \int_{\mR^d} | \rho(t,x) - \rho^{(0)}(x) | dx  \rightarrow 0 \mbox{ as } t \downarrow 0 , \]
and
     $\cM(\rho(t,\cdot)), \cE_\Psi(\rho(t,\cdot)) \in L^\infty((0, T))$ for all $T <\infty$, %---that is, 
% \[  \sup_{t \in (0, T)} \cM(\rho(t,\cdot)) = \sup_{t \in (0, T)}  \int_{\mR^d} \| x \|^2 \rho(t,x) dx  < \infty, \]
% \[   \sup_{t \in (0, T)}\cE_\Psi(\rho(t,\cdot)) = \sup_{t \in (0, T)} \int_{\mR^d} \Psi(x)  \rho(t,x) dx  <\infty . \]
where $\cE_\Psi(\cdot)$ and $\cM(\cdot)$ are defined in (\ref{E-S-functional}) and (\ref{M-functional}), respectively. 
%\[  \cM(\rho(t,\cdot)) =  \int_{\mR^d} \| x \|^2 \rho(t,x) dx, \;\;  \cE_\Psi(\rho(t,\cdot)) = \int_{\mR^d} \Psi(x)  \rho(t,x) dx . \]
% In fact, the following result has been established in \cite{jordan1998variational}: %We will omit the proof here.
%\begin{prop} \label{prop-0}

%where $C^\infty_0(\mR^d)$ denotes the class of all functions on $\mR^d$ with bounded supports and  continuous derivatives of all order. 

%and furthermore, $\rho_\delta$ strongly converges to $\rho$ in $L^1( (0, T)\times \mR^d)$ for all $T < \infty$---namely,
%\[   \int_0^T dt \int_{\mR^d} | \rho_\delta(t,x) - \rho(t,x) | dx \rightarrow 0 \mbox{ as } \delta \rightarrow 0 . \]    
%\end{prop}

\subsection{The statistical JKO scheme} %with parameter estimation}

The plain JKO scheme assumes a known stochastic model---namely, the function $\Psi(x)$ is known so that we have complete 
knowledge about the Langevin equation. In applications such as machine learning, $\Psi$ may be unknown, and we need to estimate it from data. 
The JKO scheme with parameter estimation is referred to as the statistical JKO scheme. 
This section investigates  the JKO scheme for an unknown $\Psi$ that is estimated based on discrete observations from the Langevin equation. 
We consider both online and offline estimation scenarios. 

\subsubsection{The statistical JKO scheme in the offline case} 
\label{SEC-JKO-offline}
In the offline case, we have $n$ fixed discrete observations from the Langevin equation, and $\Psi$ is estimated once by using all the $n$ observations. 
Denote by $\hat{\Psi}_n$ an offline estimator of $\Psi$. We construct the estimator $\hat{\Psi}_n$ as follows. Suppose that $\Psi(x)$ is known up to an unknown parameter, namely, $\Psi$ can be parametrized as $\Psi_\theta(x)$, where we know the function form of $\Psi_\theta(x)$ but do not know the parameter $\theta$, and the unknown parameter $\theta$ is assumed to be in a parameter space $\Theta$. %\subset\mR^d$. 
We estimate $\Psi(x)$ by plugging an estimator of $\theta$ into $\Psi_\theta(x)$. Specifically, assume that we have $n$ discrete observations from 
the Langevin diffusion, and denote by $X(t_i)$, $i=1, \cdots, n$, the $n$ observations at discrete time points $t_i = i \eta$, where $\eta$ is a fixed constant. We define estimator $\hat{\theta}_n$ to be a solution to the following estimating equation,
\begin{equation}\label{eq-Est1}
\sum_{i=1}^n\nabla_x\Psi_{\theta}(X(t_i))=0 .
\end{equation}
And let $\hat{\Psi}_n(x)=\Psi_{\hat{\theta}_n}(x)$. %We will study theoretic properties of the estimator $\hat{\theta}_n$  in Section \ref{SEC-Asymptotic-offline}.

As $\rho(t,x)$ and $\rho_\delta (t,x)$ with $\Psi$ are described in sections \ref{Sec-review} and \ref{SEC-JKO-plain},  respectively, we denote by 
 $\tilde{\rho}^n(t,x)$ and $\hat{\rho}_\delta^n(t,x)$ their counterparts with $\Psi$ replaced by its offline estimator $\hat{\Psi}_n$, respectively. 
 Specifically, we define $\hat{\rho}_\delta^n(t,x)$ by (\ref{JKO-1}) and  (\ref{JKO-2}) with $\Psi$ replaced by $\hat{\Psi}_n$ as follows, 
\begin{equation}\label{JKO-off1}
\hat{\rho}_{n}^{(k)}={\rm arg}\min_{\rho\in \Xi} \left\{\frac{1}{2}\left[d_W\left(\rho,\hat{\rho}_{n}^{(k-1)}\right)\right]^2+\delta \mF_{\hat{\Psi}_n}(\rho)\right\} , 
\; \; \; \hat{\rho}_{n}^{(0)}=\rho^0 ,
\end{equation}
and
\begin{equation} \label{JKO-off2}
\hat{\rho}_{\delta}^n(t,x)=\hat{\rho}_{n}^{(\lceil t/\delta\rceil)}(x)  .
\end{equation} 
We denote by $\tilde{\rho}^n(t,x)$ the solution to the following Fokker-Planck equation obtained from (\ref{Fokker-Planck}) with $\Psi$ replaced by $\hat{\Psi}_n$,
\begin{equation} \label{FP-off1}
  \fracpartial{\tilde{\rho}^n(t,x)}{t} = \Div [  \tilde{\rho}^n(t,x)  \nabla \hat{\Psi}_n(x) ] + \beta^{-1} \Delta \tilde{\rho}^n(t,x), \;\; \tilde{\rho}^n(0,x) = \rho^0(x) . 
\end{equation} 

In the rest of the paper, we always use the notation $\hat{\rho}(t,x)$ to express the discrete process defined by the JKO scheme, and we use $\tilde{\rho}(t,x)$ to express the solution to the Fokker-Planck equation where $\Psi$ is estimated from data.

\subsubsection{The statistical JKO scheme in the online case} 
\label{SEC-JKO-online}
Applications such as reinforcement learning need to consider the online estimation case that the estimation of $\Psi$ is periodically updated 
with new data available from observing the Langevin diffusion %sequentially. 
during the system evolution. %operation of the learning system. 
We study two online frameworks. In one framework, assume that we have batches of independent observations, where each batch is of size $m$, and different batches are independent. In step $k$, we have the $k$-th batch observations $X^{(k)}=\{X^{(k)}(\eta),\dots,X^{(k)}(m\eta)\}$. Using all observations before step $k$, that is, $\left\{X^{(1)}, X^{(2)},\dots,X^{(k)}\right\}$, we can obtain the estimator $\hat{\theta}^k$ by solving the estimating equation,
\begin{equation}\label{eq-Est2}
\sum_{j=1}^k\sum_{i=1}^m\nabla_x\Psi_{\theta}(X^{(j)}(t_i))=0 .
\end{equation}

Then we estimate $\Psi$ by $\Psi_{\hat{\theta}^k}$ in the $k$-th step, and define
\begin{equation}\label{JKO-on3}
\hat{\rho}_{m,1}^{(k)}={\rm arg}\min_{\rho\in \Xi} \left\{\frac{1}{2}\left[d_W\left(\rho,\hat{\rho}_{m,1}^{(k-1)}\right)\right]^2+\delta \mF_{\Psi_{\hat{\theta}^k}}(\rho)\right\} , \;\;\; \hat{\rho}_{m,1}^{(0)}=\rho^0 , 
\end{equation}
and
\begin{equation}\label{JKO-on4}
\hat{\rho}_{\delta,1}^m(t,x)=\hat{\rho}_{m,1}^{(\lceil t/\delta\, \rceil)}(x) . 
\end{equation}

%$\textbf{$\gamma =1/2, 1, 2$: $\gamma=1$ doesn't work; Try $\gamma =1/2$  or $2$.}

Section \ref{SEC-Asymptotic-online}  will discuss other variants and extensions in this framework. %two cases in this framework using cumulative samples.

In another framework, assume that we sequentially observe the Langevin diffusion and construct sequential estimators of $\theta$. Denote by $X(i\eta)$, $i \geq 1$, the sequential observations  from the Langevin equation (\ref{Langevin}). In the $k$-th step, we use the cumulative observations, 
$X(1\eta), \cdots, X(k \eta)$, up to the time $k\eta$ to estimate $\theta$ by solving the estimating equation (\ref{eq-Est1}) with $X(1 \eta), \cdots, X(n \eta)$ replaced by $X(1 \eta),  \cdots, X(k \eta)$, and the estimator is exactly $\hat{\theta}_k$. Then we define
\begin{equation}\label{JKO-on1}
\hat{\rho}^{(k)}_2={\rm arg}\min_{\rho\in \Xi} \left\{\frac{1}{2}\left[d_W\left(\rho,\hat{\rho}^{(k-1)}\right)\right]^2+\delta \mF_{\Psi_{\hat{\theta}_k}}(\rho)\right\} , \;\;\;
\hat{\rho}^{(0)}=\rho^0 ,
\end{equation}
and let
\begin{equation}\label{JKO-on2}
\hat{\rho}_{\delta,2}(t,x)=\hat{\rho}_2^{(\lceil t/ \delta\,\rceil)}(x) . 
\end{equation}

\section{Asymptotic theory of the statistical JKO scheme in the offline case} 
\label{SEC-Asymptotic-offline} 

For the original JKO scheme, $\rho(t,x)$ and $\rho_\delta(t,x)$ are described by the Fokker-Planck equation (\ref{Fokker-Planck}) and the iterative algorithm (\ref{JKO-1})-(\ref{JKO-2}), respectively. It has been shown that as the time step $\delta \rightarrow 0$, for all $t \in (0, \infty)$, $\rho_\delta(t,\cdot)$ weakly converges to $\rho(t,\cdot)$ in $L^1(\mR^d)$,  and $\rho_\delta(\cdot,\cdot)$ strongly converges to $\rho(\cdot, \cdot)$ in $L^1( (0, T)\times \mR^d)$ for all $T < \infty$. The asymptotic results for the original JKO scheme are purely computational, without any statistical consideration. We will establish asymptotic theory for the joint computational and statistical analysis of the statistical JKO scheme---namely, the JKO scheme with statistical estimation of the model parameter. 

To ensure the existence of solutions to equations, optimizations, and proper definitions of estimators and study their asymptotics, we need to impose the following assumptions on %$f_\theta(x) \equiv 
$\nabla_x\Psi_\theta(x)$ \cite{jones2004markov, locherbach2013ergodicity}, where $\nabla_x$ denotes the gradient operator with respect to $x$. 

\vspace{0.1in}

(A2): %$f_\theta(x)$
$\nabla_x\Psi_\theta(x)$ is of linear growth in $x$, $\nabla^\prime_\theta \nabla_x\Psi_\theta(x)$ is continuous in $\theta$, where $\nabla^\prime_\theta$ denotes the transpose of the gradient operator with respect to $\theta$. 

(A3): There exist constants $M_0>0$ and $r>2$ such that % (here $\langle a,b\rangle=a'b=a\cdot b$)
$
\langle \nabla_x\Psi_\theta(x),x\rangle\geq \frac{2r+d}{\beta}\ {\rm for}\ |x|\geq M_0.
$

(A4): $E_\pi[\nabla^\prime_\theta \nabla_x\Psi_\theta(X)]$ is invertible, and for $r$ given in (A42), there exists $\kappa>\frac{2}{r-2}$ such that $E_\pi | \nabla_x\Psi_\theta(X(t))  |^{2+\kappa}<\iy$, where $\pi$ is the invariant distribution of $X(t)$. 

\vspace{0.1in}

Recall that $\hat{\rho}_\delta^n(t,x)$ and $\tilde{\rho}^n(t,x)$ defined in Section \ref{SEC-JKO-offline} are 
the counterparts of $\rho_\delta (t,x)$ and $\rho(t,x)$ 
with $\Psi$ replaced by its offline estimator $\hat{\Psi}_n=\Psi_{\hat{\theta}_n}$, respectively, 
where the estimator  $\hat{\theta}_n$ is defined by the estimating equation (\ref{eq-Est1}). 
%\subsection{Asymptotic results in the offline case}
%To find the asymptotic distribution of  $\hat{\rho}_\delta^n(t,x) - \rho(x,t)$, we derive the limit for the normalized process difference 
%$\hat{\rho}_\delta^n(t,x) - \hat{\rho}^n(t,x)$ and show the process difference $\hat{\rho}^n(t,x) - \rho(t,x)$ makes a negligible contribution. 
Let $\hat{V}_\delta^n (t,x) = \sqrt{n} [ \hat{\rho}_\delta^n(t,x) - \rho(t,x)]$, $\tilde{V}^n (t,x) = \sqrt{n} [ \tilde{\rho}^n(t,x) - \rho(t,x)]$. We have the following theorem to establish the asymptotic distribution of 
$\hat{V}_\delta^n (t,x)$ and $\tilde{V}^n (t,x)$.

%$\textbf{Start: Move to Apendix} ??

%\textbf{End of Move} ??

\begin{thm} \label{thm-1}  %\label{thm-320}
Assume the conditions (A1),(A2),(A3),(A4). %Let $\hat{V}_\delta^n (t,x) = \sqrt{n} [ \hat{\rho}_\delta^n(t,x) - \rho(t,x)]$, 
Then $\hat{V}_\delta^n (t,x)$ converges to $V(t,x)$ in the sense that for any $\xi\in\cC_0^\iy(\mR_+\times\mR^d)$, as $n\ra\iy$, $\delta\ra 0$, $\delta\sqrt{n}\ra 0$,
\begin{equation}\label{eq-dd1}
\int_{\mR_+\times\mR^d}[\hat{V}_\delta^n(t,x)-V(t,x)]\xi dtdx\stackrel{P}{\ra} 0, 
\end{equation}
where $V(t, x)$ satisfies the following PDE, 
\begin{equation}\label{eq-dd2}
\pt_t V = \Div \left( V \nabla \Psi \right) +  \Div\left( \rho \nabla \tau(x)\bZ\right) + \beta^{-1} \Delta V , V(0,x)=0, 
\end{equation}
$\tau(x)=\nabla_\theta'\Psi_\theta(x)\gamma_\theta$, $\gamma_\theta^2$ is the asymptotic variance of $\hat{\theta}_n$ \cite{jones2004markov}, and 
$\bZ$ is a random vector with zero mean and identity covariance matrix. 
Furthermore, the solution $V(t,x)$ linearly depends on $\bZ$. 
Let $v(t,x)$ obey the following PDE, 
\[ \partial_t v = \Div \left( v \nabla \Psi \right) +   \Div \left( \rho  \nabla \tau \right)  + \beta^{-1} \Delta v, v(0,x)=0 . \]
%Let $v_j(t,x)$ obeys the following PDE, 
%\[ \partial_t v_j = \Div \left( v_j \nabla \Psi \right) +   \Div \left( \rho  \nabla \tau_j \right)  + \beta^{-1} \Delta v_j , v_j(0,x)=0, \]
%denote $v(t,x)=(v_1,v_2,\dots,v_d)$,
Then $V(t,x) = v(t,x) \bZ$. As $n\ra\iy$, $\tilde{V}^n (t,x)$ also converges to $V(t,x)$ in the same manner as (\ref{eq-dd1}).

\end{thm}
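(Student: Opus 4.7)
The plan is to decompose the total error into a purely computational piece and a purely statistical piece,
\[
\hat V_\delta^n(t,x) - V(t,x) = \sqrt n\bigl[\hat\rho_\delta^n(t,x) - \tilde\rho^n(t,x)\bigr] + \bigl[\tilde V^n(t,x) - V(t,x)\bigr],
\]
handle the first term using the plain JKO convergence rate applied with $\Psi$ replaced by $\hat\Psi_n$, and handle the second term by subtracting Fokker--Planck equations and linearizing in $\theta$. The supplementary claim for $\tilde V^n$ then falls out automatically, since that argument isolates precisely the statistical piece.

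For the computational term, the $O(\delta)$ upper bound recalled in Section~\ref{SEC-JKO-plain} applies with $\Psi$ replaced by $\hat\Psi_n$, yielding
\[
\left|\int_{\mR_+\times\mR^d}[\hat\rho_\delta^n - \tilde\rho^n]\,\xi\,dt\,dx\right| \le C(\hat\Psi_n,\xi,\rho^0)\,\delta,
\]
where the constant depends only on functional norms of $\hat\Psi_n$ that enter the plain JKO bound, together with $\xi$ and the initial density. Under (A1)--(A4) the estimating equation (\ref{eq-Est1}) produces a consistent $\hat\theta_n$, so $\hat\Psi_n$ converges in probability to $\Psi$ in the relevant norms and $C(\hat\Psi_n,\xi,\rho^0)=O_P(1)$. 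Multiplying by $\sqrt n$ and invoking $\delta\sqrt n\to 0$ kills this piece.

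For the statistical term, I would subtract (\ref{FP-off1}) from (\ref{Fokker-Planck}) and use the identity
\[
\tilde\rho^n\nabla\hat\Psi_n - \rho\nabla\Psi = (\tilde\rho^n-\rho)\,\nabla\Psi + \tilde\rho^n\,\nabla(\hat\Psi_n-\Psi),
\]
so that, after multiplying by $\sqrt n$,
\[
\pt_t \tilde V^n = \Div(\tilde V^n\nabla\Psi) + \Div\bigl[\tilde\rho^n\,\sqrt n\,\nabla(\hat\Psi_n-\Psi)\bigr] + \beta^{-1}\Delta \tilde V^n.
\]
Next I would Taylor-expand
\[
\sqrt n\,\nabla(\hat\Psi_n-\Psi)(x) = \nabla_x\nabla_\theta'\Psi_\theta(x)\cdot\sqrt n(\hat\theta_n-\theta) + R_n(x),
\]
with $R_n\to 0$ uniformly on compacts by the smoothness in (A2), and invoke the asymptotic normality $\sqrt n(\hat\theta_n-\theta)\Rightarrow \gamma_\theta \bZ$, which is the standard $M$-estimator CLT for the estimating equation (\ref{eq-Est1}) under the ergodicity provided by (A3) and the moment and nondegeneracy conditions in (A4). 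The consistency $\tilde\rho^n\to\rho$ then lets me replace $\tilde\rho^n$ by $\rho$ in the source term up to $o_P(1)$ once paired with a test function, and the limiting inhomogeneous linear parabolic equation is exactly (\ref{eq-dd2}). Linearity of this PDE in the random source gives $V(t,x)=v(t,x)\bZ$ by superposition.

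The main obstacle is the rigorous justification of the weak limit in the statistical piece: one needs well-posedness and continuous dependence for the linearized Fokker--Planck operator $\pt_t - \Div(\cdot\,\nabla\Psi) - \beta^{-1}\Delta$ in the distributional pairing against $\cC_0^\infty$ test functions, so that weak convergence of the source is transferred to weak convergence of the solution. This requires enough a priori regularity and integrability of $\rho$, $\tilde\rho^n$, $\nabla\Psi$, and $\nabla\tau$ on the (compact) support of $\xi$ to legitimize integration by parts, and a uniform control of the Taylor remainder $R_n$ and of the replacement of $\tilde\rho^n$ by $\rho$ in the source. Once these stability estimates are in place, the combination of the two pieces with the scaling $\delta\sqrt n\to 0$ yields (\ref{eq-dd1}), and the statement for $\tilde V^n$ is recovered by retaining only the statistical piece.
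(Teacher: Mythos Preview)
Your decomposition via $\tilde\rho^n$ is correct and would work, but the paper takes a more direct route that sidesteps precisely the obstacle you flag at the end. Rather than splitting $\hat V_\delta^n - V$ into a computational piece $\sqrt n(\hat\rho_\delta^n-\tilde\rho^n)$ and a statistical piece $\tilde V^n-V$, the paper compares $\hat\rho_\delta^n$ directly to $\rho$ in weak form. Concretely, from the JKO variational inequality (Lemma~\ref{prop-ip}) one gets
\[
\int_{\mR_+\times\mR^d}\hat\rho_\delta^n\bigl(\pt_t\zeta-\nabla\hat\Psi_n\cdot\nabla\zeta+\Delta\zeta\bigr)\,dt\,dx = -\int_{\mR^d}\zeta(0,x)\rho^0\,dx + O_P(\delta),
\]
while the exact Fokker--Planck equation for $\rho$ gives the same identity with $\Psi$ in place of $\hat\Psi_n$ and no error term. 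Subtracting and rearranging yields
\[
\int_{\mR_+\times\mR^d}\hat V_\delta^n\bigl(\pt_t\zeta-\nabla\Psi\cdot\nabla\zeta+\Delta\zeta\bigr)\,dt\,dx = \int_{\mR_+\times\mR^d}\hat\rho_\delta^n\,\sqrt n\,(\nabla\hat\Psi_n-\nabla\Psi)\cdot\nabla\zeta\,dt\,dx + O_P(\delta\sqrt n),
\]
so the adjoint operator on the left involves only the \emph{fixed} potential $\Psi$, and all randomness sits in the source on the right. After replacing $\sqrt n(\nabla\hat\Psi_n-\nabla\Psi)$ by $\nabla\tau\,\bZ$ and $\hat\rho_\delta^n$ by $\rho$ (both $o_P(1)$ on the compact support of $\zeta$), the final step is: given $\xi\in\cC_0^\infty$, solve the backward equation $\xi=\pt_t\zeta-\nabla\Psi\cdot\nabla\zeta+\Delta\zeta$ for $\zeta\in\cC_0^\infty$.

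The practical difference is this. Your route requires either (i) the plain JKO $O(\delta)$ bound with the \emph{random} potential $\hat\Psi_n$ together with uniform-in-$n$ control of the implicit constant, which in turn hides a dual equation with random coefficients, or (ii) a forward stability estimate for the linearized Fokker--Planck operator to transfer source convergence to solution convergence. The paper's route needs only one deterministic backward solve with the fixed $\Psi$, and the $O_P(\delta)$ discretization error is obtained by redoing the energy estimate (\ref{eq-35})--(\ref{eq-37}) with $\hat\Psi_n$, where only $|\mF_{\hat\Psi_n}(\rho^0)|=O_P(1)$ and $\max_{x\in D}|\nabla\hat\Psi_n(x)|=O_P(1)$ are needed. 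Your approach buys modularity (clean separation of computational and statistical errors) at the cost of that extra stability input; the paper's approach is less modular but avoids it entirely.
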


\begin{remark}
In this theorem, there are two types of asymptotic analysis. One type is computational asymptotics, where we employ continuous partial differential equations to model discrete iterate sequences generated from the JKO scheme, which is associated with $\delta$ treated as the time step. For each $n$, the Fokker-Planck equation (\ref{FP-off1}) provides continuous solutions as the limit of discrete iterate sequences generated from algorithm (\ref{JKO-off1}), that is,  as $\delta\ra 0$,  the process $\hat{\rho}_\delta^n(t,x)$ will converge to $\tilde{\rho}^n(t,x)$, the solution to the Fokker-Planck equation (\ref{FP-off1}) with $\Psi$ replaced by its estimator $\hat{\Psi}_n$. And the discretization error, that is, the difference between $\hat{\rho}_\delta^n(t,x)$ and $\tilde{\rho}^n(t,x)$, is bounded by $O_P(\delta)$. Another type of asymptotic analysis is statistical asymptotics, where we use random samples generated from the Langevin diffusions to estimate the function $\Psi$, with sample size $n$. As $n\ra \iy$, we can show that the sample Wasserstein gradient flow $\tilde{\rho}^n(t,x)$ converges to the population gradient flow $\rho$ with order $n^{-1/2}$, that is, $\tilde{V}^n (t,x)$ converges to its limit process $V(t,x)$. We can also establish both asymptotics when $\delta\ra 0$ and $n\ra\iy$ simultaneously. Since $\hat{\rho}_\delta^n(t,x)-\tilde{\rho}^n(t,x)$ is bounded by $O_P(\delta)$, we can select $\delta$ and $n$ properly so that the difference is of order smaller than $n^{-1/2}$, for example, $\delta\sqrt{n}\ra 0$, then $\hat{\rho}_\delta^n(t,x)$ has the same asymptotic distribution $V(t,x)$ as $\tilde{\rho}^n(t,x)$. If we are able to study the property of the distribution of the limiting process $V(t,x)$, then we can perform statitical inference, for example, conduct confidence intervals or hypothesis testing for both the discrete iterate sequences generated from the JKO algorithm and the sample Wasserstein gradient flow of the Langevin diffusion.
\end{remark}

\section{Asymptotic theory of the stochastic JKO scheme in the online case} 
\label{SEC-Asymptotic-online}

In the online case, we encounter the difficulty that rises from using different estimators of $\Psi$ in different steps of the iterative discrete scheme. We will develop new techniques %in additional to the proof methods  for Theorem \ref{thm-31} 
to solve this problem. Let $\hat{\Psi}_k(x)$ be the estimator of $\Psi(x)$ in the k-th step of the online JKO scheme. %In order to study the asymptotics of the process, we may need to impose the following assumptions on $\hat{\Psi}_k(x)$. 

%\subsection{The first online framework}

In the first online framework, assume we have batches of independent observations $\left\{X^{(1)}, X^{(2)},\dots,X^{(k)}\right\}$.
Recall that $\hat{\rho}^m_{\delta,1}(t,x)$ are defined by (\ref{JKO-on4}) in Section \ref{SEC-JKO-online}. We denote by $\tilde{\rho}_1^m(t,x)$ the solution to the following Fokker-Planck equation.
\begin{equation}\label{eq-ii}
\pt_t\tilde{\rho}_1^m(t,x)=\Div(\tilde{\rho}_{1}^{m}\nabla_x\Psi_{\hat{\theta}^{\lceil t/\delta\rceil}}(x))+\beta^{-1}\Delta\tilde{\rho}_{1}^{m},\  \tilde{\rho}_{1}^{m}(0,x)=\rho^0(x).
\end{equation}
Let $\hat{V}_{\delta,1}^m (t,x) = \sqrt{m/\delta} [ \hat{\rho}_{\delta,1}^m(t,x) - \rho(t,x)]$, $\tilde{V}_{1}^m (t,x) = \sqrt{m/\delta} [ \tilde{\rho}_{1}^m(t,x) - \rho(t,x)]$. We have the following theorem to establish the asymptotic distribution of 
$\hat{V}_{\delta,1}^m (t,x)$ and $\tilde{V}_{1}^m (t,x)$.

\begin{thm}\label{thm-2}
Assume $\hat{\Psi}_k(x)=\Psi_{\hat{\theta}^k}(x)$ satisfies the conditions (A5),(A6):

\vspace{0.1cm}

(A5): For any $T>0$ and compact set $D\subset\mR^d$, 
$\left\{\delta\sum_{k=1}^{\lceil T/\delta\rceil}\max_{x\in D}|\nabla\hat{\Psi}_k(x)|\right\}=O_P(1)$. 

(A6): For any $T>0$,  there exists $\gamma\in (1/2,1)$,
$\left\{\max_{x\in \mR^d, 1\leq k< \lceil T/\delta\rceil}\frac{k^{\gamma}m^{1/2}}{(1+|x|^2)}|\hat{\Psi}_{k+1}(x)-\hat{\Psi}_k(x)|\right\}=O_P(1)$. 

\vspace{0.1cm}

And the function $\Psi$ and $\rho$ satisfies:

\vspace{0.1cm}

(A7): There exists $\alpha>0$, $\left|E(\sum_{i=1}^m\nabla_x\Psi_{\theta}(X(i\eta)))\right|=O(m^{-\alpha})$. 

\vspace{0.1cm}

(A8): For fixed $T>0$, we can find $a<1/2$, for any $t\in(0,T)$, 
%\begin{equation}%\label{eqa-451}
$\int_{\mR^d}  \left|\Div[\rho(t,x)\nabla \tau(x)]\right|dx=O(t^{-a})$. 
%\end{equation}

\vspace{0.1cm}

Then $\hat{V}_{\delta,1}^m (t,x)$ converges to $V_1(t,x)$ in the sense that for any $\xi\in\cC_0^\iy(\mR_+\times\mR^d)$, 
as $m \rightarrow \infty$, $\delta \rightarrow 0$, $\delta m\ra 0$, $\delta m^{1+2\alpha}\ra\iy$, $\delta m^{1/(2-2\gamma)}>C>0$, 
\begin{equation}\label{eq-cv}
 \int_{\mR_+\times\mR^d} [ \hat{V}_{\delta,1}^m(t,x) - V_1(t,x)] \xi dtdx  \stackrel{P}{\rightarrow} 0, 
 \end{equation}
%\[\mathop{\mbox{lim-p}}\limits_{n\ra\iy}\int_{\mR_+\times\mR^d}\hat{V}_{\delta,1}^n\xi dtdx\Longrightarrow_{\delta\ra 0}\int_{\mR_+\times\mR^d}V_1\xi dtdx\]
%\[\mathop{\mbox{lim-p}}\limits_{\delta\ra 0}\mathop{\mbox{lim-p}}\limits_{n\ra\iy}\int_{\mR_+\times\mR^d}|V_1-V|\xi dtdx=0\]
where %Let $\hat{V}_{\delta,1}^n (t,x) = (n/\delta)^{1/2} ( \hat{\rho}_{\delta,1}^{n*}(t,x) - \rho(t,x))$, 
$V_1(t, x)$ satisfies the following stochastic PDE, 
\begin{equation}\label{eqthm2}
  \partial_t V_1(t,x) = \Div \left( V_1 \nabla \Psi \right) +   \Div\left( \rho \nabla  \tau(x)  t^{-1}W(t)  \right) + \beta^{-1} \Delta V_1 , V_1(0,x)=0,
 \end{equation}
%and $\dot{W}(t)=\frac{dW(t)}{dt}$ is the white noise. 
%where $P(t,x)$ is the solution to the Fokker-Planck equation
%\[\pt_t P=\Div(P\nabla\Psi)+\Delta P, \ \ \ P(t,\cdot)=\delta_0\]
%And (\ref{eqa-451}) also holds if we replace $P$ with $\pt_t P$, $\nabla P$ or $\Delta P$. 
and there exists an a.s. finite unique solution to (\ref{eqthm2}).
If we only assume (A7),(A8), then as $m \rightarrow \infty$, $\delta \rightarrow 0$, $\delta m^{1+2\alpha}\ra\iy$, $\tilde{V}_{1}^m (t,x)$ also converges to $V_1(t,x)$ in the same manner as (\ref{eq-cv}). If we only assume (A8), this convergence also holds as $\delta m\ra\iy$.
\end{thm}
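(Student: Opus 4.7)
The plan is to mirror the two-level decomposition of Theorem~\ref{thm-1}:
\[
\hat V^m_{\delta,1}-V_1 \;=\; \sqrt{m/\delta}\,(\hat\rho^m_{\delta,1}-\tilde\rho^m_1) \;+\; (\tilde V^m_1 - V_1),
\]
treating the JKO discretization piece and the statistical estimation piece separately. The new feature over Theorem~\ref{thm-1} is that the potential $\Psi_{\hat\theta^k}$ changes at every step, so the convergence argument of \cite{jordan1998variational} has to be redone for a time-varying potential. First I would show that the discretization piece vanishes in the weak sense of \eqref{eq-cv}: assumption (A5) supplies the uniform $L^1$-bound on $\nabla\hat\Psi_k$ required for the compactness estimates of the plain scheme, while (A6) controls the step-to-step drift of $\hat\Psi_k$ so that an $O_P(\delta)$ discrete-to-continuous error bound is preserved. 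Multiplied by $\sqrt{m/\delta}$, this error is $O_P(\sqrt{m\delta})$, which vanishes under $\delta m\to 0$. The relaxed claims for $\tilde V^m_1$ (requiring only (A7)--(A8) or only (A8)) need not address this step, which is why (A5), (A6), and $\delta m\to 0$ drop out in those regimes.

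Next I would derive the weak evolution equation for $\tilde V^m_1$ by subtracting \eqref{Fokker-Planck} from \eqref{eq-ii} and scaling by $\sqrt{m/\delta}$:
\[
\partial_t \tilde V^m_1 \;=\; \Div(\tilde V^m_1\nabla\Psi) + \beta^{-1}\Delta\tilde V^m_1 + \sqrt{m/\delta}\,\Div\!\bigl(\tilde\rho^m_1\nabla[\Psi_{\hat\theta^{\lceil t/\delta\rceil}}-\Psi_\theta]\bigr).
\]
Taylor-expanding $\Psi_{\hat\theta^k}-\Psi_\theta = \nabla'_\theta\Psi_\theta(\hat\theta^k-\theta)+O(|\hat\theta^k-\theta|^2)$ via (A2) and replacing $\tilde\rho^m_1$ by $\rho$ up to lower-order terms, the forcing reduces to $\Div\!\bigl(\rho\,\nabla[\nabla'_\theta\Psi_\theta\cdot M^m_\delta(t)]\bigr)$, where
\[
M^m_\delta(t) \;:=\; \sqrt{m/\delta}\,(\hat\theta^{\lceil t/\delta\rceil}-\theta).
\]

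The key probabilistic step is the functional limit $M^m_\delta \Rightarrow \gamma_\theta\,t^{-1}W(t)$ on compact subsets of $(0,\infty)$, with $W$ a standard Brownian motion. Linearizing the estimating equation \eqref{eq-Est2} yields an influence-function representation $\hat\theta^k-\theta = (km)^{-1}\sum_{j=1}^k Y_j + r^m_k$, where the $Y_j$'s are i.i.d., mean zero, with per-block covariance $m\Sigma_0$ matching the Gibbs-stationary covariance of $\nabla_x\Psi_\theta$, and $\gamma_\theta = \Sigma_0^{1/2}$. A direct manipulation gives
\[
M^m_\delta(t) \;\approx\; t^{-1}\, B^m_\delta(t), \qquad B^m_\delta(t) \;:=\; \sqrt{\delta/m}\sum_{j=1}^{\lceil t/\delta\rceil}Y_j,
\]
and $\operatorname{Var}(B^m_\delta(t)) = t\Sigma_0$, so Donsker's invariance principle yields $B^m_\delta\Rightarrow\gamma_\theta W$ in the Skorokhod topology. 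Condition (A7) together with $\delta m^{1+2\alpha}\to\infty$ makes the bias contribution $\sqrt{m/\delta}\,r^m_{\lceil t/\delta\rceil}$ asymptotically negligible, and (A6) with $\delta m^{1/(2-2\gamma)}>C$ handles the extra fluctuation coming from the piecewise-constant-in-$k$ structure of $\hat\Psi_k$.

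To obtain \eqref{eqthm2}, I would test the PDE against $\xi\in\cC_0^\infty(\mR_+\times\mR^d)$, upgrade the weak convergence of $M^m_\delta$ to a.s.~convergence via Skorokhod representation, and pass to the limit by dominated convergence. Assumption (A8) is precisely what secures this passage near $t=0$: since $\mE|t^{-1}W(t)|\lesssim t^{-1/2}$ and $\int|\Div(\rho\nabla\tau)|\,dx = O(t^{-a})$ with $a<1/2$, the product is integrable on $(0,T)$. For the existence and uniqueness of $V_1$, the Duhamel formula
\[
V_1(t,\cdot) \;=\; \int_0^t e^{(t-s)\cL}\,\Div\!\bigl(\rho(s,\cdot)\nabla\tau\cdot s^{-1}W(s)\bigr)\,ds,
\]
with the Fokker--Planck generator $\cL u = \Div(u\nabla\Psi)+\beta^{-1}\Delta u$, produces an a.s.~finite unique solution, since $e^{t\cL}$ is a contraction on suitable weighted $L^1$ spaces and the integrability above transfers. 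The main obstacle is reconciling the $t^{-1/2}$-type singularity of $t^{-1}W(t)$ at $t=0$ with the passage to the limit in the non-autonomous PDE; this is exactly where (A8) and the delicate balancing conditions among $\delta$, $m$, $\alpha$, $\gamma$ earn their keep, and where the Skorokhod-representation route is essential.
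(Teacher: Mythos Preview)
Your proposal is essentially correct and follows the same overall strategy as the paper: an $O_P(\delta)$ weak-form discretization bound for the time-varying JKO scheme (using (A5), (A6), and $\delta m^{1/(2-2\gamma)}>C$), a functional CLT producing the $t^{-1}W(t)$ factor (using (A7) to kill bias under $\delta m^{1+2\alpha}\to\infty$), (A8) to integrate across the $t=0$ singularity, and a Duhamel representation for existence/uniqueness of $V_1$.

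Two organizational differences are worth flagging. First, the paper does not actually use your splitting $\hat V^m_{\delta,1}-V_1 = \sqrt{m/\delta}(\hat\rho^m_{\delta,1}-\tilde\rho^m_1)+(\tilde V^m_1-V_1)$; instead it tests $\hat\rho^m_{\delta,1}$ directly against $\partial_t\zeta-\nabla\hat\Psi_{\lceil t/\delta\rceil}\cdot\nabla\zeta+\beta^{-1}\Delta\zeta$, subtracts the weak form of the true Fokker--Planck equation for $\rho$, and ends up with $\hat V^m_{\delta,1}$ tested against the \emph{fixed} adjoint operator $\partial_t\zeta-\nabla\Psi\cdot\nabla\zeta+\beta^{-1}\Delta\zeta$ plus a cross term $\int(\hat\rho^m_{\delta,1}-\rho)\sqrt{m/\delta}(\nabla\hat\Psi_k-\nabla\Psi)\cdot\nabla\zeta$ that is dispatched as higher order. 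The final step inverts this fixed operator: given $\xi$, solve $\xi=\partial_t\zeta-\nabla\Psi\cdot\nabla\zeta+\beta^{-1}\Delta\zeta$. Your decomposition would instead require either inverting the random time-varying operator $\partial_t-\nabla\hat\Psi_{\lceil t/\delta\rceil}\cdot\nabla+\beta^{-1}\Delta$ or introducing the same cross term anyway, so the paper's route is marginally cleaner. Second, your attribution of (A6) to the functional-CLT step is off: in the paper (A6) and $\delta m^{1/(2-2\gamma)}>C$ enter only in the JKO energy estimate, specifically to bound the telescoping sum $\sum_k[\cE_{\hat\Psi_{k+1}}(\hat\rho^{(k)})-\cE_{\hat\Psi_k}(\hat\rho^{(k)})]$ so that $\sum_k d_W^2(\hat\rho^{(k-1)},\hat\rho^{(k)})=O_P(\delta)$ survives the time-varying potential. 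The CLT step uses only (A7) (via a Skorokhod coupling to i.i.d.\ Gaussians and an $L^2$ bound on the residual, rather than Donsker on the influence functions directly, but this is equivalent).
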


%The next proposition implies that, with mild conditions, we can show the existence and uniqueness of the solution to the equation (\ref{eqthm2}).

%\begin{prop}\label{prop-eu}

%\end{prop}

\begin{remark}
In the online setup, there are also two types of asymptotic analysis. One type is computational asymptotics, where we employ stochastic partial differential equations to model discrete iterate sequences generated from the online JKO scheme. Assumptions (A5) and (A6) are used to control the discretization error of the algorithm, that is, the difference between the solution $\tilde{\rho}_1^m$ to the `dynamic' Fokker-Planck equation (\ref{eq-ii}) and the iterate sequences $\hat{\rho}_{\delta,1}^m$ generated from algorithm (\ref{JKO-on3}). As a result, given (A5),(A6), as $\delta\ra 0, m\ra\iy, \delta m^{1/(2-2\gamma)}>C>0$, the discretization error is bounded by $O_P(\delta)$. Another type is statistical asymptotics, where we use dynamic online samples generated from several independent Langevin diffusions to estimate and update the function $\Psi$, with sample size $mk$ in the k-th step. Given (A8), we can guarantee the existence of the solution to the SPDE (\ref{eqthm2}). Then, as $\delta\ra 0, m\ra \iy, \delta m\ra\iy$, we can show that the dynamic Wasserstein gradient flow $\tilde{\rho}_1^m$ converges to the population gradient flow $\rho$ with order $(m/\delta)^{-1/2}$, that is, $\tilde{V}_1^m(t,x)$ converges to its limit process $V_1(t,x)$. If (A7) is further assumed, it will control the bias in the central limit theorem and guarantee the uniform convergence of $(mk)^{1/2}(\hat{\theta}^k-\theta)$ to the averaged Gaussion variables when $k=o(m^{1+2\alpha})$. As a result, the condition $\delta m\ra\iy$ can be weakened to $\delta m^{1+2\alpha}\ra\iy$. Combining both asymptotics, we can derive the higher-order convergence result for the iterate sequences generated from the online JKO scheme. Since the discretization error is bounded by $O_P(\delta)$, if we set $\delta m\ra 0$, then $O_P(\delta)$ is of the order smaller than $(m/\delta)^{-1/2}$, and $\hat{\rho}_{\delta,1}^m(t,x)$ has the same asymptotic distribution $V_1(t,x)$ as $\tilde{\rho}_1^m(t,x)$.
\end{remark}

\begin{remark}
(A5) is satisfied in weak conditions. For example, let $Y_k=\max_{x\in D}|\nabla\hat{\Psi}_k(x)|$. Using Markov inequality, we get
\[P\left(\delta\sum_{k=1}^{\lceil T/\delta\rceil}Y_k>M\right)\leq\frac{\delta\sum_{k=1}^{\lceil T/\delta\rceil}E(Y_k)}{M}.\]
If $E(Y_k)$ is uniformly bounded or $\frac{1}{m}\sum_{k=1}^mE(Y_k)$ is bounded, then (A5) is true. 

(A6) basically controls the difference between consecutive estimated functions $\hat{\Psi}_k$. Here we show a simple example in which (A6) is true.
Consider $\Psi_\theta(x)=\frac{1}{2}|x-\theta|^2$. In this case $\hat{\theta}^k=\frac{1}{mk}\sum_{j=1}^k\sum_{i=1}^m X^{(j)}(i\eta)$. Let $x_j=\frac{1}{m}\sum_{i=1}^m X^{(j)}(i\eta)-\theta=O_P(m^{-1/2})$, then $\hat{\theta}^k-\theta=\frac{1}{k}\sum_{j=1}^k x_j=\bar{x}_k$, and
\[\hat{\theta}^{k+1}-\hat{\theta}^k=\bar{x}_{k+1}-\bar{x}_k=\frac{x_{k+1}-\bar{x}_k}{k+1}.\]
Since both $x_{k+1}$ and $\bar{x}_k$ are $O_P(m^{-1/2})$, $\hat{\theta}^{k+1}-\hat{\theta}^k=O_P(k^{-1}m^{-1/2})$. We can take any $\gamma<1$ and (A6) is satisfied. More general cases can be found in the appendix.

(A7) is satisfied, for example, when $X_0\sim \pi(x)$. In this case $E(\nabla_x\Psi_{\theta}(X(i\eta)))=0$, since all $X(i\eta)\sim\pi$ and $E_\pi(\nabla_x\Psi_{\theta}(X))=0$. In this case $\alpha$ is arbitrary, and we do not need to assume $\delta m^{1+2\alpha}\ra\iy$. In another example, if we assume $\Psi_\theta(x)=\frac{1}{2}(x-\theta)'A(x-\theta)$ is a quadratic function, then (A7) is true (the expectation is zero) when $E(X_0)=\theta$.

(A8) mainly controls the irregularity of $\rho(t,x)$ around $t=0$ and guarantees the existence of the solution to (\ref{eqthm2}).
\end{remark}

Next we propose a variant of the above scheme and show that the resulting processes have the same asymptotic distribution. We will replace $X(i \eta)$, $i=1, \cdots, n$ in (\ref{eq-Est1}) with $X^{(k)}(i \eta)$, $i=1, \cdots, m$ to obtain the estimator $\hat{\theta}^{(k)}_m$, that is, $\hat{\theta}^{(k)}_m$ solves
$\sum_{i=1}^m\nabla_x\Psi_{\theta}(X^{(k)}(i\eta))=0$. And denote
\[\hat{\Psi}_m^{(k)}(x)=\frac{1}{k}\sum_{j=1}^k \Psi_{\hat{\theta}^{(j)}_m}(x).\]
Then we define $\hat{\rho}_{m,2}^{(k)}$ as follows:
\[
\hat{\rho}_{m,2}^{(k)}={\rm arg}\min_{\rho\in \Xi} \left\{\frac{1}{2}\left[d_W\left(\rho,\hat{\rho}_{m,2}^{(k-1)}\right)\right]^2+\delta \mF_{\hat{\Psi}_m^{(k)}}(\rho)\right\}
\ ,\hat{\rho}_{m,2}^{(0)}=\rho^0
\]
\[\hat{\rho}_{\delta,2}^m(t,x)=\hat{\rho}_{m,2}^{(\lceil t/\delta\rceil)}(x),\]
and denote by $\tilde{\rho}_2^m(t,x)$ the solution to the following Fokker-Planck equation:
\begin{equation}\label{eq-i2}
\pt_t\tilde{\rho}_{2}^{m}(t,x)=\Div(\tilde{\rho}_{2}^{m}\nabla\hat{\Psi}_m^{(\lceil t/\delta\rceil)}(x))+\beta^{-1}\Delta\tilde{\rho}_{2}^{m},\  \tilde{\rho}_{2}^{m}(0,x)=\rho^0(x).
\end{equation}
Let $\hat{V}_{\delta,2}^m(t,x)=\sqrt{m/\delta}[\hat{\rho}_{\delta,2}^{m}(t,x)-\rho(t,x)]$, $\tilde{V}_2^m(t,x)=\sqrt{m/\delta}[\tilde{\rho}_{2}^{m}(t,x)-\rho(t,x)]$. We can show that $\hat{V}_{\delta,2}^m(t,x)$ and $\tilde{V}_2^m(t,x)$ have the same limiting distribution as $\hat{V}_{\delta,1}^m(t,x)$.

\begin{thm}\label{prop-44}
Assume (A8), $\hat{\Psi}_k(x)=\hat{\Psi}_m^{(k)}(x)$ satisfies (A5),(A6), and assume

\vspace{0.1cm}
(A9): There exists $\alpha>0$, $|E(\nabla_x\Psi_{\hat{\theta}_m}(x)-\nabla_x\Psi_\theta(x))|=O(m^{-1-\alpha})$ for $x\in D$, where $D$ is a compact subset of $\mR^d$. 

\vspace{0.1cm}
Then $\hat{V}_{\delta,2}^m (t,x)$ converges to $V_1(t,x)$ in the sense that for any $\xi\in\cC_0^\iy(\mR_+\times\mR^d)$, as $m\ra\iy, \delta\ra 0, \delta m\ra 0$, $\delta m^{1+2\alpha}\ra\iy$, $\delta m^{1/(2-2\gamma)}>C>0$, 
%\[\mathop{\mbox{lim-p}}\limits_{m\ra\iy}\int_{\mR_+\times\mR^d}\hat{V}_{2}^m\xi dtdx\Longrightarrow_{\delta\ra 0}\int_{\mR_+\times\mR^d}V_2\xi dtdx\]
\[ \int_{\mR_+\times\mR^d} [ \hat{V}_{\delta,2}^m(t,x) - V_1(t,x)] \xi dtdx  \stackrel{P}{\rightarrow} 0, \]
where $V_1(t, x)$ is defined in Theorem \ref{thm-2}. If we only assume (A8),(A9), then as $m \rightarrow \infty$, $\delta \rightarrow 0$, $\delta m^{1+2\alpha}\ra\iy$, $\tilde{V}_{2}^m (t,x)$ also converges to $V_1(t,x)$ in the same manner as (\ref{eq-cv}). If we only assume (A8), this convergence also holds as $\delta m\ra\iy$.
%\[\mathop{\mbox{lim-p}}\limits_{\delta\ra 0}\mathop{\mbox{lim-p}}\limits_{n\ra\iy}\int_{\mR_+\times\mR^d}|V_1-V|\xi dtdx=0\]
%\end{prop}
%\[\mathop{\mbox{lim-p}}\limits_{\delta\ra 0}\mathop{\mbox{lim-p}}\limits_{n\ra\iy}\int_{\mR_+\times\mR^d}|V_1-V|\xi dtdx=0\]
\end{thm}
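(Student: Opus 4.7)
The plan is to reduce this theorem to the framework of Theorem~\ref{thm-2} by verifying that the variant estimator $\hat{\Psi}_m^{(k)}=k^{-1}\sum_{j=1}^k\Psi_{\hat{\theta}_m^{(j)}}$ produces the same scaling limit for the forcing term as the pooled estimator used there. First split
\[
\hat{V}_{\delta,2}^m - V_1 \;=\; \bigl(\hat{V}_{\delta,2}^m-\tilde{V}_2^m\bigr) + \bigl(\tilde{V}_2^m - V_1\bigr).
\]
The discretization difference $\hat{V}_{\delta,2}^m-\tilde{V}_2^m$ is handled exactly as in Theorem~\ref{thm-2}: the bounds (A5) and (A6), now assumed for $\hat{\Psi}_k=\hat{\Psi}_m^{(k)}$, control the drift variation between consecutive JKO steps and, by the Fokker-Planck-versus-JKO comparison used for the plain scheme, yield $\hat{\rho}_{\delta,2}^m-\tilde{\rho}_2^m=O_P(\delta)$; after the $\sqrt{m/\delta}$ scaling this vanishes whenever $\delta m\ra 0$.

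The heart of the argument is the statistical piece $\tilde{V}_2^m - V_1$. Linearising (\ref{eq-i2}) around $\Psi$ gives
\[
\pt_t\tilde{V}_2^m \;=\; \Div\bigl(\tilde{V}_2^m\nabla\Psi\bigr) + \sqrt{m/\delta}\,\Div\!\bigl(\rho\,\nabla(\hat{\Psi}_m^{(\lceil t/\delta\rceil)}-\Psi)\bigr) + \beta^{-1}\Delta\tilde{V}_2^m + \cR_m,
\]
with a quadratic-in-$(\hat{\theta}_m^{(j)}-\theta)$ remainder $\cR_m$ that is absorbed using asymptotic normality of $\hat{\theta}_m^{(j)}$ together with (A9). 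Expanding $\Psi_{\hat{\theta}_m^{(j)}}-\Psi$ to first order and using independence of the batches, one obtains
\[
\sqrt{m/\delta}\,\bigl(\hat{\Psi}_m^{(k)}(x)-\Psi(x)\bigr) \;\approx\; \nabla'_\theta\Psi_\theta(x)\cdot\frac{1}{\sqrt{\delta}\,k}\sum_{j=1}^k\sqrt{m}(\hat{\theta}_m^{(j)}-\theta),
\]
where the $\sqrt{m}(\hat{\theta}_m^{(j)}-\theta)$ are asymptotically i.i.d.\ mean-zero with covariance $\gamma_\theta^2$. Donsker's invariance principle applied to their partial sums yields $\sqrt{\delta}\sum_{j=1}^{\lceil t/\delta\rceil}\sqrt{m}(\hat{\theta}_m^{(j)}-\theta)\Rightarrow \gamma_\theta W(t)$ on compact $t$-intervals, and dividing by $\delta k\to t$ produces the required $t^{-1}W(t)$ factor. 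The resulting limiting equation is exactly (\ref{eqthm2}), so $\tilde{V}_2^m\ra V_1$. The bias contributed through (A9) is of order $\sqrt{m/\delta}\cdot O(m^{-1-\alpha})=O\!\bigl((\delta m^{1+2\alpha})^{-1/2}\bigr)$, which is negligible precisely under the scaling $\delta m^{1+2\alpha}\ra\iy$.

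The hard part will be making the Donsker convergence rigorous inside the duality pairing against $\xi\in\cC_0^\iy(\mR_+\times\mR^d)$ while simultaneously handling the $t^{-1}$ singularity at the origin. Tightness of the scaled partial-sum process on a compact $t$-interval (provided by the compact support of $\xi$) combined with uniform-in-$x$ continuity of $\nabla\tau$ on compact $x$-sets delivers the functional convergence needed inside the integral. The near-zero singularity is precisely what (A8) addresses: $\int_{\mR^d}|\Div(\rho\nabla\tau)|dx=O(t^{-a})$ for some $a<1/2$ pairs with the $t^{1/2}$ scale of Brownian motion to give $L^1$ integrability of the limiting forcing near $t=0$, as in the existence/uniqueness argument for~(\ref{eqthm2}) in Theorem~\ref{thm-2}. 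Dropping (A5)--(A6) gives the corresponding statement for $\tilde{V}_2^m$ alone, and further relaxing the scaling to $\delta m\ra\iy$ is obtained by discarding the bias-expansion step, exactly as in the weaker-hypothesis branch of Theorem~\ref{thm-2}.
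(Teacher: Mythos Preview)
Your proposal captures all the correct substantive ingredients---the decomposition $\epsilon_{m,j}:=\sqrt{m}\bigl(\nabla\Psi_{\hat\theta_m^{(j)}}-\nabla\Psi\bigr)-\nabla\tau\,\bZ_j$ with (A9) giving $|E\epsilon_{m,j}|=O(m^{-1/2-\alpha})$ and hence $E\bigl|k^{-1/2}\sum_j\epsilon_{m,j}\bigr|^2=o(1)$ when $k=o(m^{1+2\alpha})$, the Donsker passage $\delta^{-1/2}\bar\bZ_{\lceil t/\delta\rceil}\to t^{-1}W(t)$, and (A8) for integrability of the forcing near $t=0$---and the paper indeed says the proof is ``Theorem~\ref{thm-2} with this modified forcing analysis''.

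The organization differs in one respect worth noting. The paper never splits $\hat V_{\delta,2}^m-V_1$ through the intermediate $\tilde V_2^m$, nor does it linearize (\ref{eq-i2}) in strong form. Instead it applies Lemma~\ref{thm-41} directly to $\hat\rho_{\delta,2}^m$ to obtain a weak identity against $\zeta\in\cC_0^\infty$ with $O_P(\delta)$ remainder, subtracts the exact weak identity for $\rho$, and works entirely in the pairing $\int(\cdot)\bigl(\pt_t\zeta-\nabla\Psi\cdot\nabla\zeta+\Delta\zeta\bigr)\,dtdx$; the arbitrary test function $\xi$ appears only at the very end via the adjoint substitution $\xi=\pt_t\zeta-\nabla\Psi\cdot\nabla\zeta+\Delta\zeta$. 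This buys the paper the convenience of never needing classical regularity of $\tilde\rho_2^m$ or of $\tilde V_2^m$, whereas your pointwise linearization of (\ref{eq-i2}) with remainder $\cR_m$ would need separate justification (or should itself be recast weakly). In particular, Lemma~\ref{thm-41} does not directly deliver $\hat\rho_{\delta,2}^m-\tilde\rho_2^m=O_P(\delta)$ in any strong norm---only the weak-form residual---so your first step should be read in that weak sense as well, at which point the two routes coincide.
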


\begin{remark}
The overall idea of this theorem is the same as Theorem \ref{thm-2}. The only difference is that we use a different estimator of $\Psi$. Assumption (A9) plays a similar role as (A7). Assume (A9), $\tilde{\rho}_2^m(t,x)$ will have asymptotic distribution $V_1(t,x)$ when $\delta m^{1+2\alpha}\ra\iy$. Then given all conditions, $\hat{\rho}_{\delta,2}^m(t,x)$ will have the same asymptotic distribution $V_1(t,x)$.

(A9) is satisfied, for example, when $\Psi_\theta(x)=x'A\theta-H(x)+\zeta(\theta)$, $X_0\sim\pi(x)$. Indeed, in this case $\nabla_x\Psi_\theta(x)=A\theta-h(x)$, $h(x)=\nabla H(x)$. $\hat{\theta}_m=\frac{A^{-1}}{m}\sum_{i=1}^m h(X(i\eta))$. Since $E_\pi\nabla_x\Psi_\theta(X)=0$, when $X_0\sim\pi(x)$, $X(t)\sim\pi(x)$, we have $E_\pi(h(X(t)))=A\theta$, $E(\hat{\theta}_m)=\theta$. Then $E(\nabla_x\Psi_{\hat{\theta}_m}(x)-\nabla_x\Psi_{\theta}(x))=E(A(\hat{\theta}_m-\theta))=0$. In another example, if $\Psi_\theta(x)=\frac{1}{2}(x-\theta)'A(x-\theta)$ is a quadratic function, then the expectation is zero when $E(X_0)=\theta$. 
\end{remark}

In the above two schemes, we utilize all samples before step $k$ to estimate $\hat{\Psi}_k$. Now we will try a different scheme where we only use the samples $X^{(k)}=\{X^{(k)}(\eta),\dots,X^{(k)}(m\eta)\}$ in step $k$ to estimate $\hat{\Psi}_k$ (Note that it still belongs to the first online framework). Consider $\hat{\Psi}_k=\Psi_{\hat{\theta}^{(k)}_m}$, and define $\hat{\rho}_{m,3}^{(k)}$ as follows:
\begin{equation}\label{JKO-on5}
\hat{\rho}_{m,3}^{(k)}={\rm arg}\min_{\rho\in \Xi} \left\{\frac{1}{2}\left[d_W\left(\rho,\hat{\rho}_{m,3}^{(k-1)}\right)\right]^2+\delta \mF_{\Psi_{\hat{\theta}^{(k)}_m}}(\rho)\right\} , \;\;\; \hat{\rho}_{m,3}^{(0)}=\rho^0 , 
\end{equation}
and
\begin{equation}\label{JKO-on6}
\hat{\rho}_{\delta,3}^m(t,x)=\hat{\rho}_{m,3}^{(\lceil t/\delta\, \rceil)}(x) . 
\end{equation}
We denote by $\tilde{\rho}_3^m(t,x)$ the solution to the following Fokker-Planck equation:
\begin{equation}\label{eq-i1}
\pt_t\tilde{\rho}_3^m(t,x)=\Div(\tilde{\rho}_{3}^{m}\nabla_x\Psi_{\hat{\theta}^{(\lceil t/\delta\rceil)}_m}(x))+\beta^{-1}\Delta\tilde{\rho}_{3}^{m},\  \tilde{\rho}_{3}^{m}(0,x)=\rho^0(x).
\end{equation}
Let $\tilde{V}_3^m(t,x)=\sqrt{m/\delta}[\tilde{\rho}_{3}^{m}(t,x)-\rho(t,x)]$. We can prove the following asymptotic result of $\tilde{V}_3^m(t,x)$ with weaker conditions.

\begin{prop}\label{prop-42}
Assume (A8), then $\tilde{V}_{3}^m (t,x)$ converges to $V_2(t,x)$ in the sense that for any $\xi\in\cC_0^\iy(\mR_+\times\mR^d)$, as $m\ra\iy, \delta\ra 0, \delta m\ra\iy$, 
%\[\mathop{\mbox{lim-p}}\limits_{m\ra\iy}\int_{\mR_+\times\mR^d}\hat{V}_{2}^m\xi dtdx\Longrightarrow_{\delta\ra 0}\int_{\mR_+\times\mR^d}V_2\xi dtdx\]
\[ \int_{\mR_+\times\mR^d} [ \tilde{V}_{3}^m(t,x) - V_2(t,x)] \xi dtdx  \stackrel{P}{\rightarrow} 0, \]
where $V_2(t, x)$ satisfies the following stochastic PDE, 
\begin{equation}\label{eqp42}
  \partial_t V_2(t,x) = \Div \left( V_2 \nabla \Psi \right) +   \Div\left( \rho \nabla  \tau(x)  \dot{W}(t)  \right) + \beta^{-1} \Delta V_2 , V_2(0,x)=0,
  \end{equation}
$\dot{W}(t)=\frac{dW(t)}{dt}$ is the white noise, and there exists an a.s. finite unique solution to (\ref{eqp42}).
%\[\mathop{\mbox{lim-p}}\limits_{\delta\ra 0}\mathop{\mbox{lim-p}}\limits_{n\ra\iy}\int_{\mR_+\times\mR^d}|V_1-V|\xi dtdx=0\]
\end{prop}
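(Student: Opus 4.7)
\textbf{Plan for the proof of Proposition \ref{prop-42}.} My approach mirrors the two-step template of Theorem \ref{thm-2}, adapted to the independent-batch structure of this scheme: here the batches $X^{(k)}$ are independent, so the estimators $\hat\theta^{(k)}_m$ are i.i.d.\ across $k$, and this independence is exactly what converts the averaged-Brownian limit $t^{-1}W(t)$ of Theorem \ref{thm-2} into the white-noise driver $\dot W(t)$. I first subtract (\ref{Fokker-Planck}) from (\ref{eq-i1}) and multiply by $\sqrt{m/\delta}$ to obtain
\[
\pt_t \tilde V_3^m = \Div\bigl(\tilde V_3^m \nabla\Psi\bigr) + \sqrt{m/\delta}\,\Div\bigl(\tilde\rho_3^m\, [\nabla_x\Psi_{\hat\theta^{(\lceil t/\delta\rceil)}_m} - \nabla_x\Psi_\theta]\bigr) + \beta^{-1}\Delta \tilde V_3^m,
\]
with $\tilde V_3^m(0,x)=0$. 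A Taylor expansion in $\theta$ combined with $\hat\theta^{(k)}_m-\theta=O_P(m^{-1/2})$ gives the linearization
\[
\nabla_x\Psi_{\hat\theta^{(k)}_m}(x)-\nabla_x\Psi_\theta(x)=\nabla'_\theta\nabla_x\Psi_\theta(x)(\hat\theta^{(k)}_m-\theta)+R^{(k)}_m(x),
\]
with remainder $R^{(k)}_m=O_P(m^{-1})$ on compacts, and replacing $\tilde\rho_3^m$ by $\rho$ at leading order reduces the forcing, modulo $o_P(1)$ terms after pairing with any $\xi\in\cC_0^\iy$, to $\Div(\rho\,\nabla'_\theta\nabla_x\Psi_\theta)\cdot\sqrt{m/\delta}\,(\hat\theta^{(\lceil t/\delta\rceil)}_m-\theta)$.

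Next I identify the weak limit of this stochastic forcing. By the CLT for the M-estimator $\hat\theta^{(k)}_m$ that solves $\sum_{i=1}^m\nabla_x\Psi_\theta(X^{(k)}(i\eta))=0$, under (A3)--(A4) each $\sqrt m(\hat\theta^{(k)}_m-\theta)$ is asymptotically $\gamma_\theta Z_k$ with $Z_k$ i.i.d.\ standard Gaussians, so $\sqrt{m/\delta}(\hat\theta^{(k)}_m-\theta)\approx\delta^{-1/2}\gamma_\theta Z_k$. Using $\tau=\nabla'_\theta\Psi_\theta\,\gamma_\theta$ and pairing with a test function $\xi$, the forcing contributes, up to $o_P(1)$,
\[
\sum_{k\ge 1}\frac{Z_k}{\sqrt{\delta}}\int_{(k-1)\delta}^{k\delta}\!\!\int_{\mR^d}\Div(\rho(t,x)\nabla\tau(x))\,\xi(t,x)\,dx\,dt,
\]
which is a Riemann-Itô sum converging in $L^2$ to the Wiener integral $\int_0^\iy\!\int_{\mR^d}\Div(\rho\nabla\tau)\,\xi\,dx\,dW(t)$, identifying the white-noise term in (\ref{eqp42}). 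Assumption (A7) is not needed here because independent batches centre each increment exactly, so $\delta m\to\iy$ alone suffices to make $\sqrt{m/\delta}\,R^{(k)}_m$ vanish weakly on compacts.

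Existence and uniqueness of $V_2$ follow from the mild formulation
\[
V_2(t,\cdot)=\int_0^t P_{t-s}\bigl[\Div(\rho(s,\cdot)\nabla\tau)\bigr]\,dW(s),
\]
where $P_t$ is the semigroup of the linearized Fokker-Planck operator $u\mapsto\Div(u\nabla\Psi)+\beta^{-1}\Delta u$: (A8) yields $\|\Div(\rho(s,\cdot)\nabla\tau)\|_{L^1}=O(s^{-a})$ with $a<1/2$, so Itô's isometry combined with the $L^1$-contraction of $P_{t-s}$ (and duality against $\xi$) bounds $\mE\langle V_2(t,\cdot),\xi\rangle^2$ locally in $t$, giving an a.s.\ finite mild solution that is unique by linearity. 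The main obstacle will be the rigorous white-noise limit in the distribution-valued sense: I must verify that the coupling between $\tilde\rho_3^m$ and the stochastic drift contributes only lower-order terms, which requires a tightness estimate for $\tilde V_3^m$ in a suitable distribution space paired with a martingale CLT for the driving Riemann-Itô sum, and the $t\downarrow 0$ integrability supplied by (A8) is precisely what keeps the white-noise forcing integrable near the initial time.
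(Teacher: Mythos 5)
Your proposal follows essentially the same route as the paper's proof: test the Fokker--Planck identity for $\tilde\rho_3^m$ against smooth compactly supported functions, linearize $\nabla_x\Psi_{\hat\theta^{(k)}_m}-\nabla_x\Psi_\theta$ via the batchwise CLT so the forcing becomes $\delta^{-1/2}\nabla\tau(x)\bZ_k$ with a per-step $O_P(m^{-1/2})$ error that sums to $o_P(1)$ under $\delta m\to\infty$, pass the Riemann--Itô sum to the Wiener integral to identify the white-noise driver, treat the $[\tilde\rho_3^m-\rho]$ cross term as higher order, and establish existence/uniqueness of $V_2$ through the mild (Fokker--Planck semigroup) representation with the $s^{-a}$, $a<1/2$, integrability from (A8). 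The only slight imprecision is your explanation that (A7) is dispensable because independent batches ``centre each increment exactly''---the correct reason, which you also invoke, is simply that the uncentred $O_P(m^{-1/2})$ per-step discrepancy is killed by $\delta m\to\infty$, exactly as in the paper.
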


\begin{remark}
Unfortunately, in this scheme it is challenging to prove the convergence of $\hat{V}_{\delta,3}^m(t,x)=\sqrt{m/\delta}[\hat{\rho}_{\delta,3}^{m}(t,x)-\rho(t,x)]$ to $V_2(t,x)$. We need (A6) to control the difference between $\hat{\Psi}_{k+1}$ and $\hat{\Psi}_k$, but in this case $\hat{\theta}^{(k+1)}_m-\hat{\theta}^{(k)}_m$ is of order $m^{-1/2}$, so we can only take $\gamma=0$ in (A6), which means we need $\delta m^{1/2}>C$. However, the discretization error in $\hat{\rho}_{\delta,3}^{m}(t,x)-\rho(t,x)$ is at least $O_P(\delta)$, so $\delta m$ must converge to zero, which is contradicted with $\delta m^{1/2}>C$.
\end{remark}

In the second online framework, we sequentially observe the Langevin diffusion and construct sequential estimators of $\theta$. We estimate $\Psi$ by $\Psi_{\hat{\theta}_k}$ in step $k$, where $\hat{\theta}_k$ is estimated by solving the following equation
\begin{equation}\label{Est-E}
\sum_{i=1}^k\nabla_x\Psi_{\theta}(X(i\eta))=0.
\end{equation}
Then we can define
\begin{equation}
\hat{\rho}_2^{(k)}={\rm arg}\min_{\rho\in \Xi} \left\{\frac{1}{2}\left[d_W\left(\rho,\hat{\rho}_2^{(k-1)}\right)\right]^2+\delta \mF_{\Psi_{\hat{\theta}_k}}(\rho)\right\}
\ ,\hat{\rho}^{(0)}=\rho^0
\end{equation}
and
\begin{equation}
\hat{\rho}_{\delta,2}(t,x)=\hat{\rho}^{(\lceil t/\delta\rceil)}(x).
\end{equation}
Its Fokker-Planck counterpart, $\tilde{\rho}_2$, is defined by the following PDE
\begin{equation}\label{eq-41}
\pt_t \tilde{\rho}_2(t,x)=\Div(\tilde{\rho}_2\nabla_x\Psi_{\hat{\theta}_\delta(t)})+\beta^{-1}\Delta \tilde{\rho}_2, \ \tilde{\rho}_2(0,x)=\rho^0(x),
\end{equation}
where $\hat{\theta}_\delta(t)=\hat{\theta}_{\lceil t/\delta\rceil}$. 
Let $\tilde{V}_2(t,x)=\delta^{-1/2}(\tilde{\rho}_2(t,x)-\rho(t,x))$. We can show that $\tilde{V}_2(t,x)$ has the same limiting distribution as $\hat{V}_{\delta,1}^m(t,x)$ and $\hat{V}_{\delta,2}^m(t,x)$.

\begin{prop}\label{prop-47}
Assume (A8), $\tilde{V}_{2}(t,x)$ converges to $V_1(t,x)$ in the sense that for any $\xi\in\cC_0^\iy(\mR_+\times\mR^d)$, as $\delta\ra 0$,
\[\int_{\mR_+\times\mR^d}[\tilde{V}_2(t,x)-V_1(t,x)]\xi dtdx\stackrel{P}{\rightarrow} 0,\]
where $V_1(t, x)$ is defined in Theorem \ref{thm-2}. 
\end{prop}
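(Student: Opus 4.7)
The plan is to derive a linear (inhomogeneous) PDE satisfied by $\tilde{V}_2(t,x)$, show that its random coefficient converges in a suitable functional sense to the driving noise $\gamma_\theta W(t)/t$, and then pass to the limit against test functions $\xi\in \cC_0^\iy(\mR_+\times\mR^d)$ to identify the limit with the SPDE (\ref{eqthm2}). First I would subtract the Fokker--Planck equation (\ref{Fokker-Planck}) for $\rho$ from (\ref{eq-41}) for $\tilde{\rho}_2$ and divide by $\sqrt{\delta}$, yielding
\begin{equation*}
\pt_t \tilde{V}_2 = \Div\!\left(\tilde{V}_2\,\nabla\Psi\right) + \Div\!\left(\tilde{\rho}_2\,\delta^{-1/2}\bigl[\nabla_x\Psi_{\hat{\theta}_\delta(t)}-\nabla_x\Psi_\theta\bigr]\right) + \beta^{-1}\Delta \tilde{V}_2,\quad \tilde{V}_2(0,x)=0.
\end{equation*}
A first-order Taylor expansion in $\theta$ (legitimate under (A2)) replaces the bracket by $\nabla_\theta'\nabla_x\Psi_\theta(x)(\hat{\theta}_{\lceil t/\delta\rceil}-\theta)+R_\delta(t,x)$, where $R_\delta$ is $o_P(\hat{\theta}_{\lceil t/\delta\rceil}-\theta)$ uniformly in $x$ on compact sets.

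The central step is a functional CLT for the scaled estimator sequence. Writing the estimating equation (\ref{Est-E}) via a one-step linearization around $\theta$ and invoking the ergodicity of $X(t)$ together with the moment bound in (A4) gives the martingale representation $\hat{\theta}_k-\theta = k^{-1}\sum_{i=1}^k \xi_i + o_P(k^{-1/2})$ with $\cov(\xi_i)=\gamma_\theta^2$. Applying a Donsker-type invariance principle with time rescaling $s=k\delta$ yields, jointly in $t$ on compact subsets of $(0,\infty)$,
\begin{equation*}
\delta^{-1/2}\bigl(\hat{\theta}_{\lceil t/\delta\rceil}-\theta\bigr) \;\Longrightarrow\; \gamma_\theta\, W(t)/t,
\end{equation*}
where $W$ is a standard Brownian motion (the factor $\sqrt{\delta}$ in the denominator of $k$ combines with the $\sqrt{k}$ scaling of the partial sum to produce the $1/t$ weight). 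Substituting into the equation for $\tilde{V}_2$ and using that $\tilde{\rho}_2\to\rho$ in $L^1((0,T)\times\mR^d)$ (a consequence of the deterministic Fokker--Planck stability under $\hat{\theta}_{\lceil t/\delta\rceil}\to\theta$, which is already used in Theorem \ref{thm-2}) identifies the forcing term as $\Div(\rho\,\nabla\tau(x)\,t^{-1}W(t))$, matching (\ref{eqthm2}).

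To make the limiting identification rigorous I would work in the weak formulation: pair both sides with $\xi\in\cC_0^\iy(\mR_+\times\mR^d)$, integrate by parts so that derivatives fall on $\xi$, and then exploit tightness of $\tilde{V}_2$ in a suitable negative Sobolev space combined with the continuous mapping theorem to send $\delta\to 0$. Uniqueness and existence of the limit SPDE are guaranteed by (A8), which is exactly the condition needed to make $\int_0^T \|\Div(\rho\,\nabla\tau)\|_{L^1}\,t^{-1}\,dW(t)$ well-defined since, for small $t$, $t^{-1}W(t)$ has $L^2$-norm of order $t^{-1/2}$ and pairing against $\xi$ produces a spatial integral of order $t^{-a}$ with $a<1/2$, keeping the Itô--Walsh-type integral $L^2$-bounded near zero.

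The main obstacle will be the singularity of the driving noise $t^{-1}W(t)$ at $t=0$: one must show both that $\tilde{V}_2$ remains tight up to $t=0$ and that the remainder terms (Taylor error $R_\delta$, the difference $\tilde{\rho}_2-\rho$ in the nonlinear-in-$\theta$ part) do not accumulate as $t\downarrow 0$. This is handled by a careful cutoff at $t=\epsilon$, proving convergence on $[\epsilon,T]$ by the Donsker argument above, and then sending $\epsilon\downarrow 0$ using (A8) to bound the contribution from $[0,\epsilon]$ uniformly in $\delta$. The rest of the argument parallels the weak-formulation techniques already used in the proof of Theorem \ref{thm-2}, so no new machinery beyond the $1/t$-weighted invariance principle is required.
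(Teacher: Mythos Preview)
Your proposal is correct and follows essentially the same approach as the paper: derive the linearized equation for $\tilde{V}_2$, establish a Donsker-type invariance principle for $t\delta^{-1/2}(\hat{\theta}_{\lceil t/\delta\rceil}-\theta)\Rightarrow \gamma_\theta W(t)$, pass to the weak formulation, and handle the $t\downarrow 0$ singularity via (A8) with a cutoff at $t=\epsilon$. The paper carries out the invariance principle by an explicit computation of the limiting covariance $\delta\,\cov_\pi(S_{\lceil s/\delta\rceil},S_{\lceil t/\delta\rceil})\to s\Sigma_f$ under the mixing conditions guaranteed by (A2)--(A4), rather than via a martingale representation, but this is the same result. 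One small executional difference: instead of proving tightness of $\tilde{V}_2$ in a negative Sobolev space, the paper handles arbitrary $\xi\in\cC_0^\infty$ by solving the adjoint equation $\xi=\pt_t\zeta-\nabla\Psi\cdot\nabla\zeta+\Delta\zeta$ for $\zeta\in\cC_0^\infty$ and computing directly with $\zeta$, which avoids any compactness argument.
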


\begin{remark}
In this case, we cannot show the convergence of $\hat{V}_{\delta,2}(t,x)=\delta^{-1/2}(\hat{\rho}_{\delta,2}(t,x)-\rho(t,x))$ to $V_1(t,x)$. For the condition (A6), in the best case we can take $\gamma=1$ without the `$m^{1/2}$' term. Then %in the proof of Lemma \ref{thm-41}, $S_N$ is bounded by $\cC N^\cC\log(N)$, for $N=T/\delta$, this is $\cC\delta^{-\cC}|\log(\delta)|$, then 
we can show that the discretization error is $O_P(\delta^{1-\cC}|\log(\delta)|)$ ($\cC$ is some random variable bounded in probability), which is too big for the convergence.
\end{remark}

Note that, in the above cases, we always fix $\eta$ and use the central limit theorem of $\hat{\theta}$ to derive the asymptotic result. Next we will try the case when $\eta$ depends on $\delta$. Then as $\eta\ra 0$, it might be easier to control the difference between $\hat{\Psi}_{k+1}$ and $\hat{\Psi}_k$ and get conditions stronger than (A6). However, we will show that, if the samples used to estimate $\theta$ are in a fixed interval, then the resulting $\hat{\theta}$ estimated from (\ref{Est-E}) is biased and has no higher-order asymptotic distribution. We will use a counterexample to illustrate this point.

Consider the example when $\eta=\delta$, $d=1$, $\beta=1$, $\Psi_\theta(x)=\frac{1}{2}|x-\theta|^2$. Then as $\delta\ra 0$, it is straightforward to see that $\hat{\theta}_\delta(t)$ converges to $\hat{\theta}(t)$,  where
\[\hat{\theta}_\delta(t)=\frac{1}{\lceil t/\delta\rceil}\sum_{i=1}^{\lceil t/\delta\rceil}X(i\delta),\ \ \hat{\theta}(t)=\frac{1}{t}\int_0^t X(s)ds.\]
In the next proposition, we will show $\hat{\theta}(t)$ is a biased estimator of $\theta$ with non-negligible variance, and we cannot find a proper higher-order convergence result for $\hat{\theta}_\delta(t)-\hat{\theta}(t)$.      

\begin{prop}\label{prop-lst}
Assume $\eta=\delta$, $d=1$, $\beta=1$, $\Psi_\theta(x)=\frac{1}{2}|x-\theta|^2$, then $\hat{\theta}(t)=\frac{1}{t}\int_0^t X(s)ds$ is a biased estimator of $\theta$. If we assume $X_0=\theta$, then as $n\ra\iy$, $\delta=t/n$, we have
\begin{equation}\label{eq-c1}
n(\hat{\theta}_\delta(t)-\hat{\theta}(t))\stackrel{d}{\ra} N\left(0, \frac{1}{6}t+\frac{1}{4}(1-e^{-2t})\right).
\end{equation}
However, we cannot find the limiting distribution of $(t/\delta)(\hat{\theta}_\delta(t)-\hat{\theta}(t))$ for arbitrary $\delta\ra 0$. Neither can we find the limiting process of $\hat{\theta}_\delta(t)-\hat{\theta}(t)$.
\end{prop}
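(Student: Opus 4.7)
The plan is to handle the three claims separately by explicit Ornstein--Uhlenbeck calculations. The Langevin SDE reduces to $dX(s) = -(X(s) - \theta)ds + \sqrt{2}\,dB_s$ with solution $X(s) = \theta + (X_0 - \theta)e^{-s} + \sqrt{2}\int_0^s e^{-(s-u)}dB_u$. Substituting into $\hat\theta(t) = t^{-1}\int_0^t X(s)\,ds$ and taking expectations yields $E[\hat\theta(t)] - \theta = (E[X_0] - \theta)(1 - e^{-t})/t$, which is nonzero whenever $E[X_0] \neq \theta$; this establishes the bias claim.

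For the central limit result with $X_0 = \theta$ and $\delta = t/n$, I would write the difference of averages as $\hat\theta_\delta(t) - \hat\theta(t) = t^{-1}\sum_{i=1}^n \int_{(i-1)\delta}^{i\delta}[X(i\delta) - X(s)]ds$, substitute the SDE identity $X(i\delta) - X(s) = -\int_s^{i\delta}(X(u) - \theta)du + \sqrt{2}(B_{i\delta} - B_s)$, and exchange the order of integration to obtain
\[ n[\hat\theta_\delta(t) - \hat\theta(t)] = J_n + K_n, \quad J_n := \frac{\sqrt{2}}{\delta}\int_0^t g_\delta(u)\,dB_u, \quad K_n := -\frac{1}{\delta}\int_0^t g_\delta(u)(X(u) - \theta)\,du, \]
where $g_\delta(u) = u - (i-1)\delta$ for $u \in [(i-1)\delta, i\delta)$ is a sawtooth of amplitude $\delta$. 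As a Wiener integral of a deterministic function, $J_n$ is mean-zero Gaussian with variance $(2/\delta^2)\int_0^t g_\delta^2(u)du = 2t/3$. For $K_n$, since $g_\delta(u)/\delta \in [0,1]$ has cell-average $1/2$ and the paths of $X - \theta$ are $L^2$-continuous, a standard oscillation/averaging argument yields $K_n = -(t/2)(\hat\theta(t) - \theta) + o_P(1)$.

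Joint Gaussianity of $(J_n, \hat\theta(t) - \theta)$ follows because $\hat\theta(t) - \theta = (\sqrt{2}/t)\int_0^t (1 - e^{-(t-u)})dB_u$ is itself a Wiener integral against the same Brownian motion, so the limit law is Gaussian and only the limiting variance must be computed. Using $\mathrm{Var}(\hat\theta(t) - \theta) = (2/t^2)\int_0^t(1 - e^{-(t-u)})^2 du$ and, by replacing $g_\delta/\delta$ by $1/2$ in its integrand, $\mathrm{Cov}(J_n, \hat\theta(t) - \theta) \to 1 - (1 - e^{-t})/t$, assembling the three second moments as $\mathrm{Var}(J_n) + (t^2/4)\mathrm{Var}(\hat\theta(t) - \theta) - t\,\mathrm{Cov}(J_n, \hat\theta(t) - \theta)$ causes the cross terms to telescope, leaving exactly $t/6 + (1 - e^{-2t})/4$. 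The non-existence assertions rest on two obstructions: for arbitrary $\delta \to 0$ the discrepancy $\lceil t/\delta\rceil\delta - t$ oscillates in $[0, \delta)$ driven by the fractional part $\{t/\delta\}$, which has no limit as $\delta \to 0$, so $(t/\delta)[\hat\theta_\delta(t) - \hat\theta(t)]$ inherits a boundary contribution with no distributional limit; and as a process in $t$, $\hat\theta_\delta(t)$ is piecewise constant on the grid $\delta\mathbb{Z}$ whereas $\hat\theta(t)$ is smooth, so the difference process carries jumps at every grid point that no single scaling can simultaneously stabilize with the smooth trend, precluding a tight continuous limit process. The main obstacle will be making the heuristic $K_n \to -(t/2)(\hat\theta(t) - \theta)$ rigorous together with the precise cross-covariance with $J_n$, since both depend on the same oscillating sawtooth $g_\delta$; the key technical step is a controlled replacement of $g_\delta(u)/\delta$ by its cell mean $1/2$ when integrated against continuous processes or deterministic integrands.
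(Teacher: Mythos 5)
Your treatment of the bias and of the central limit result \eqref{eq-c1} is correct, but it follows a genuinely different route from the paper. The paper plugs the explicit OU solution into both $\hat\theta_\delta(t)$ and $\hat\theta(t)$, writes the difference as a single Wiener integral $\sqrt{2}\sum_i\int_{i\delta}^{(i+1)\delta}[\cdot]\,dB_s$ with a deterministic integrand, and obtains the variance by a per-cell Taylor expansion of the integrals $J_i=\frac{1}{12}(1+3e^{2(i\delta-t)})\delta^3+o(\delta^3)$, summing to $\frac{t}{6}+\frac14(1-e^{-2t})$; Gaussianity is then immediate. You instead use the SDE identity plus stochastic Fubini to split $n(\hat\theta_\delta-\hat\theta)=J_n+K_n$ into a sawtooth Wiener integral and an averaging term, replace $K_n$ by $-\frac{t}{2}(\hat\theta(t)-\theta)+o_P(1)$, and assemble the limit variance from $\mathrm{Var}(J_n)=\frac{2t}{3}$, $\mathrm{Var}(\hat\theta(t)-\theta)=\frac{2}{t^2}[t-2(1-e^{-t})+\frac12(1-e^{-2t})]$ and $\mathrm{Cov}(J_n,\hat\theta(t)-\theta)\to 1-\frac{1-e^{-t}}{t}$; I checked that $\frac{2t}{3}+\frac{t}{2}-t=\frac{t}{6}$ and the exponential terms combine to $\frac14(1-e^{-2t})$, so your algebra is right. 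Your route is arguably cleaner (no delicate third-order expansion of $J_i$), at the cost of having to justify the cell-averaging step $g_\delta/\delta\rightsquigarrow 1/2$ against both a continuous Gaussian path and the deterministic kernel, which is standard since $g_\delta/\delta-\frac12$ has mean zero on each cell; the paper's route buys an exact finite-$n$ variance formula, which it then reuses for the negative results.

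The gap is in the two non-existence claims. For $(t/\delta)(\hat\theta_\delta(t)-\hat\theta(t))$ with arbitrary $\delta\to0$, pointing out that $\lceil t/\delta\rceil\delta-t$ oscillates with the fractional part of $t/\delta$ is only the starting point: you must show that this overshoot produces a contribution that survives at the $t/\delta$ scale and whose size genuinely depends on $\{t/\delta\}$, so that the (exactly Gaussian) variance has distinct subsequential limits. The paper does exactly this by exploiting its exact variance formula and exhibiting (numerically, along $\delta_m=0.0001m$) that the variance converges only when $t/\delta$ is an integer. More seriously, your argument against a limiting process --- that $\hat\theta_\delta(\cdot)$ is piecewise constant while $\hat\theta(\cdot)$ is smooth, so ``jumps preclude a tight continuous limit'' --- is not a valid obstruction: Donsker-type rescalings of piecewise-constant processes converge to continuous limits all the time, and here the jumps are of vanishing size. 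The actual obstruction, which the paper uses, is that one-dimensional convergence holds only along sequences with $t/\delta\in\mathbb{N}$, and for two times $s,t$ with $t/s$ irrational no sequence $\delta\to0$ can make both $s/\delta$ and $t/\delta$ integers, so finite-dimensional convergence (and hence any process limit) fails. You would need to replace your final argument with something of this form.
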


Since the equation (\ref{eq-41}) is based on $\hat{\theta}_\delta(t)$, the failure in finding the asymptotic distribution of $\hat{\theta}_\delta(t)$ indicates that we cannot establish any higher-order asymptotic result for $\tilde{\rho}_2(t,x)$ or $\hat{\rho}_{\delta,2}(t,x)$ when $\eta=\delta$. 

\section{Application on Bures-Wasserstein gradient flow}\label{SEC-APP}

In this section, we restrict the probability distribution in the JKO scheme (\ref{JKO-1}) to Gaussian distribution. The space of the non-degenerate Gaussian distribution on $\mR^d$ equipped with the Wasserstein distance forms the Bures-Wasserstein space $\textbf{BW}(\mR^d)$. We may define the JKO scheme on this subspace. Let
\begin{equation} \label{JKO-BW1}
 p^{(k+1)}_\delta = \underset{p \in \textbf{BW}(\mR^d)}{\arg \min} \left\{ \frac{1}{2} \left[ d_W \left(p, p^{(k)}_\delta  \right)\right]^2 + \delta \mF_\Psi(p) \right\}  , p^{(0)}_\delta=p^0\sim N(\mu^0,\Sigma^0),
\end{equation}
\[p_\delta(t,x)=p^{(\lfloor t/\delta\rfloor)}_\delta(x).\]
Let $\mu_\delta(t)$ and $\Sigma_\delta(t)$ be the mean and covariance matrix of $p_\delta(t,\cdot)$, respectively. Lambert et al. \cite{lambert2022variational} proved that $\mu_\delta(t)$ and $\Sigma_\delta(t)$ will converge to the solution of the following ODE as $\delta$ goes to zero (with initial condition $\mu_0=\mu^0, \Sigma_0=\Sigma^0$):
\begin{equation}\label{eqe1}
\begin{aligned}
\dot{\mu}_t&=-\mE_{p_t}(\nabla\Psi(X)) \\
\dot{\Sigma}_t&=2I-\mE_{p_t}(\nabla\Psi(X)(X-\mu_t)'+(X-\mu_t)\nabla '\Psi(X)),
\end{aligned}
\end{equation}
where $p_t\sim N(\mu_t, \Sigma_t)$. Indeed, the objective function in (\ref{JKO-BW1}) has explicit expression, and we can solve the optimization problem (\ref{JKO-BW1}) by taking partial derivative w.r.t. $\mu$ and $\Sigma$, then deduce the limit (\ref{eqe1}). See also \cite{bhatia2019bures}. For more studies in variational Gaussian approximation, please refer to \cite{lambert2022continuous, lambert2022recursive, lambert2023limited} 

\vspace{0.1cm}
Now, we consider the case when $\Psi(x)$ is estimated offline by $n$ samples, that is, $\hat{\Psi}_n(x)=\Psi_{\hat{\theta}_n}(x)$. 
Define $\mu_\delta^n(t)$ and $\Sigma_\delta^n(t)$ by replacing $\Psi(x)$ with $\hat{\Psi}_n(x)$ in (\ref{JKO-BW1}), and define $\mu_t^n$ and $\Sigma_t^n$ by replacing $\Psi(x)$ with $\hat{\Psi}_n(x)$ in (\ref{eqe1}):
\begin{equation}\label{eqe6}
\begin{aligned}
\dot{\mu}_t^n&=-\mE_{p_t^n}(\nabla\hat{\Psi}_n(X)) \\
\dot{\Sigma}_t^n&=2I-\mE_{p_t^n}(\nabla\hat{\Psi}_n(X)(X-\mu_t^n)'+(X-\mu_t^n)\nabla '\hat{\Psi}_n(X))
\end{aligned}
\end{equation}
Here for simplicity, we omit the hat symbol $\hat{\cdot}$ over $\mu,\Sigma,V,\dots$ when they are calculated from estimated data in this section. Let $V_{\mu,\delta}^n(t)=\sqrt{n}(\mu_\delta^n(t)-\mu(t)), V_{\Sigma,\delta}^n(t)=\sqrt{n}(\Sigma_\delta^n(t)-\Sigma(t))$, $V_\mu^n(t)=\sqrt{n}(\mu_t^n-\mu_t), V_\Sigma^n(t)=\sqrt{n}(\Sigma_t^n-\Sigma_t)$. Then we have the following theorem.

\begin{thm}\label{thm-s6}
Assume $\frac{|\nabla\hat{\Psi}_n(x)|}{1+|x|^p}=O_P(1)$ for some $p>0$, and for fixed $T>0$, $\Sigma_t$ is non-degenerate over the interval $[0,T]$. Then as $n\ra\iy$, $V_\mu^n(t)$ and $V_\Sigma^n(t)$ weakly converge to $V_\mu(t)$ and $V_\Sigma(t)$, respectively, where $V_\mu(t)$ and $V_\Sigma(t)$ satisfy the linear ODE set:
\begin{equation}\label{eqe11}
\left\{
\begin{aligned}
\dot V_\mu(t)=&-\int_{\mR^d} \nabla\tau(x) \bZ p_t(x)dx-\int_{\mR^d}\nabla\Psi(x)\left[\nabla_\mu p_t(x)\cdot V_\mu(t)+\nabla_\Sigma p_t(x)\cdot V_\Sigma(t)\right]dx \\
\dot V_\Sigma(t)=&-\int_{\mR^d} [\nabla\tau(x) \bZ(x-\mu_t)'-\nabla\Psi(x)V_\mu(t)']p_t(x)dx- \\
&\int_{\mR^d}\nabla\Psi(x)(x-\mu_t)'[\nabla_\mu p_t(x)\cdot V_\mu(t)+\nabla_\Sigma p_t(x)\cdot V_\Sigma(t)]dx- \\
&\int_{\mR^d} [(x-\mu_t)(\nabla\tau(x) \bZ)'-V_\mu(t)\nabla'\Psi(x)]p_t(x)dx- \\
&\int_{\mR^d}(x-\mu_t)\nabla'\Psi(x)[\nabla_\mu p_t(x)\cdot V_\mu(t)+\nabla_\Sigma p_t(x)\cdot V_\Sigma(t)]dx
\end{aligned}
\right.
\end{equation}
When $\delta\ra 0$ and $\delta\sqrt{n}\ra 0$, $V_{\mu,\delta}^n(t)$ and $V_{\Sigma,\delta}^n(t)$ converge to the same limit $V_\mu(t)$ and $V_\Sigma(t)$.
\end{thm}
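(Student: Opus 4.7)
The plan is to establish the limit for the continuous-time perturbed ODE system $(V_\mu^n, V_\Sigma^n)$ first, then transfer it to the discrete JKO iterates via the Bures-Wasserstein discretization-error bound. Subtracting (\ref{eqe1}) from (\ref{eqe6}) and multiplying by $\sqrt{n}$, I would perform a simultaneous first-order Taylor expansion in two directions. In the parameter direction, write $\sqrt{n}(\nabla\hat\Psi_n(x)-\nabla\Psi(x)) = \nabla\nabla_\theta'\Psi_\theta(x)\sqrt{n}(\hat\theta_n-\theta)+o_P(1)$; the CLT for $\hat\theta_n$ obtained from the estimating equation (\ref{eq-Est1}) under (A1)-(A4) gives $\sqrt{n}(\hat\theta_n-\theta) \Rightarrow \gamma_\theta\bZ$, so this piece contributes the $\int\nabla\tau(x)\bZ\,p_t(x)dx$ and $\int(x-\mu_t)(\nabla\tau(x)\bZ)'p_t(x)dx$ terms on the right-hand side of (\ref{eqe11}). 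In the state direction, linearize $p_t^n$ around $p_t$ using the Gaussian identities $\nabla_\mu p_t$ and $\nabla_\Sigma p_t$, both well-defined because $\Sigma_t$ is non-degenerate on $[0,T]$; this accounts for the remaining bracketed terms in (\ref{eqe11}).

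With this linearization in place, the scaled remainders $(V_\mu^n, V_\Sigma^n)$ approximately satisfy the linear ODE (\ref{eqe11}). To pass to the limit rigorously I would (i) derive a priori bounds $\sup_{t\le T}(|V_\mu^n(t)|+\|V_\Sigma^n(t)\|)=O_P(1)$ by a Gr\"onwall argument on the linearized system, where the polynomial growth assumption $|\nabla\hat\Psi_n(x)|/(1+|x|^p)=O_P(1)$ guarantees integrability of the remainder terms against the Gaussian density $p_t^n$; (ii) identify the limit by continuous mapping, since with deterministic $\Psi$, $p_t$, $\nabla_\mu p_t$, $\nabla_\Sigma p_t$ the solution map $\bZ\mapsto(V_\mu,V_\Sigma)$ of (\ref{eqe11}) is a continuous linear operator on $[0,T]$; (iii) verify uniqueness of (\ref{eqe11}) by Gr\"onwall using boundedness of its coefficients on $[0,T]$ (from non-degeneracy of $\Sigma_t$). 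These steps together give weak convergence $(V_\mu^n,V_\Sigma^n)\Rightarrow(V_\mu,V_\Sigma)$.

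For the discrete JKO iterates, I would apply the Bures-Wasserstein discretization result of Lambert et al.\ \cite{lambert2022variational} to the perturbed flow $(\mu_\delta^n(t),\Sigma_\delta^n(t))$ generated by (\ref{JKO-BW1}) with $\hat\Psi_n$ in place of $\Psi$. This yields $\sup_{t\le T}(|\mu_\delta^n(t)-\mu_t^n|+\|\Sigma_\delta^n(t)-\Sigma_t^n\|)=O_P(\delta)$; scaling by $\sqrt{n}$ gives $O_P(\delta\sqrt{n})$, which is $o_P(1)$ under $\delta\sqrt{n}\to 0$. Combined with the convergence of $(V_\mu^n, V_\Sigma^n)$, Slutsky's theorem delivers the same limit for $(V_{\mu,\delta}^n,V_{\Sigma,\delta}^n)$.

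The main obstacle will be justifying the two-way Taylor expansion uniformly in $n$ and $t\in[0,T]$. The state-linearization remainder involves $\Sigma_t^{-1}$ and $\Sigma_t^{-2}$, so one must keep $\Sigma_t^n$ uniformly away from the boundary of positive-definiteness, which I would do by showing $\|\Sigma_t^n-\Sigma_t\|=o_P(1)$ and exploiting continuity of the spectrum. A closely related difficulty is ensuring the Lambert-type discretization bound applies uniformly in the random estimator $\hat\Psi_n$ rather than only for a fixed $\Psi$; the assumed polynomial growth $|\nabla\hat\Psi_n(x)|/(1+|x|^p)=O_P(1)$ should supply the required moment control, but one must trace carefully through the Lambert argument to confirm that its implicit constants depend only on such growth quantities.
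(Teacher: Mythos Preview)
Your proposal is correct and follows essentially the same approach as the paper: subtract the ODEs, Taylor-expand simultaneously in the parameter direction (via the CLT for $\hat\theta_n$) and in the state direction (via $\nabla_\mu p_t$, $\nabla_\Sigma p_t$), obtain $O_P(1)$ a priori bounds by Gr\"onwall, identify the limit with the linear ODE (\ref{eqe11}), and handle the discrete iterates via an $O_P(\delta)$ discretization bound combined with $\delta\sqrt n\to 0$. The only minor difference is that the paper does not cite Lambert et al.\ for a ready-made $O(\delta)$ bound but instead proves one directly (its Proposition~\ref{prop66}) from the one-step Bures-Wasserstein update identity together with a Lipschitz estimate for $\mu,\Sigma\mapsto\mE_{p}[\nabla\Psi(X)]$ under the polynomial-growth assumption; this is exactly the tracing-through you anticipated in your last paragraph.
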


\begin{remark}
When we restrict the probability space on the Bures-Wasserstein space, the iterate sequences of the JKO scheme are represented by the sequences of mean and variance of the Gaussian distribution. In the offline case, the function $\Psi$ is estimated using samples generating from the Langevin diffusion with size $n$. Given the condition on $\nabla\hat{\Psi}_n(x)$ and the non-degeneration of the covariance matrix, we can show the Lipschitz property of $p(x,\mu,\Sigma)$ and $\mE_p(\nabla\hat{\Psi}_n(X))$ with respect to $\mu$ and $\Sigma$, where the Lipschitz constant is bounded in probability. Then we can derive the asymptotic distribution of $\mu_t^n$ and $\Sigma_t^n$ with order $n^{-1/2}$, which is the statistical asymptotics. For the original iterate sequences $\mu_\delta^n(t)$ and $\Sigma_\delta^n(t)$, we use the continuous differential equations (\ref{eqe6}) to model the discrete sequences, and the discretization error is bounded by $O_P(\delta)$. If we require $\delta\sqrt{n}\ra 0$, then this error is neglibible in the higher-order convergence. Thus $\mu_\delta^n(t)$ and $\Sigma_\delta^n(t)$ have the same asymptotic distributions as $\mu_t^n$ and $\Sigma_t^n$, respectively.
\end{remark}

Next, we study the case when $\Psi(x)$ is estimated online. Here we will use $\hat{\Psi}_k(x)=\Psi_{\hat{\theta}^k}(x)$. (We should distinguish it from $\hat{\Psi}_n$ in the offline case). Define $\mu_{\delta,1}^{m}(t)$ and $\Sigma_{\delta,1}^{m}(t)$ by replacing $\Psi(x)$ with $\hat{\Psi}_k(x)$ in (\ref{JKO-BW1}), and define $\mu_1^{m}(t)$ and $\Sigma_1^{m}(t)$ below:
\begin{equation}\label{eqe12}
\begin{aligned}
\dot{\mu}_1^{m}(t)&=-\mE_{p_1^{m}(t)}(\nabla\hat{\Psi}_{\lceil t/\delta\rceil}(X)) \\
\dot{\Sigma}_1^{m}(t)&=2I-\mE_{p_1^{m}(t)}(\nabla\hat{\Psi}_{\lceil t/\delta\rceil}(X)(X-\mu_1^{m}(t))'+(X-\mu_1^{m}(t))\nabla '\hat{\Psi}_{\lceil t/\delta\rceil}(X))
\end{aligned}
\end{equation}
Let $V_{\mu,\delta,1}^{m}(t)=\sqrt{m/\delta}(\mu_{\delta,1}^{m}(t)-\mu(t)), V_{\Sigma,\delta,1}^{m}(t)=\sqrt{m/\delta}(\Sigma_{\delta,1}^{m}(t)-\Sigma(t))$, $V_{\mu,1}^{m}(t)=\sqrt{m/\delta}(\mu_1^{m}(t)-\mu(t)), V_{\Sigma,1}^{m}(t)=\sqrt{m/\delta}(\Sigma_1^{m}(t)-\Sigma(t))$. Then we have the following theorem.

\begin{thm}\label{thm-s7}
Assume for fixed $T>0$, $\Sigma_t$ is non-degenerate over the interval $[0,T]$, and $\hat{\Psi}_k=\Psi_{\hat{\theta}^k}$ is defined by (\ref{eq-Est2}). Assume (A7), $\frac{|\nabla\hat{\Psi}_k(x)|}{1+|x|^p}=O_P(1)$ for some $p>0$, and there exists $a<1/2$, for $t<T$, $\int_{\mR^d} p_t(x)|\nabla\tau(x)|dx=O(t^{-a})$. Then as $m\ra\iy$, $\delta\ra 0$, $\delta m^{1+2\alpha}\ra\iy$, $V_{\mu,1}^{m}(t)$ and $V_{\Sigma,1}^{m}(t)$ weakly converge to $V_{\mu,1}(t)$ and $V_{\Sigma,1}(t)$, respectively, where $V_{\mu,1}(t)$ and $V_{\Sigma,1}(t)$ satisfy the linear SDE set:
\begin{equation}\label{eqe16}
\left\{
\begin{aligned}
\dot V_{\mu,1}(t)=&-\int_{\mR^d} \nabla\tau(x) t^{-1}W_t p_t(x)dx-\int_{\mR^d}\nabla\Psi(x)\left[\nabla_\mu p_t(x)\cdot V_{\mu,1}(t)+\nabla_\Sigma p_t(x)\cdot V_{\Sigma,1}(t)\right]dx \\
\dot V_{\Sigma,1}(t)=&-\int_{\mR^d} [\nabla\tau(x) t^{-1}W_t(x-\mu_t)'-\nabla\Psi(x)V_{\mu,1}(t)']p_t(x)dx- \\
&\int_{\mR^d}\nabla\Psi(x)(x-\mu_t)'[\nabla_\mu p_t(x)\cdot V_{\mu,1}(t)+\nabla_\Sigma p_t(x)\cdot V_{\Sigma,1}(t)]dx- \\
&\int_{\mR^d} [(x-\mu_t)(\nabla\tau(x) t^{-1}W_t)'-V_{\mu,1}(t)\nabla'\Psi(x)]p_t(x)dx- \\
&\int_{\mR^d}(x-\mu_t)\nabla'\Psi(x)[\nabla_\mu p_t(x)\cdot V_{\mu,1}(t)+\nabla_\Sigma p_t(x)\cdot V_{\Sigma,1}(t)]dx
\end{aligned}
\right.
\end{equation}
If we further assume $\delta m\ra 0$, then $V_{\mu,\delta,1}^{m}(t)$ and $V_{\Sigma,\delta,1}^{m}(t)$ converge to the same limit $V_{\mu,1}(t)$ and $V_{\Sigma,1}(t)$.
\end{thm}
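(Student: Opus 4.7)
The plan is to combine the Gaussian-parametrization linearization used for Theorem \ref{thm-s6} with the singular online-noise analysis already developed for Theorem \ref{thm-2}. In the Bures--Wasserstein setting the entire scheme is encoded by the pair $(\mu,\Sigma)$, so the SPDE appearing in Theorem \ref{thm-2} collapses to a finite-dimensional linear SDE driven by the same online noise $t^{-1}W(t)$, and the technical issues reduce to those already met in Theorems \ref{thm-s6} and \ref{thm-2}.

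First I would rerun the martingale CLT for the estimating equation (\ref{eq-Est2}), exactly as in the proof of Theorem \ref{thm-2}, and use (A7) to control the bias uniformly for $k=\lceil t/\delta\rceil$ ranging up to $T/\delta$, obtaining the functional convergence $\sqrt{m/\delta}\,(\hat\theta^{\lceil t/\delta\rceil}-\theta)\stackrel{d}{\ra}\gamma_\theta\,t^{-1}W(t)$ uniformly on compacts $[t_0,T]$ (this is where $\delta m^{1+2\alpha}\ra\iy$ enters). Next, Taylor-expanding $\nabla\hat\Psi_{\lceil t/\delta\rceil}(x)=\nabla\Psi(x)+\nabla\tau(x)\gamma_\theta^{-1}(\hat\theta^{\lceil t/\delta\rceil}-\theta)+O_P(|\hat\theta^{\lceil t/\delta\rceil}-\theta|^2)$, substituting into (\ref{eqe12}), subtracting the population system (\ref{eqe1}), and multiplying by $\sqrt{m/\delta}$, produces a linear ODE for $(V_{\mu,1}^m,V_{\Sigma,1}^m)$ whose inhomogeneous part is exactly the right-hand side of (\ref{eqe16}) with $t^{-1}W_t$ replaced by $\sqrt{m/\delta}\,(\hat\theta^{\lceil t/\delta\rceil}-\theta)\gamma_\theta^{-1}$. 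The polynomial growth condition on $|\nabla\hat\Psi_k(x)|$ together with non-degeneracy of $\Sigma_t$ on $[0,T]$ yields Lipschitz bounds on $\mE_p[\nabla\hat\Psi_k(X)]$ and its $(X-\mu)$-weighted analogs in $(\mu,\Sigma)$, exactly as in Theorem \ref{thm-s6}; a Gronwall estimate on the linear ODE then upgrades the distributional convergence of the forcing to weak convergence of the full trajectories to the solution of (\ref{eqe16}).

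Existence and uniqueness for the limit (\ref{eqe16}) are handled via the same singular-forcing argument used in Theorem \ref{thm-2}: the hypothesis $\int p_t(x)|\nabla\tau(x)|dx=O(t^{-a})$ with $a<1/2$, combined with $t^{-1}W_t=O_P(t^{-1/2})$, makes the stochastic forcing pathwise integrable near $t=0$ (since $t^{-a-1/2}$ is integrable), and linearity with deterministic coefficients then yields an a.s.\ finite unique solution by Gronwall. To pass from the continuous scheme $(\mu_1^m,\Sigma_1^m)$ to the discrete JKO iterates $(\mu_{\delta,1}^m,\Sigma_{\delta,1}^m)$, the explicit Gaussian updates in (\ref{JKO-BW1}) (see \cite{lambert2022variational}) give the discretization bound $|\mu_{\delta,1}^m(t)-\mu_1^m(t)|+\|\Sigma_{\delta,1}^m(t)-\Sigma_1^m(t)\|=O_P(\delta)$ uniformly on $[0,T]$; after rescaling by $\sqrt{m/\delta}$ this is $O_P(\sqrt{m\delta})$, which vanishes under $\delta m\ra 0$, so $V_{\mu,\delta,1}^m$ and $V_{\Sigma,\delta,1}^m$ inherit the same limit as $V_{\mu,1}^m$ and $V_{\Sigma,1}^m$.

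The main obstacle will be the interplay of the two scalings near $t=0$: the forcing $\sqrt{m/\delta}(\hat\theta^{\lceil t/\delta\rceil}-\theta)$ blows up like $t^{-1/2}$, so tightness of $(V_{\mu,1}^m,V_{\Sigma,1}^m)$ on all of $[0,T]$ cannot be obtained directly. The idea is to first prove weak convergence on $[t_0,T]$ for each fixed $t_0>0$ via the continuous-mapping theorem (where both the forcing and the ODE coefficients are bounded), and then use the integrability estimate $\int p_t|\nabla\tau|dx=O(t^{-a})$, $a<1/2$, to let $t_0\downarrow 0$ and extend the convergence to $[0,T]$. Simultaneously balancing $\delta m^{1+2\alpha}\ra\iy$ (uniform estimator-bias control for $k\le T/\delta$) with $\delta m\ra 0$ (discretization negligibility) is exactly what permits both sources of singularity---statistical, near $t=0$, and computational, in the JKO step---to be absorbed in the same asymptotic regime.
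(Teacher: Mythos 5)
Your proposal is correct and follows essentially the same route as the paper's proof: the $L_2$/Brownian-embedding control of $\sqrt{m/\delta}(\hat\theta^{\lceil t/\delta\rceil}-\theta)$ under (A7) with $\delta m^{1+2\alpha}\ra\iy$, linearization of the forcing with Lipschitz bounds coming from the non-degeneracy of $\Sigma_t$ and the growth condition on $\nabla\hat\Psi_k$, a Gronwall estimate for tightness and identification of the limit, the $t^{-a}$ integrability condition to tame the $t^{-1}W_t$ singularity near $t=0$, and the $O_P(\delta)$ Gaussian-JKO discretization bound made negligible by $\delta m\ra 0$. The only cosmetic difference is that the paper works with the time-integrated equations on $[0,T]$ directly rather than your restrict-to-$[t_0,T]$-then-let-$t_0\downarrow 0$ packaging, but the underlying estimates are the same.
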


\begin{remark}
In the online case, the function $\Psi$ is estimated using samples generated from the Langevin diffusion with size $mk$, and the setup is the same as in Theorem \ref{thm-2}.  Given (A7) and the regularity condition, as $\delta m^{1+2\alpha}\ra\iy$, we can derive the asymptotic distribution of $\mu_1^m(t)$ and $\Sigma_1^m(t)$ with order $(m/\delta)^{-1/2}$. We then use the differential equations (\ref{eqe12}) to model the iterate discrete sequences generated from the algorithm, and the discretization error is still bounded by $O_P(\delta)$ (Here we do not need assumptions (A5) or (A6) due to the properties of Gaussian distribution). If $\delta m\ra 0$, then this error is neglibible and $\mu_{\delta,1}^m(t)$ and $\Sigma_{\delta,1}^m(t)$ have the same asymptotic distributions as $\mu_1^m(t)$ and $\Sigma_1^m(t)$, respectively.
\end{remark}

\section{Numerical studies}\label{SEC-Num}

In this section, we conduct some numerical works to illustrate our theory. To do the simulation, we will encounter two tasks. The first is to numerially solve the JKO scheme, and the second is to simulate the stochastic PDE. For the first task, we introduce a method which is easy to implement in one dimension. Assume now we want to optimize the functional $\frac{1}{2}d_W^2(\rho^o,\rho)+\mF_{\Psi}(\rho)$ when $\rho^o$ is given. We will construct a sequence of probability densities such that the sequence will converge to the minimum $\rho^*$. Consider the flux $\Phi_\tau:\mR^d\ra\mR^d$ defined by
\[\pt_\tau\Phi_\tau(y)=\xi(\Phi_\tau(y)), \Phi_0(y)=y.\]
For any $\tau$, let the measure $\rho_\tau(y)dy$ be the push forward of $\rho^s(y)dy$ under $\Phi_\tau$, where $\rho^s$ is the current candidate density (not necessarily be $\rho^*$). Let $P$ be optimal in the definition of $d_W^2(\rho^o,\rho^s)$ with density $p(x,y)$, then for small $\tau$, using approximation $d_W^2(\rho^o,\rho_\tau)-d_W^2(\rho^o,\rho^s)\approx \mE_P(|\Phi_\tau(Y)-X|^2-|Y-X|^2)$, we obtain
\[\begin{split}
&\frac{\pt}{\pt\tau}\left.\left(\frac{1}{2}d_W^2(\rho^o,\rho_\tau)+\mF_{\Psi}(\rho_\tau)\right)\right|_{\tau=0}\\
\approx&\int_{\mR^d\times\mR^d}(y-x)\cdot\xi(y)P(dxdy)+\delta\int_{\mR^d}(\nabla\Psi(y)\cdot\xi(y)-\beta^{-1}\Div\xi(y))\rho^s(y)dy\\
=&\int_{\mR^d}\xi(y)\cdot\left[\int_{\mR^d}(y-x)p(x,y)dx+\delta(\nabla\Psi(y)\rho^s(y)+\beta^{-1}\nabla\rho^s(y))\right]dy.
\end{split}\]
Denote $\alpha(y)=\int_{\mR^d}(y-x)p(x,y)dx+\delta(\nabla\Psi(y)\rho^s(y)+\beta^{-1}\nabla\rho^s(y))$, then $\alpha(y)\equiv0$ if $\rho^s=\rho^*$ (see Theorem 5.1 in \cite{jordan1998variational}). When $\rho^s$ is not the minimum point of $\frac{1}{2}d_W^2(\rho^o,\rho^s)+\mF_{\Psi}(\rho^s)$, we will update it with the push forward of $\rho^s$ under $\Phi_\tau$ for some $\xi$ and $\tau$. Inspired by the gradient descent method, we choose $\xi$ to minimize the gradient $\int_{\mR^d}\xi(y)\cdot\alpha(y)dy$ under the constraint $\|\xi\|_{\cL^2}=1$---that is, $\xi^*=-\alpha/\|\alpha\|_{\cL^2}$. Then after several updates with $\tau\ra 0$, $\rho^s$ will converge to the global minimum $\rho^*$ of the convex function $\frac{1}{2}d_W^2(\rho^o,\rho)+\mF_{\Psi}(\rho)$. To calculate $\rho_\tau$, note that
\[\rho_\tau(\Phi_\tau(y))|\det\nabla_y\Phi_\tau(y)|=\rho^s(y),\]
for small $\tau$, using approximation $\Phi_\tau(y)\approx y+\tau\xi(y)$, we get
\[\rho_\tau(y+\tau\xi(y))\approx|\det[I+\tau\nabla\xi(y)]|^{-1}\rho^s(y),\]
then using the proper interpolation method to calculate $\rho_\tau(y)$, for example, when $d=1$, we have
\begin{equation}\label{eq-nu1}
\begin{split}
\rho_\tau(y+\tau\xi(y))&\approx\rho_\tau(y)+\tau\xi(y)\pt_y\rho_\tau(y)\\
&\approx\rho_\tau(y)+\tau\xi(y)\frac{\rho_\tau(y+h+\tau\xi(y+h))-\rho_\tau(y-h+\tau\xi(y-h))}{2h+\tau(\xi(y+h)-\xi(y-h))}
\end{split}
\end{equation}
Now we treat the specific example. Consider the case when $d=1, \Psi_\theta(x)=\frac{1}{2}|x-\theta|^2, \rho^0\sim N(\mu_0,\sigma_0^2)$. We partition $[-D,D]$ into $J$ intervals, let $h=2D/J$, then we simulate the function value at the $J+1$ discrete points $-D,-D+h,-D+2h,\dots,D-h,D$. %e.g. use the array $\rho[0:J]$ to represent $\rho(-D),\rho(-D+h),\dots,\rho(D)$, respectively. 
Assume $\rho(-D)=\rho(D)=0$, $\{\tau_l\}$ is a proper sequence of step size, then we can calculate $\rho^{(k)}$ from $\rho^{(k-1)}$ using the following scheme:
\begin{enumerate}
    \item Let $\rho^{(k,0)}=\rho^{(k-1)}$, $l=0$;
    \item Denote $\rho^o=\rho^{(k-1)}, \rho^s=\rho^{(k,l)}$, find the optimum joint density in the definition of $d_W^2(\rho^o,\rho^s)$, denote by $p(x,y)$ for simplicity, and calculate 
    \[\alpha(y)\approx\int_{\mR^d}(y-x)p(x,y)dx+\delta\left((y-\theta)\rho^s(y)+\beta^{-1}\frac{\rho^s(y+h)-\rho^s(y-h)}{2h}\right),\]
    where $\alpha(-D)=\alpha(D)=0$. If $\|\alpha\|_{\cL^1}<\kappa$, go to step 3. Otherwise, let $\xi=-\alpha/\|\alpha\|_{\cL^2}$, $\rho_\tau(y+\tau\xi(y))=\left[1+\tau_{l+1}\frac{\xi(y+h)-\xi(y-h)}{2h}\right]^{-1}\rho^s(y)$, then calculate $\rho_\tau(y)$ for $y=jh-D$ using (\ref{eq-nu1}), let $\rho^{(k,l+1)}=\rho_\tau/\|\rho_\tau\|_{\cL^1}$, $l=l+1$, and return to the beginning of step 2.
    \item Let $\rho^{(k)}=\rho^s$.
\end{enumerate}

In step 2, we need to find the optimum joint density $p(x,y)$. In fact, this is a linear programming problem, and the variables are $p_{ij}=p(ih-D,jh-D), i,j=1,\dots,J-1$, which should satisfy $p_{ij}\geq 0, \sum_{j=1}^{J-1}p_{ij}=\rho^o[i]/h, \sum_{i=1}^{J-1}p_{ij}=\rho^s[j]/h$. The objective is to minimize $D_2(p)=\sum_{i,j=1}^{J-1}(i-j)^2p_{ij}$, since $E_P|X-Y|^2\approx h^4D_2(p)$. It is quite time consuming to directly apply linear programming to the task. To simplify the problem, note that $\rho^o$ and $\rho^s$ are close in distribution, and $p_{ij}$ should be zero when $|i-j|$ is large. Indeed, for $\beta=1,\delta=0.01, D=5, J=200, h=0.05$, after we examine several examples for enough steps, we find that $p_{ij}>0$ only when $|i-j|\leq 1$; therefore, we can largely simplify the linear programming problem and reduce the time complexity.

The step size sequence can be chosen as $\tau_l=\tau/l$ or $\tau_l=\tau/\log(l+1)$, and $\tau$ should be small to keep the approximation scheme stable. In practice, when the initial distribution has a small variance, for example, $\sigma_0^2=0.01$, the convergence from $\rho^{(k,l)}$ to $\rho^{(k)}$ is relatively slow. We get inspiration from Nesterov's acceleration and modify the scheme by letting $\rho^s=\rho^{(k,l)}+\frac{l-1}{l+2}(\rho^{(k,l)}-\rho^{(k,l-1)})$ for $l>0$ in the beginning of step 2. After applying this technique, we get a faster convergence speed than in the original scheme. This exhibits the attraction of Nesterov's acceleration in optimization problems. 
For different approaches to approximate the JKO scheme or other numerical works related to the scheme, see, for example, \cite{benamou2016augmented, benamou2016discretization, alvarez2021optimizing, fan2021variational, mokrov2021large, bonet2022efficient, carrillo2022primal}.

For the second task, we partition $[0,T]$ into $I$ intervals, let $\nu=T/I$, then we simulate function values of $V_1(t,x)$ at $t=i\nu$, $x=jh-D$, assuming $V_1(t,x)=0$ for $|x|\geq D$. Here we use an example to illustrate our method: assume $d=1$, $\beta=1$, $\Psi_\theta(x)=\frac{1}{2}|x-\theta|^2$, $\nabla_x\Psi_\theta(x)=x-\theta$, 
%the true parameter $\theta=0$, 
$\tau(x)=-\gamma_\theta(x-\theta) $, $\nabla\tau(x)=-\gamma_\theta$. Then $\Div(V_1\nabla\Psi)=\pt_x V_1\cdot (x-\theta)+V_1$, $\Div(\rho\nabla\tau(x)t^{-1}W_t)=-\gamma_\theta\pt_x\rho t^{-1}W_t$, and the forward difference formula of $V_1(t,x)$ is
\begin{equation}\label{eq-nu2}
\begin{split}
V_1^{i+1,j}-V_1^{i,j}=&\left[\frac{V_1^{i,j+1}-V_1^{i,j-1}}{2h}(x-\theta)+V_1^{i,j}+\frac{V_1^{i,j+1}-2V_1^{i,j}+V_1^{i,j-1}}{h^2}\right]\nu-\\
&\gamma_\theta\pt_x\rho\frac{W_{(i+1)\nu}}{i+1},
\end{split}
\end{equation}
where $x=jh-D, t=i\nu$, $V_1^{0,j}=0$, $V_1^{i,0}=V_1^{i,J}=0$, $\rho(t,x)$ is the density function of $X_t$, which follows normal distribution with mean $\mu_t=\theta+(\mu_0-\theta)e^{-t}$ and variance $\sigma_t^2=1-(1-\sigma_0^2)e^{-2t}$, $W_t$ are sample paths of the standard Brownian motion. However, the forward difference formula is unstable when $\nu/h^2>1/2$, so we apply the Crank-Nicolson method \cite{crank1947practical} instead, which uses the average of $V_1^{i,j}$ and $V_1^{i+1,j}$ instead of $V_1^{i,j}$ on the right hand side of (\ref{eq-nu2}). Since (\ref{eq-nu2}) is linear, we can rewrite it as
\[V_1^{i+1}-V_1^i=B_iV_1^i+g_i\]
where $V_1^i=(V_1^{i,1},\dots,V_1^{i,J-1})'$, $B_i\in\mR^{(J-1)\times(J-1)}$ and $g_i=-\gamma_\theta\pt_x\rho\frac{W_{(i+1)\nu}}{i+1}\in\mR^{J-1}$ (let $x=jh-D$). Then the Crank-Nicolson formula is
\[V_1^{i+1}-V_1^i=B_i(V_1^i+V_1^{i+1})/2+g_i,\]
that is
\[(I-B_i/2)V_1^{i+1}=(I+B_i/2)V_1^i+g_i,\]
which can be solved in $O(J)$ since $B_i$ is nearly diagonal. 

Now we show the results for specific examples. Consider the case when $d=1,\beta=1,\delta=0.01,T=0.5,D=5,I=50,J=200$ (we take $I=50$ such that $\nu=T/I=0.01=\delta$). Let the step size be $\tau_l=0.001/\log(1+l)$, we run until $l>2000$ or $\|\alpha\|_{\cL^1}<\kappa=0.0001$. We estimate $\hat{\theta}^{k}$ from $mk$ independent sequences, where $m=10,\eta=1$. For $\Psi_\theta(x)=\frac{1}{2}|x-\theta|^2,\theta=0,\rho^0\sim N(0,1.44)$, we use our method to simulate the iterate scheme $\hat{\rho}_{m,1}^{(k)}$ and compare %$\hat{R}_1(t)=\int_{\mR^d}\hat{\rho}_{\delta,1}^n(t,x)\zeta_D(x)dx$ with $R(t)=\int_{\mR^d}\rho(t,x)\zeta_D(x)dx$ for $t=k\delta$, as well as compare
$\hat{\rho}_{\delta,1}^m(T,x)$ with $\rho(T,x)$ for $x=jh-D$, then we obtain Figure \ref{fig1}. %For $\theta=0,\rho^0\sim N(0,0.01)$, we obtain Figure \ref{fig2}, note that here $\rho^0[j]<10^{-20}$ for $j\leq 80$ or $j\geq 120$, even in this extreme case, we can still run our method and show the convergence result.
We see that $\hat{\rho}_{\delta,1}^m(T,x)$ and $\rho(T,x)$ are very close. Next we compare $\hat{V}_{\delta,1}^m(t,x)=(m/\delta)^{1/2}(\hat{\rho}_{\delta,1}^m(t,x)-\rho(t,x))$ and $V_1(t,x)$. To realize $\hat{\theta}^k$ and $W_t$ on the same probability space, we use the limiting distribution
\[\sqrt{mk}(\hat{\theta}^k-\theta)-\sqrt{k}\gamma_\theta\bar{\bZ}_k\ra 0\]
and $\sqrt{\delta}\sum_{i=1}^k Z_i\approx W(k\delta)$, and we get the approximation $(m/\delta)^{1/2}(\hat{\theta}^k-\theta)\approx \gamma_\theta\frac{W(k\delta)}{k\delta}$. Then we will simply use $g_i=-\pt_x\rho (m\delta)^{1/2}(\hat{\theta}^{i+1}-\theta)$ instead of $-\gamma_\theta\pt_x\rho\frac{W_{(i+1)\nu}}{i+1}$ in the simulation formula (\ref{eq-nu2}). Then using the Crank-Nicolson formula to simulate $V_1$, we will show $\hat{V}_{\delta,1}^m(T,x)$ and $V_1(T,x)$ in Figure \ref{fig2}. To compare $\hat{V}_{\delta,1}^m(t,x)$ and $V_1(t,x)$ as a bivariate function of $(t,x)$, we draw the contour plot of both functions in Figure \ref{fig3}. The figure shows that the two processes are close to each other, which supports our asymptotic theory.
\begin{figure}[htbp]
    \centering
    \includegraphics[width=0.99\textwidth]{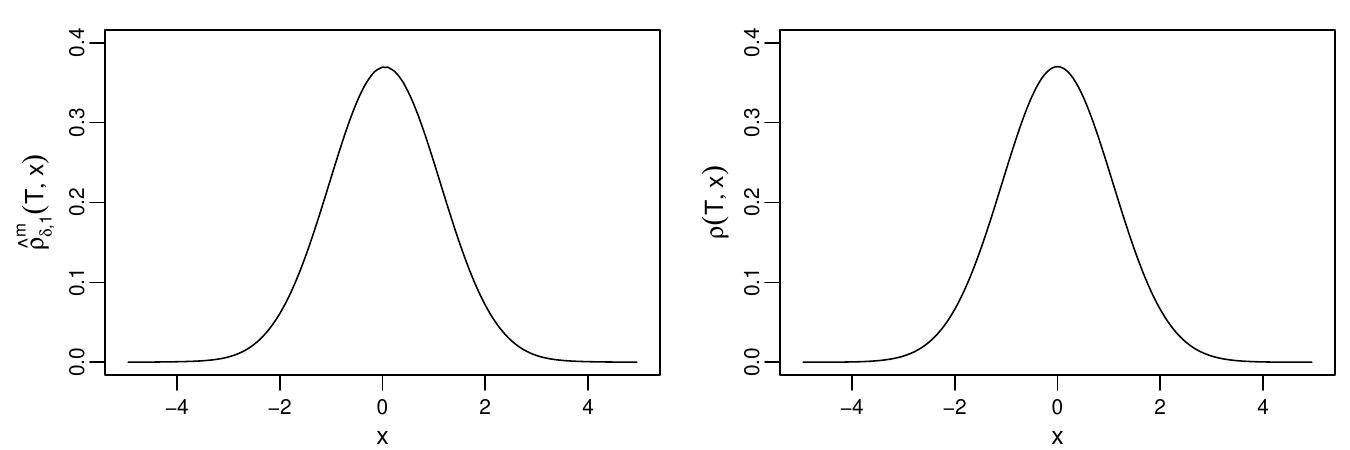}
    \caption{Function $\hat{\rho}_{\delta,1}^m(T,x)$ and $\rho(T,x)$}
    \label{fig1}
\end{figure}
\vspace{-0.2cm}
\begin{figure}[htbp]
    \centering
    \includegraphics[width=0.99\textwidth]{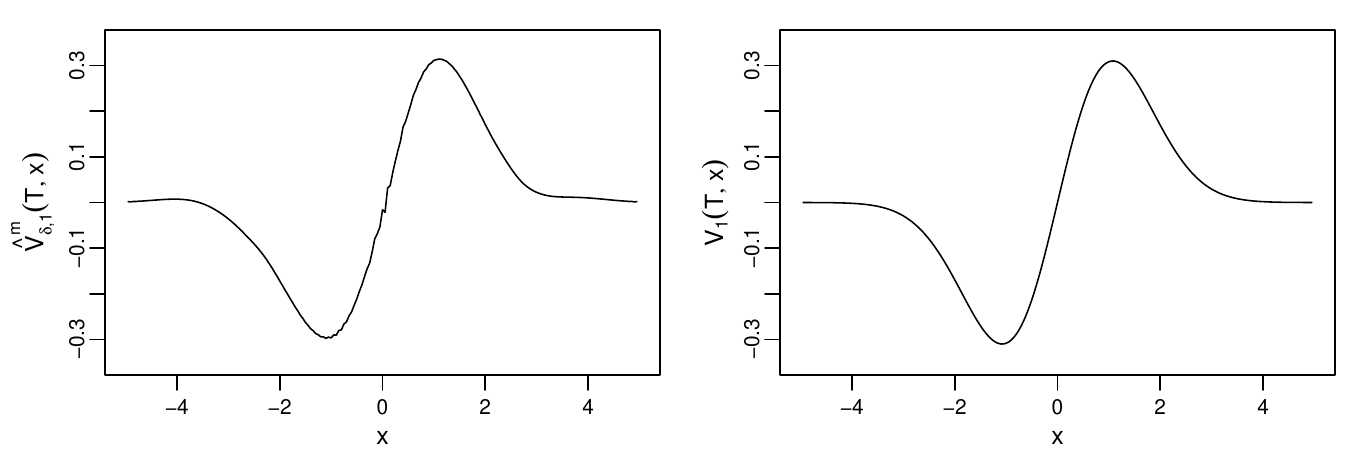}
    \caption{Function $\hat{V}_{\delta,1}^m(T,x)$ and $V_1(T,x)$}
    \label{fig2}
\end{figure}
\vspace{-0.2cm}
%For the second task, consider the case when $d=1,\beta=1,\delta=0.01,T=1,I=1000,D=5,J=200,n=1000,\eta=1$. For $\Psi_\theta(x)=\frac{1}{2}|\theta x-1|^2,\theta=1,\rho^0\sim N(0,0.25)$, $\gamma_\theta=\sqrt{\frac{1+e^{-\eta}}{1-e^{-\eta}}}$. We use Crank-Nicolson formula to simulate $\hat{V}_{\delta,1}^n$ and $V_1$, compare $\hat{S}_1(t)$ with $S_1(t)$ for $t=i\nu$, as well as compare $\hat{V}_{\delta,1}^n(T,x)$ and $V_1(T,x)$ for $x=jh-D$. Similarly we can simulate $\hat{V}_{\delta,2}^n$ and $V_2$, then calculate $\hat{S}_2(t)$ and $S_2(t)$ accordingly, we also simulate $\hat{V}_\delta$ and $V_0$, and calculate $\hat{S}_0(t)$ and $S_0(t)$. We summarize our results in Figure \ref{fig3}.
\begin{figure}[htbp]
    \centering
    \includegraphics[width=0.99\textwidth]{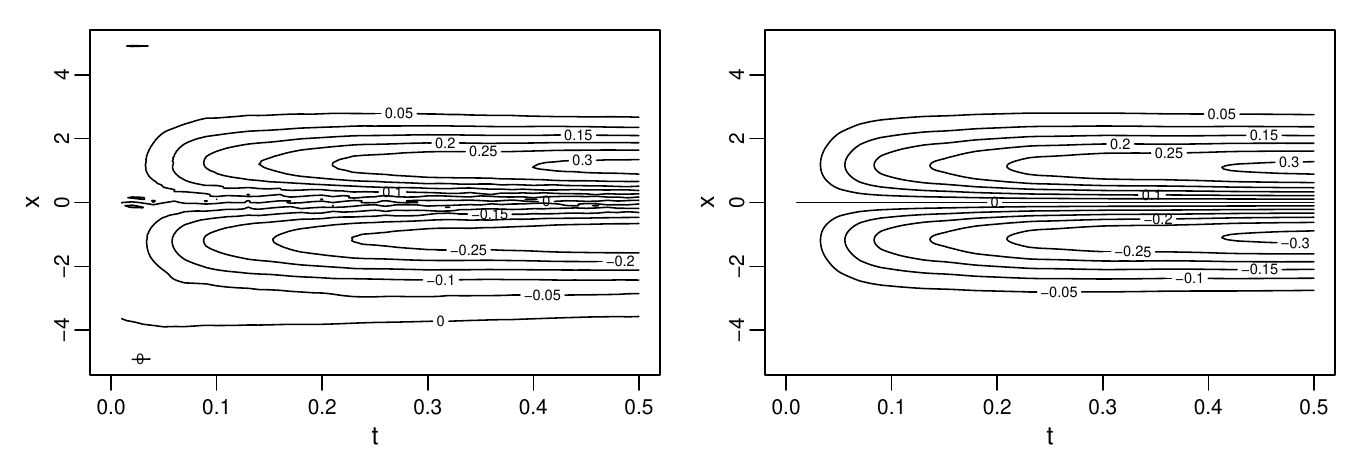}
    \caption{Contour plot of $\hat{V}_{\delta,1}^m(t,x)$ (left) and $V_1(t,x)$ (right)}
    \label{fig3}
\end{figure}

\section{Conclusion}

In this paper, we study the asymptotic property of the JKO scheme when the potential function is estimated from data. We study both offline and online estimators and propose several different frameworks. In the offline case, both the step function of the density derived by the JKO scheme and the Fokker-Planck counterpart of the JKO scheme converge to the original Fokker-Planck equation with speed $O_P(n^{-1/2})$, as well as establish the equation of the limiting distribution.

In the online case, the potentional function is sequentially estimated, and the estimators will change in the evolution of the JKO scheme. We introduce two online frameworks. In the first framework, when we estimate the potentional function using cumulative observations, we establish the higher-order convergence result for both the step function of the density derived by the JKO scheme and the Fokker-Planck counterpart to the true density function with speed $O_P((m/\delta)^{-1/2})$, as well as find the limiting distribution $V_1(t,x)$; when we estimate the potentional function only using current observations, the convergence is true only for the Fokker-Planck counterpart. In the second framework, we get the higher-order convergence result for the Fokker-Planck counterpart, then we find some negative results where the JKO scheme does not converge to the true density function and the higher-order convergence result does not exist if we let $\eta$ depend on $\delta$.

%\begin{acks}[Acknowledgments]
%The authors would like to thank the anonymous referees, an Associate Editor and the Editor for their constructive comments that improved the quality of this paper. 
%\end{acks}

%%%%%%%%%%%%%%%%%%%%%%%%%%%%%%%%%%%%%%%%%%%%%%
%% Funding information, if any,             %%
%% should be provided in the                %%
%% funding section.                         %%
%%%%%%%%%%%%%%%%%%%%%%%%%%%%%%%%%%%%%%%%%%%%%%
\begin{funding}
%The first author was supported by NSF Grant DMS-??-??????.
Shang Wu's research was partly supported by NSFC grant 12331009.
Yazhen Wang's research was partly supported by NSF grant DMS-1913149.
\end{funding}

%%%%%%%%%%%%%%%%%%%%%%%%%%%%%%%%%%%%%%%%%%%%%%
%% Supplementary Material, including data   %%
%% sets and code, should be provided in     %%
%% {supplement} environment with title      %%
%% and short description. It cannot be      %%
%% available exclusively as external link.  %%
%% All Supplementary Material must be       %%
%% available to the reader on Project       %%
%% Euclid with the published article.       %%
%%%%%%%%%%%%%%%%%%%%%%%%%%%%%%%%%%%%%%%%%%%%%%
\begin{supplement}
\stitle{Supplementary Materials for ``Computational and Statistical Asymptotic Analysis of the JKO Scheme for Iterative Algorithms to Update Distributions''}
\sdescription{The Supplementary Materials include the proofs of the theorems and propositions in Sections 4, 5, and 6.}
\end{supplement}
%\begin{supplement}
%\stitle{Title of Supplement B}
%\sdescription{Short description of Supplement B.}
%\end{supplement}

\begin{appendix}
\section{Sketch of the Proof in Section 4}%\ref{SEC-Asymptotic-offline}}
\label{appA} 
To study the asymptotic property of $\hat{\rho}_\delta^n(t,x)$, first we need to examine the asymptotic property of $\hat{\theta}_n$. Note that the estimator $\hat{\theta}_n$  satisfies (\ref{eq-Est1}).
%When $n\ra\iy$, the estimator $\hat{\theta}_n$ is consistent for $\theta$, and 
We can establish the following asymptotic normality for $\hat{\theta}_n$. 

\begin{lem}\label{prop-th}
Assume $\Psi_\theta(x)$ satisfies (A1), let $f_\theta(x)=\nabla_x\Psi_\theta(x)$ satisfy (A2), (A3), (A4), then we have
\begin{equation} \label{asymp-theta} 
\sqrt{n}(\hat{\theta}_n-\theta)\stackrel{d}{\rightarrow} \gamma_\theta \, \bZ , % N(0,\Gamma_\theta)
\end{equation} 
where $\bZ$ denotes a standard normal vector, $\gamma_\theta=\Gamma_\theta^{1/2}$, 
$\Gamma_\theta=A^{-1}\Sigma_f(A')^{-1}$, $A=E_\pi(\nabla_\theta'f_\theta(X))$, $\Sigma_f={\rm Var}_\pi(f(X(0)))+2\sum_{i=1}^\infty\cov_\pi(f(X(0),f(X(i\eta)))$, and the variance 
${\rm Var}_\pi$ and covariance $\cov_\pi$ are evaluated under the situation that $X(0)$ follows the invariant distribution $\pi$ and the Langevin diffusion 
$X(t)$ evolves according to (\ref{Langevin}). 
\end{lem}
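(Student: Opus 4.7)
The target is the asymptotic normality of the Z-estimator $\hat\theta_n$ defined by $\sum_{i=1}^n f_\theta(X(t_i))=0$ with $f_\theta=\nabla_x\Psi_\theta$, where $\{X(t_i)\}$ are observations of the ergodic Langevin diffusion at times $t_i=i\eta$. The plan is the standard Z-estimator route (consistency plus Taylor linearization plus a central limit theorem for the stationary Markov chain), adapted to the dependent-sample setting using the drift/moment conditions of \cite{jones2004markov, locherbach2013ergodicity}.

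First, I would record the probabilistic structure provided by (A1)--(A3). Assumption (A1) guarantees smoothness and linear growth of $\nabla\Psi$, so the Langevin SDE \eqref{Langevin} admits a unique non-explosive strong solution. The drift condition (A3), $\langle \nabla_x\Psi_\theta(x),x\rangle\ge (2r+d)/\beta$ for $|x|\ge M_0$, is a Lyapunov condition that yields positive Harris recurrence and geometric ergodicity of the skeleton chain $\{X(i\eta)\}$ with invariant measure $\pi$ (the Gibbs law \eqref{Gibbs}). A key observation is that $E_\pi f_\theta(X)=0$: integration by parts against $\pi(x)\propto e^{-\beta\Psi_\theta(x)}$ gives $\int \nabla_x\Psi_\theta\, e^{-\beta\Psi_\theta}dx=-\beta^{-1}\int \nabla_x e^{-\beta\Psi_\theta}dx=0$ using the decay of $\Psi$. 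So $\theta$ is indeed the zero of the population estimating function.

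Next, I would prove consistency. By the Birkhoff ergodic theorem applied to the stationary (or, by Harris recurrence, asymptotically stationary) chain, $n^{-1}\sum_{i=1}^n f_\theta(X(t_i))\to E_\pi f_\theta(X)=0$ in probability, and $n^{-1}\sum_{i=1}^n \nabla_\theta'f_{\theta'}(X(t_i))\to E_\pi[\nabla_\theta' f_{\theta'}(X)]$ uniformly on compact neighborhoods of $\theta$ by the continuity in $\theta$ from (A2) and a standard bracketing/uniform ergodic argument. Since $A=E_\pi[\nabla_\theta'f_\theta(X)]$ is invertible by (A4), the inverse function theorem (applied to the sample estimating equation) produces a sequence of roots $\hat\theta_n\to\theta$ in probability.

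For the central limit theorem, I would Taylor expand
\begin{equation*}
0=\sum_{i=1}^n f_{\hat\theta_n}(X(t_i))=\sum_{i=1}^n f_\theta(X(t_i))+\Bigl[\sum_{i=1}^n \nabla_\theta'f_{\bar\theta_n}(X(t_i))\Bigr](\hat\theta_n-\theta),
\end{equation*}
with $\bar\theta_n$ on the segment between $\hat\theta_n$ and $\theta$. Dividing by $\sqrt n$ and inverting,
\begin{equation*}
\sqrt n(\hat\theta_n-\theta)=-\Bigl[\tfrac{1}{n}\sum_{i=1}^n\nabla_\theta'f_{\bar\theta_n}(X(t_i))\Bigr]^{-1}\tfrac{1}{\sqrt n}\sum_{i=1}^n f_\theta(X(t_i)).
\end{equation*}
The bracketed matrix converges in probability to $A$ by the ergodic theorem and continuity. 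The main CLT input is the Markov-chain central limit theorem of \cite{jones2004markov}: conditions (A3) (drift/geometric ergodicity) together with the $(2+\kappa)$-moment bound in (A4) with $\kappa>2/(r-2)$ are exactly what is needed to conclude $n^{-1/2}\sum_{i=1}^n f_\theta(X(t_i))\xrightarrow{d} N(0,\Sigma_f)$, with $\Sigma_f=\mathrm{Var}_\pi(f_\theta(X(0)))+2\sum_{i\ge 1}\mathrm{Cov}_\pi(f_\theta(X(0)),f_\theta(X(i\eta)))$. Combining via Slutsky gives $\sqrt n(\hat\theta_n-\theta)\xrightarrow{d} N(0,A^{-1}\Sigma_f(A')^{-1})=\gamma_\theta \mathbf Z$.

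The main obstacle is the CLT step for the dependent sample: verifying that the Langevin skeleton chain is sufficiently mixing to admit a $\sqrt n$-CLT with long-run variance $\Sigma_f$. This is precisely the content of the results cited in the statement, so the technical work reduces to checking that the paper's assumptions match the hypotheses of those results, i.e., that (A3) yields a geometric drift inequality for the $\eta$-skeleton and that the moment condition in (A4) on $f_\theta$ matches the $(2+\kappa)$-moment hypothesis. A secondary issue is that the initial distribution $\rho^0$ is not necessarily $\pi$, but Harris recurrence makes the empirical averages insensitive to the starting law, so the same limits hold without stationary initialization.
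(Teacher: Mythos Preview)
Your overall route---consistency of the Z-estimator, Taylor linearization, and a Markov-chain CLT from \cite{jones2004markov} applied to the $\eta$-skeleton---is exactly the paper's approach. There is one technical slip worth correcting: assumption (A3) requires only that $\langle\nabla\Psi(x),x\rangle$ exceed a \emph{constant} for large $|x|$, which is too weak to give geometric ergodicity. The paper instead invokes Proposition~4.12 of \cite{locherbach2013ergodicity} to obtain \emph{polynomial} ergodicity with rate $t^{-(k+1)}$ for any $k<r-2$, and then applies the polynomial-ergodicity CLT (Corollary~2 of \cite{jones2004markov}), which is precisely why the moment exponent must satisfy $\kappa>2/(r-2)$. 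Under geometric ergodicity that specific threshold on $\kappa$ would be unnecessary, so the form of (A4) should have tipped you off. Otherwise your argument, including the verification that $E_\pi f_\theta=0$ and the handling of non-stationary initialization via Harris recurrence, matches the paper.
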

The proof of the lemma can be found in the supplementary material.
%we can rewrite the CLT as
%\[\sqrt{n}(\hat{\theta}_n-\theta)\stackrel{d}{\rightarrow}\gamma_\theta\bZ\]
%where $\gamma_\theta=\Gamma_\theta^{1/2}$, $\bZ$ is a standard normal vector. 
Let $\tau(x)=\nabla_\theta'\Psi_\theta(x)\gamma_\theta$. Note that $\Psi(x)=\Psi_\theta(x)$, and $\hat{\Psi}_n(x)=\Psi_{\hat{\theta}_n}(x)$. Using Lemma \ref{prop-th} and applying the delta method we obtain  
\[\sqrt{n}(\hat{\Psi}_n(x)-\Psi(x))\stackrel{d}{\rightarrow}\tau(x)\bZ,
\quad 
 \sqrt{n} [\nabla \hat{\Psi}_n(x) - \nabla\Psi(x)] \stackrel{d}{\rightarrow}  \nabla\tau(x) \bZ.
\]
%and $\nabla \hat{\Psi}_n(x)$ is a consistent estimator $\Psi(x)$, 
For a compact set $D \subset \mR^d$, using the Skorokhod representation theorem we may have 
 \[   \sup_{x \in D} | \nabla \hat{\Psi}_n(x) - \nabla \Psi(x)  - n^{-1/2} \nabla\tau(x) \bZ  | =O_P(n^{-1} ) = o_P(n^{-1/2}). 
 \]
Finally, we can give the sketch of the proof of Theorem 4.1.

\subsection{Proof of Theorem 4.1}%\ref{thm-1}}
\label{appA3}
Assume w.l.o.g. $\beta=1$. For any $\zeta\in \cC_0^\iy(\mR_+\times\mR^d)$, assume the support of $\zeta$ is a subset of $[0,T]\times D$, where $T>1$, $D\subset\mR^d$ is a compact set. Let $N_\delta=\lfloor T/\delta\rfloor+1$, then by the proof of Theorem 5.1 in \cite{jordan1998variational} (see also the supplementary material), we get
\begin{equation}\label{eq-32}
\begin{split}
&\left|\sum_{k=1}^{N_\delta}\int_{\mR^d}\left\{(\hat{\rho}_n^{(k)}-\hat{\rho}_n^{(k-1)})\zeta(t_{k-1},x)+\delta(\nabla\hat{\Psi}_n\cdot\nabla\zeta(t_{k-1},x)-\Delta\zeta(t_{k-1},x))\hat{\rho}_n^{(k)}\right\}dx\right|\\
\leq\ &\frac{1}{2}\sup_{t,x}|\nabla^2\zeta|\sum_{k=1}^{N_\delta}d_W^2(\hat{\rho}_n^{(k-1)},\hat{\rho}_n^{(k)}).
\end{split}
\end{equation}
For fix $x$, we have 
\begin{equation}\label{eq-33}
%\begin{split}
\sum_{k=1}^{N_\delta}(\hat{\rho}_n^{(k)}-\hat{\rho}_n^{(k-1)})\zeta(t_{k-1},x)
%=&-\zeta(0,x)\hat{\rho}_n^{(0)}-\sum_{k=1}^{N_\delta}(\zeta(t_k,x)-\zeta(t_{k-1},x))\hat{\rho}_n^{(k)}\\
=-\zeta(0,x)\hat{\rho}_n^{(0)}-\int_0^{\iy}\pt_t\zeta(t,x)\hat{\rho}^n_\delta(t,x)dt.
%\end{split}
\end{equation}
Since all derivatives of $\zeta$ are continuous with compact support, and $\max_{x\in D}|\nabla\hat{\Psi}_n(x)|=O_P(1)$, 
\begin{equation}\label{eq-34}
\begin{split}
&\left|\int_{\mR^d}\left[\sum_{k=1}^{N_\delta}\left[\delta(\nabla\hat{\Psi}_n\cdot\nabla\zeta(t_{k-1},x)-\Delta\zeta(t_{k-1},x))\hat{\rho}_n^{(k)}\right]-\int_0^{\iy}(\nabla\hat{\Psi}_n\cdot\nabla\zeta-\Delta\zeta)\hat{\rho}_\delta^n dt\right]dx\right|\\
%\leq&\int_{\mR^d}\sum_{k=1}^{N_\delta}\int_{t_{k-1}}^{t_k}|\nabla\hat{\Psi}_n\cdot\nabla\zeta(t_{k-1},x)-\Delta\zeta(t_{k-1},x)-(\nabla\hat{\Psi}_n\cdot\nabla\zeta-\Delta\zeta)|\hat{\rho}_n^{(k)}dtdx\\
\leq&\int_{\mR^d}\sum_{k=1}^{N_\delta}\int_{t_{k-1}}^{t_k} C\delta\left(\max_{x\in D}|\nabla\hat{\Psi}_n(x)|+1\right)\hat{\rho}_n^{(k)}dtdx
%\leq&N_\delta C\delta^2O_P(1)\\
=O_P(\delta).
\end{split}
\end{equation}
Using the similar argument in \cite{jordan1998variational}, we have
\begin{equation}\label{eq-35}
\frac{1}{2}\sum_{k=1}^{N}d_W^2(\hat{\rho}_n^{(k-1)},\hat{\rho}_n^{(k)})
\leq\delta\left(|\mF_{\hat{\Psi}_n}(\hat{\rho}_n^{(0)})|+C(\cM(\hat{\rho}_n^{(N)})+1)^\alpha\right) 
\end{equation}
%\begin{equation}\label{eq-36}
and $\cM(\hat{\rho}_n^{(N)})\leq 2d_W^2(\hat{\rho}_n^{(0)},\hat{\rho}_n^{(N)})+2\cM(\hat{\rho}_n^{(0)})\leq 2N\sum_{k=1}^Nd_W^2(\hat{\rho}_n^{(k-1)},\hat{\rho}_n^{(k)})+2\cM(\hat{\rho}_n^{(0)})$. 
%\end{equation}
Denote $\cR=\cM(\hat{\rho}_n^{(N)})+1$, note that $|\mF_{\hat{\Psi}_n}(\hat{\rho}_n^{(0)})|=O_P(1)$ and $N\delta$ is bounded, we get $\cR\leq C \cR^\alpha+\cC$, where $\cC=O_P(1)$. By Young's inequality, we found $\cR=O_P(1)$. Combining this with (\ref{eq-32}),(\ref{eq-33}),(\ref{eq-34}), we get
\[
\left|-\int_{\mR^d}\zeta(0,x)\rho^0 dx-\int_{\mR^d}\int_0^{\iy}\hat{\rho}_\delta^n(\pt_t\zeta-\nabla\hat{\Psi}_n\cdot\nabla\zeta+\Delta\zeta) dtdx\right|
=O_P(\delta).
\]
%Since $\sup_{x \in D} | \nabla \hat{\Psi}_n(x) - \nabla \Psi(x) | = O_P(n^{-1/2})$, 
%\[\left|\int_{\mR_+\times\mR^d} \hat{\rho}_\delta^n(\nabla \hat{\Psi}_n(x) - \nabla \Psi(x))\cdot\nabla\zeta dtdx\right|\leq CT\sup_{x \in D} | \nabla \hat{\Psi}_n(x) - \nabla \Psi(x)| =O_P(n^{-1/2}),\]
Together with
%\begin{equation}\label{eq-38}
%\begin{split}
$\int_{\mR^d}\int_0^{\iy}\rho(\pt_t\zeta-\nabla\Psi\cdot\nabla\zeta+\Delta\zeta) dtdx
%=&-\int_{\mR^d}\zeta(0,x)\rho^0dx+\int_{\mR^d}\int_0^{\iy}(-\pt_t\rho+\Div(\rho\nabla\Psi)+\Delta\rho)\zeta dtdx\\
=-\int_{\mR^d}\zeta(0,x)\rho^0dx$, 
%\end{split}
%\end{equation}
and multiplying the equation by $\sqrt{n}$, we get
%we get
%\[\left|\int_{\mR^d}\int_0^{\iy}[\hat{\rho}_\delta^n-\rho](\pt_t\zeta-\nabla\Psi\cdot\nabla\zeta+\Delta\zeta) dtdx\right|=O_P(\delta+n^{-1/2}).\]
%For any $\xi\in\cC_0^\iy(\mR_+\times\mR^d)$, find $\zeta\in\cC_0^\iy(\mR_+\times\mR^d)$ such that
%\[\xi=\pt_t\zeta-\nabla\Psi\cdot\nabla\zeta+\Delta\zeta,\]
%then we get the desired result.

%\subsection{Proof of Theorem \ref{thm-320}}\label{appA4}
%Assume w.l.o.g. $\beta=1$, from the proof of Theorem \ref{thm-31}, for any $\zeta\in\cC_0^\iy(\mR_+,\mR^d)$, 
%\[
%\int_{\mR_+\times\mR^d}\hat{\rho}_\delta^n(\pt_t\zeta-\nabla\hat{\Psi}_n\cdot\nabla\zeta+\Delta\zeta) dtdx=-\int_{\mR^d}\zeta(0,x)\rho^0dx+O_P(\delta),
%\]
%together with (\ref{eq-38}), we get
%\begin{equation}\label{eq-td1}
%\begin{split}
% & \int_{\mR_+\times\mR^d}[\hat{\rho}_\delta^n-\rho](\pt_t \zeta- \nabla\Psi\cdot\nabla\zeta +\Delta\zeta))dt dx  \\
%=& \int_{\mR_+\times\mR^d}\hat{\rho}_\delta^n(\nabla\hat{\Psi}_n-\nabla\Psi) \cdot\nabla\zeta dtdx+O_P(\delta),
%\end{split}
%\end{equation}
%then
\begin{equation}\label{eq-t11}
%\begin{split}
 \int_{\mR_+\times\mR^d}\hat{V}_\delta^n(\pt_t \zeta- \nabla\Psi\cdot\nabla\zeta +\Delta\zeta))dt dx  \\
= \int_{\mR_+\times\mR^d}\hat{\rho}_\delta^n\sqrt{n}(\nabla\hat{\Psi}_n-\nabla\Psi) \cdot\nabla\zeta dtdx+O_P(\delta\sqrt{n}). \\
%\end{split}
\end{equation}
Assume the support of $\zeta$ is a subset of $[0,T]\times D$, where $D$ is a compact subset of $\mR^d$. Since $\sup_{x \in D} | \sqrt{n}(\nabla \hat{\Psi}_n(x) - \nabla \Psi(x))  - \nabla\tau(x) \bZ  |=o_P(1)$, we have
\begin{equation}\label{eq-t12}
\begin{split}
&\int_{\mR_+\times\mR^d}\hat{\rho}_\delta^n\sqrt{n}(\nabla\hat{\Psi}_n-\nabla\Psi) \cdot\nabla\zeta dtdx \\
%=& \int_{\mR_+\times\mR^d}\hat{\rho}_\delta^n\nabla\tau(x) \bZ \cdot\nabla\zeta dtdx+o_P(1)\int_{[0,T]\times D}\hat{\rho}_\delta^n dtdx\\
=& \int_{\mR_+\times\mR^d}\rho\nabla\tau(x) \bZ \cdot\nabla\zeta dtdx+\int_{\mR_+\times\mR^d}(\hat{\rho}_\delta^n-\rho)\nabla\tau(x) \bZ \cdot\nabla\zeta dtdx+o_P(1)
\end{split}
\end{equation}
since for any $\xi \in \cC^\infty_0(\mR_+\times \mR^d)$, $\left|  \int_{\mR_+ \times \mR^d} ( \hat{\rho}_\delta^n - \rho) \xi  dt dx \right| =o_P(1)$, so
%By Theorem \ref{thm-31}, 
\[\int_{\mR_+\times\mR^d}(\hat{\rho}_\delta^n-\rho)\nabla\tau(x) \bZ \cdot\nabla\zeta dtdx=\int_{\mR_+\times\mR^d}(\hat{\rho}_\delta^n-\rho)\nabla'\zeta\nabla\tau(x)dtdx\cdot\bZ=o_P(1)\cdot\bZ=o_P(1).\]
Combining (\ref{eq-t11}), (\ref{eq-t12}), since $\delta\sqrt{n}\ra 0$, we get
\begin{equation}\label{eq-t4}
\begin{split}
& \int_{\mR_+\times\mR^d}\hat{V}_\delta^n(\pt_t \zeta- \nabla\Psi\cdot\nabla\zeta +\Delta\zeta))dt dx  \\
\stackrel{p}{\ra}& \int_{\mR_+\times\mR^d}[\rho\nabla\tau(x) \bZ]\cdot\nabla\zeta dtdx\\
%=&\int_{\mR_+\times\mR^d}-\Div(\rho\nabla\tau(x) \bZ)\zeta dtdx\\
%=&\int_{\mR_+\times\mR^d}(-\pt_t V + \Div ( V \nabla \Psi ) + \Delta V)\zeta dtdx\\
=&\int_{\mR_+\times\mR^d}V(\pt_t \zeta-\nabla\Psi\cdot\nabla\zeta+\Delta\zeta)dtdx.
\end{split}
\end{equation}
For any $\xi\in\cC_0^\iy(\mR_+\times\mR^d)$, find $\zeta\in\cC_0^\iy(\mR_+\times\mR^d)$ such that
\[\xi=\pt_t\zeta-\nabla\Psi\cdot\nabla\zeta+\Delta\zeta,\]
then we get the desired result. The linear dependence is straightforward. 

By the Fokker-Planck equation of $\tilde{\rho}^n$, we can obtain the similar equality as (\ref{eq-t11}), where we can replace $\hat{\rho}_{\delta}^n$ with $\tilde{\rho}_n$ and remove the $O_P(\delta)$ term. Then using the same argument we can prove $\tilde{V}^n$ also converges to $V$ as $n\ra\iy$. 

\section{Sketch of the Proof in Section 5}%\ref{SEC-Asymptotic-online}}
\label{appB}

First we give an important lemma on $\hat{\rho}_\delta$:
\begin{lem}\label{thm-41}
Assume $\{\hat{\Psi}_k\}$ satisfies (A5), (A6), $\delta m^{1/(2-2\gamma)}>C>0$, then for any $\zeta \in \cC^\infty_0(\mR_+\times \mR^d)$, we have
\[
\left|-\int_{\mR^d}\zeta(0,x)\rho^0dx-\int_{\mR^d}\int_0^{\iy}\hat{\rho}_\delta(\pt_t\zeta-\nabla\hat{\Psi}_{\lceil t/\delta\rceil}\cdot\nabla\zeta+\Delta\zeta) dtdx\right|
=O_P(\delta).
\]
\end{lem}

\begin{proof}
Here we only give some hints on the proof. Assume the support is of $\zeta$ a subset of $[0,T]\times D$. Given (A5), by the similar argument in the proof of Theorem \ref{thm-1}, we get
\begin{equation}\label{eqs-1}
\begin{split}
&\left|-\int_{\mR^d}\zeta(0,x)\rho^0dx-\int_{\mR^d}\int_0^{\iy}\hat{\rho}_\delta(\pt_t\zeta-\nabla\hat{\Psi}_{\lceil t/\delta\rceil}\cdot\nabla\zeta+\Delta\zeta) dtdx\right| \\
\leq &\frac{1}{2}\sup_{t,x}|\nabla^2\zeta|\sum_{k=1}^{N_\delta}d_W^2(\hat{\rho}^{(k-1)},\hat{\rho}^{(k)})+O_P(\delta).
\end{split}
\end{equation}
Next, for any $1\leq N\leq N_\delta$, we have
\[%\begin{split}
\frac{1}{2}\sum_{k=1}^{N}d_W^2(\hat{\rho}^{(k-1)},\hat{\rho}^{(k)})
%\leq\delta\sum_{k=1}^{N}(\mF_{\hat{\Psi}_k}(\hat{\rho}^{(k-1)})-\mF_{\hat{\Psi}_k}(\hat{\rho}^{(k)}))\\
\leq\delta\left(\mF_{\hat{\Psi}_1}(\hat{\rho}^{(0)})-\mF_{\hat{\Psi}_{N}}(\hat{\rho}^{(N)})+\sum_{k=1}^{N-1}(\cE_{\hat{\Psi}_{k+1}}(\hat{\rho}^{(k)})-\cE_{\hat{\Psi}_k}(\hat{\rho}^{(k)}))\right).\]
%\end{split}\]
Since $\{\hat{\Psi}_k\}$ satisfies (A6), let
\[\cC=\max_{x\in\mR^d, 1\leq k< \lceil T/\delta\rceil}\frac{k^\gamma m^{1/2}}{(1+|x|^2)}|\hat{\Psi}_{k+1}(x)-\hat{\Psi}_k(x)|=O_P(1),\]
then
\[\left|\cE_{\hat{\Psi}_{k+1}}(\hat{\rho}^{(k)})-\cE_{\hat{\Psi}_k}(\hat{\rho}^{(k)})\right|\leq\int_{\mR^d}\cC m^{-1/2}k^{-\gamma}(1+|x|^2)\hat{\rho}^{(k)}(x)dx=\cC m^{-1/2}k^{-\gamma}(1+\cM(\hat{\rho}^{(k)})),\]
\begin{equation}\label{eq-thm5}
\frac{1}{2}\sum_{k=1}^{N}d_W^2(\hat{\rho}^{(k-1)},\hat{\rho}^{(k)})
\leq\delta\left(|\mF_{\hat{\Psi}_1}(\hat{\rho}^{(0)})|+C(\cM(\hat{\rho}^{(N)})+1)^\alpha+\cC m^{-1/2}\sum_{k=1}^{N-1}k^{-\gamma}(1+\cM(\hat{\rho}^{(k)}))\right)
\end{equation}
and
%\begin{equation}\label{eq-thm6}
$\cM(\hat{\rho}^{(N)})\leq 2d_W^2(\hat{\rho}^{(0)},\hat{\rho}^{(N)})+2\cM(\hat{\rho}^{(0)})\leq 2N\sum_{k=1}^Nd_W^2(\hat{\rho}^{(k-1)},\hat{\rho}^{(k)})+2\cM(\hat{\rho}^{(0)}).$
%\end{equation}
Denote $a_N=\cM(\hat{\rho}^{(N)})+1$. Note that $|\mF_{\hat{\Psi}_1}(\hat{\rho}^{(0)})|=O_P(1)$ and $N\delta$ is bounded. We get
$a_N\leq \cC(a_N^\alpha+ m^{-1/2}\sum_{k=1}^{N-1}k^{-\gamma}a_k)+\cC$, where $\cC=O_P(1)$ may differ from each other. By Young's inequality, we get
%\[a_N\leq \cC^{1/(1-\alpha)}+(\cC m^{-1/2}\sum_{k=1}^{N-1}k^{-\gamma}a_k+\cC)/(1-\alpha)\] 
%We update the constant $\cC$ and rewrite it as
$a_N\leq \cC+\cC m^{-1/2}\sum_{k=1}^{N-1}k^{-\gamma}a_k$.
Since $\delta m^{1/(2-2\gamma)}>C$, $N\leq T/\delta$, we have $N^{1-\gamma} m^{-1/2}=O(1)$. Let $S_N=\sum_{k=1}^N k^{-\gamma}a_k$, then
\[
%\begin{split}
S_N\leq \cC N^{-\gamma}+(1+\cC N^{-\gamma}m^{-1/2})S_{N-1} 
\leq \left[\prod_{k=1}^N(1+\cC k^{-\gamma}m^{-1/2})\right]\sum_{k=1}^N \cC k^{-\gamma} 
%&\leq \exp\left[\sum_{k=1}^N \cC k^{-\gamma}m^{-1/2}\right] \cC N^{1-\gamma} \\
\leq \cC N^{1-\gamma}
%\end{split}
\]
then
$a_N\leq \cC+\cC m^{-1/2} S_N=O_P(1)$
and
\begin{equation}\label{eq-thm7}
\frac{1}{2}\sum_{k=1}^{N_\delta}d_W^2(\hat{\rho}^{(k-1)},\hat{\rho}^{(k)})\leq \delta(O_P(1)+\cC m^{-1/2} S_{N_\delta-1})=O_P(\delta),
\end{equation}

which gives the desired result.
\end{proof}

The next lemma will control the difference between  $\nabla_x\Psi_{\hat{\theta}^k}(x)$ and $\nabla_x\Psi_\theta(x)$, and its proof can be found in the supplementary material.
\begin{lem}\label{lem5-1}
Assume (A7) and $k=o(m^{1+2\alpha})$, we have
\begin{equation}\label{eqq-51}
\sqrt{mk}(\nabla_x\Psi_{\hat{\theta}^k}(x)-\nabla_x\Psi_\theta(x))-\sqrt{k}\nabla\tau(x)\bar{\bZ}_k\xrightarrow{L_2}0
\end{equation}
uniformly for $x\in D$, where $D$ is a compact set.
\end{lem}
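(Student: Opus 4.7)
The plan is to reduce the lemma to an $L_2$ coupling of $\sqrt{mk}(\hat{\theta}^k-\theta)$ with the Gaussian average $\sqrt{k}\,\gamma_\theta\bar{\bZ}_k$, and then pass this coupling through a first-order Taylor expansion of $\nabla_x\Psi_\theta(x)$ in $\theta$. For each $x\in D$, I would first write
\[
\nabla_x\Psi_{\hat{\theta}^k}(x)-\nabla_x\Psi_\theta(x) = \nabla_\theta'\nabla_x\Psi_\theta(x)\,(\hat{\theta}^k-\theta) + R_k(x),
\]
where the quadratic remainder satisfies $|R_k(x)|\leq C_D\,|\hat{\theta}^k-\theta|^2$ uniformly on $D$ by the smoothness supplied by (A2). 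Since the $Z$-estimator $\hat{\theta}^k$ built from $mk$ observations satisfies $E|\hat{\theta}^k-\theta|^2=O((mk)^{-1})$ (a standard byproduct of (A2)--(A4)), the remainder contributes $\sqrt{mk}\,R_k(x)=o_{L_2}(1)$ uniformly in $x\in D$. Hence it suffices to prove $\sqrt{mk}(\hat{\theta}^k-\theta)-\sqrt{k}\,\gamma_\theta\bar{\bZ}_k\to 0$ in $L_2$ and then left-multiply by $\nabla_\theta'\nabla_x\Psi_\theta(x)$ to recover $\nabla\tau(x)=\nabla_\theta'\nabla_x\Psi_\theta(x)\gamma_\theta$.

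To obtain this coupling I would linearize the estimating equation $\sum_{j=1}^k S_m^{(j)}(\hat{\theta}^k)=0$, where $S_m^{(j)}(\theta)=\sum_{i=1}^m\nabla_x\Psi_\theta(X^{(j)}(i\eta))$, in $\theta$:
\[
\sqrt{mk}(\hat{\theta}^k-\theta) = -A^{-1}\cdot\frac{1}{\sqrt{k}}\sum_{j=1}^k m^{-1/2}S_m^{(j)}(\theta)+o_{L_2}(1),
\]
with $A=E_\pi[\nabla_\theta\nabla_x\Psi_\theta(X)]$ invertible by (A4); the error is controlled via the ergodic theorem applied within each independent batch together with the $2+\kappa$ moment bound in (A4). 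Since the batches are independent, I would then apply a martingale CLT with a Skorokhod-type embedding for each Markov batch to produce, on a possibly enlarged probability space, i.i.d.\ standard normal vectors $\bZ_j$ satisfying
\[
-A^{-1}m^{-1/2}S_m^{(j)}(\theta)=\gamma_\theta\bZ_j+\varepsilon_j^{(m)},\qquad E|\varepsilon_j^{(m)}-E\varepsilon_j^{(m)}|^2\to 0 \text{ uniformly in }j.
\]
Assumption (A7) gives $|E[S_m^{(j)}(\theta)]|=O(m^{-\alpha})$, so the accumulated bias of $k$ independent batches rescaled at the $\sqrt{mk}$ level is $O((k/m^{1+2\alpha})^{1/2})$, which vanishes exactly under $k=o(m^{1+2\alpha})$; the centered variance of $k^{-1/2}\sum_j\varepsilon_j^{(m)}$ equals the single-batch variance, which is $o(1)$. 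Averaging the coupling identities over $j$ and combining with the Taylor step then completes the argument.

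The main obstacle is Step 3: upgrading the batchwise distributional CLT to an $L_2$ coupling with an explicit i.i.d.\ sequence of standard normals. The key ingredients are the $(2+\kappa)$-moment bound in (A4), which furnishes uniform integrability of the squared coupling errors, and (A7), which quantifies how fast the Markov-chain bias decays so that its accumulation across $k$ batches remains negligible at the $(mk)^{-1/2}$ scale precisely when $k=o(m^{1+2\alpha})$. Uniformity in $x\in D$ is then automatic, since $\nabla_\theta'\nabla_x\Psi_\theta(x)$ is continuous and hence bounded on the compact set $D$.
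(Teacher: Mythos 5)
Your proposal is correct and follows essentially the same route as the paper's proof: a batchwise Skorokhod/CLT coupling of the score sums with i.i.d.\ standard normals, control of the accumulated bias via (A7) under $k=o(m^{1+2\alpha})$ and of the variance via independence across batches, linearization of the estimating equation to transfer the coupling to $\sqrt{mk}(\hat{\theta}^k-\theta)$, and a Taylor expansion in $\theta$ made uniform over the compact set $D$. The only difference is cosmetic ordering (you expand $\nabla_x\Psi_\theta(x)$ first and couple afterwards, while the paper couples the scores first), and your remainder/second-moment claims are no more demanding than the corresponding steps the paper itself uses.
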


Now we can give the sketch of the proof of Theorem 5.1.%\ref{thm-2},

\subsection{Proof of Theorem 5.1}%\ref{thm-2}}
By Lemma \ref{thm-41}, we get
\begin{equation}\label{eq520n2}
\begin{split}
& \int_{\mR_+\times\mR^d}\hat{V}_{\delta,1}^m(\pt_t \zeta- \nabla\Psi\cdot\nabla\zeta +\Delta\zeta))dt dx  \\
=& \int_{\mR_+\times\mR^d}\rho(m/\delta)^{1/2}(\nabla_x\hat{\Psi}_{\lceil t/\delta\rceil}-\nabla\Psi) \cdot\nabla\zeta dtdx+\\
&\int_{\mR_+\times \mR^d}[\hat{\rho}_{\delta,1}^{m}-\rho](m/\delta)^{1/2}(\nabla_x\hat{\Psi}_{\lceil t/\delta\rceil}-\nabla\Psi) \cdot\nabla\zeta dtdx+O_P((m\delta)^{1/2}).
\end{split}
\end{equation}

By Lemma \ref{lem5-1}, since $T/\delta=o(m^{1+2\alpha})$, for any $\ve>0$, $\exists N$, when $m>N$ and $k\leq T/\delta$, we have
$E\left|m^{1/2}(\nabla_x\hat{\Psi}_{k}-\nabla\Psi)-\nabla\tau(x)\bar{\bZ}_k\right|<\ve k^{-1/2}$.
Then
\begin{equation}\label{eq520n3}
\begin{split}
&E\left|\int_{[0,T]\times D}\rho(m/\delta)^{1/2}(\nabla_x\hat{\Psi}_{\lceil t/\delta\rceil}-\nabla\Psi) \cdot\nabla\zeta dtdx-\sum_{k=1}^{\lceil T/\delta\rceil}\int_{t_{k-1}}^{t_k}\int_D[\rho\delta^{-1/2}\nabla\tau(x)\bar{\bZ}_k]\cdot\nabla\zeta dxdt\right|\\
\leq&\sum_{k=1}^{\lceil T/\delta\rceil}\int_{t_{k-1}}^{t_k}\int_D\rho\delta^{-1/2}E\left|m^{1/2}(\nabla_x\hat{\Psi}_{k}-\nabla\Psi)-\nabla\tau(x)\bar{\bZ}_k\right| \cdot|\nabla\zeta| dxdt
%\leq&\sum_{k=1}^{\lceil T/\delta\rceil}C\delta^{1/2}\ve k^{-1/2}\\
\leq C'\ve.
\end{split}
\end{equation}

Let $W_\delta(t)=\sqrt{\delta}\sum_{k=1}^{\lceil t/\delta\rceil}\bZ_k$. Since $\{\bZ_k\}$ are i.i.d standard normal vectors, there exists a $d$-dimensional Brownian motion $\{W(t)\}$, such that $\{W_\delta(t)\}$ weakly converges to $\{W(t)\}$ on $\cC([0,T])$. By Skorokhod's representation theorem, we can assume the almost surely convergence. Then as $\delta\ra 0$, 
\begin{equation}\label{eq44-4}
\begin{split}
& \sum_{k=1}^{\lceil T/\delta\rceil}\int_{t_{k-1}}^{t_k}\int_D[\rho\delta^{-1/2}\nabla\tau\bar{\bZ}_k]\cdot\nabla\zeta dxdt  \\
=& \sum_{k=1}^{\lceil T/\delta\rceil}\int_{t_{k-1}}^{t_k}\left\{\int_D(\nabla\zeta)'\rho\nabla\tau(x)dx\right\}\frac{W_\delta(k\delta)}{k\delta}dt\\
\ra&\int_0^T\left\{\int_D(\nabla\zeta)'\rho\nabla\tau(x)dx\right\}t^{-1}W(t)dt\\
%=&\int_{\mR_+\times\mR^d}-\Div(\rho\nabla\tau(x)t^{-1}W(t))\zeta dtdx\\
%=&\int_{\mR_+\times\mR^d}(-\pt_t V_1 + \Div ( V_1 \nabla \Psi ) + \Delta V_1)\zeta dtdx\\
=&\int_{\mR_+\times\mR^d}V_1(\pt_t \zeta-\nabla\Psi\cdot\nabla\zeta+\Delta\zeta)dtdx.
\end{split}
\end{equation}
The convergence in the third line of (\ref{eq44-4}) is true since both integrals are negligible around 0 given (A8) and we can show the convergence of the integral on $t\in[\eta,T]$. The second term in (\ref{eq520n2}) is a higher-order term. So we have
\[\int_{\mR_+\times \mR^d}[\hat{\rho}_{\delta,1}^{m}-\rho](m/\delta)^{1/2}(\nabla_x\hat{\Psi}_{\lceil t/\delta\rceil}-\nabla\Psi) \cdot\nabla\zeta dtdx=o_p(1).\]
Then combining (\ref{eq520n2}), (\ref{eq520n3}), (\ref{eq44-4}), as $\delta m\ra 0$, we get
\[\int_{\mR_+\times\mR^d}(\hat{V}_{\delta,1}^m-V_1)(\pt_t \zeta- \nabla\Psi\cdot\nabla\zeta +\Delta\zeta))dtdx=o_p(1).\]
Next we will show the existence and uniqueness of the solution of $V_1$. Consider the solution to the Fokker-Planck equation:
\[
\pt_t P=\Div(P\nabla\Psi)+\Delta P, \ \ \ P(0,\cdot)=\delta_0.
\]
Then the solution with initial distribution $\rho_0$ can be written as
\begin{equation}\label{eqa-0}
\rho(t,x)=\int_{\mR^d} P(t,x-y)\rho_0(y)dy,
\end{equation}
and we can give the explicit expression of $V_1$:
\begin{equation}\label{eqa-1}
V_1(t,x)=\int_0^t\int_{\mR^d} P(t-s,x-y)\Div[\rho(s,y)\nabla \tau(y)s^{-1}W(s)]dsdy.
\end{equation}
We can show that $V_1$ is a.s. finite. By (A8), we can choose $a<1/2$, such that for any $s<t$,  
\[\int_{\mR^d} |\Div[\rho(s,y)\nabla \tau(y)]|dy=O(s^{-a}).\]
Then we can show 
\[\int_{\mR^d} P(t-s,x-y)|\Div[\rho(s,y)\nabla \tau(y)]|dy=O(s^{-a}),\]
since when $t-s>\eta>0$, $P(t-s,x-y)$ is bounded, and when $s\ra t$, the LHS converges to $|\Div[\rho(t,x)\nabla \tau(x)]|$ which is bounded. Let $b\in (a,1/2)$, then $\sup_{[0,T]}\frac{|W(s)|}{s^b}=O_P(1)$, and
\[
\begin{aligned}
&\left|\int_0^t\left[\int_{\mR^d} P(t-s,x-y)\Div[\rho(s,y)\nabla \tau(y)]dy\right]s^{-1}W(s)ds\right| \\
\leq &C\int_0^t s^{-a}s^{-1}\cC s^b ds 
=O_P(1).
\end{aligned}
\]
If we replace $P$ with $\pt_t P, \nabla P, \Delta P$ in (\ref{eqa-1}), the resulting integral is also a.s. finite. Then the direct calculation gives the following:
\[
\begin{aligned}
\pt_t V_1&=\Div[\rho(t,x)\nabla \tau(x)t^{-1}W(t)]+\int_0^t\int_{\mR^d} \pt_t P(t-s,x-y)\Div[\rho(s,y)\nabla \tau(y)s^{-1}W(s)]dsdy \\
&=\Div[\rho(t,x)\nabla \tau(x)t^{-1}W(t)]+\Div(V_1\nabla\Psi)+\Delta V_1.
\end{aligned}
\]
Finally, the difference of two solutions of $V_1$ will solve the original Fokker-Planck equation with $\rho_0=0$. By the explicit solution of the Fokker-Planck equation (\ref{eqa-0}), the solution will be zero when the initial value is zero, so the solution must be unique, see also \cite{otto1996l1}.

If we only assume (A7),(A8), then by the Fokker-Planck equation of $\tilde{\rho}_1^m$, we can obtain the following equality, which is similar to (\ref{eq520n3}) without the $O_P(\delta)$ term.
\begin{equation*}\label{eq520nn}
%\begin{split}
 \int_{\mR_+\times\mR^d}(\tilde{\rho}_{1}^{m}-\rho)(\pt_t\zeta-\nabla\Psi\cdot\nabla\zeta+\Delta\zeta) dtdx
=\int_{\mR_+\times \mR^d}\tilde{\rho}_{1}^{m}(\nabla_x\hat{\Psi}_{\lceil t/\delta\rceil}-\nabla\Psi) \cdot\nabla\zeta dtdx.
%\end{split}
\end{equation*}
Then using the same argument we can prove $\tilde{V}_1^m$ also converges to $V_1$ as $\delta m^{1+2\alpha}\ra\iy$ (Note that here we do not need to assume $\delta m\ra 0$ and $\delta m^{1/(2-2\gamma)}>C>0$). Furthermore, if we do not assume (A7) (only assume (A8)), then in (\ref{eq520n3}), we have
\[m^{1/2}(\nabla_x\hat{\Psi}_{k}-\nabla\Psi)-\nabla\tau(x)\bar{\bZ}_k=O_P(m^{-1/2}),\]
then as $\delta\ra 0, m\ra\iy, \delta m\ra\iy$, $\tilde{V}_1^m$ still converges to $ V_1$.

%\subsection{Proof of Proposition \ref{prop-eu}}

%\begin{remark}
%Notice that $\nabla\tau(x)$ is a matrix, so $\Div(\rho\nabla\tau(x))$ can be interpreted as the operator
%\[(\pt_{x_1},\pt_{x_2},\dots,\pt_{x_d})(\rho\nabla\tau(x))\]
%which is a row vector, thus can multiply by the column vector $W$.

%We cannot prove the existence of the solution using stochastic equicontinuity, consider the solution to the following SPDE:
%\begin{equation}\label{eqr2}
%\pt_t V_\eta(t,x) = \Div \left( V_\eta \nabla \Psi \right) +   \Div\left( \rho \nabla  \tau(x) (t\vee\eta)^{-1}W(t)  \right) + \beta^{-1} \Delta V_\eta , V_\eta(0,x)=0.
%\end{equation}
%when $\eta$ goes to zero, $\eta^{-1}W(\eta)$ is not bounded in probability, so $\pt_t V_\eta(\eta,x)$ is not bounded neither. So we cannot expect the stochastic equicontinuity result like:
%\[\sup_{t,x}|\pt_t V_\eta(t,x)|=O_P(1)\]
%\end{remark}

\subsection{Sketch of the proof of Theorem 5.2}%\ref{prop-44}}
\label{appB6}
The proof is very similar to that of Theorem 5.1. Here we show the main difference. Let 
$\epsilon_{m,j}=\sqrt{m}(\nabla_x\Psi_{\hat{\theta}^{(j)}_m}(x)-\nabla_x\Psi_\theta(x))-\nabla\tau(x)\bZ_j$, then by (A9),
\[|E\epsilon_{m,j}|=O(m^{-1/2-\alpha}), E|\epsilon_{m,j}|^2=o(1).\]
Since $\epsilon_{m,j}$'s are independent for different $j$. When $k=o(m^{1+2\alpha})$, a similar calculation gives
\[E\left|\sqrt{mk}(\nabla_x\hat{\Psi}_m^{(k)}(x)-\nabla_x\Psi_\theta(x))-\sqrt{k}\nabla\tau(x)\bar{\bZ}_k\right|^2=E\left|k^{-1/2}\sum_{j=1}^k\epsilon_{m,j}\right|^2=o(1).\]
Then as $\delta\ra 0, m\ra\iy, \delta m^{1+2\alpha}\ra\iy$, we have
%First fix $\delta$, using the same argument in the proof of Theorem \ref{thm-320}, we find that, as $n\ra\iy$,
\begin{equation}\label{eq44-2}
\begin{split}
&E\left|\int_{[0,T]\times D}\rho(m/\delta)^{1/2}(\nabla\hat{\Psi}_m^{(\lceil t/\delta\rceil)}-\nabla\Psi) \cdot\nabla\zeta dtdx-\sum_{k=1}^{\lceil T/\delta\rceil}\int_{t_{k-1}}^{t_k}\int_D[\rho\delta^{-1/2}\nabla\tau(x)\bar{\bZ}_k]\cdot\nabla\zeta dxdt\right| \\
\leq&\sum_{k=1}^{\lceil T/\delta\rceil}\int_{t_{k-1}}^{t_k}\int_D\rho\delta^{-1/2}E\left|m^{1/2}(\nabla\hat{\Psi}_m^{(k)}-\nabla\Psi)-\nabla\tau(x)\bar{\bZ}_k\right| \cdot|\nabla\zeta| dxdt\\
\leq&\sum_{k=1}^{\lceil T/\delta\rceil}\cC\delta^{1/2}o(k^{-1/2})
=o(1).
\end{split}
\end{equation}
%where $\bar{\bZ}_k=\frac{1}{k}\sum_{j=1}^k\bZ_j$. Next, for fixed $\delta$, since $\nabla\hat{\Psi}_n^{(k)}(x)$ a.s. converges to $\nabla\Psi(x)$ for $k\leq \lceil T/\delta\rceil$, $\hat{\rho}_{\delta,2}^{n*}(t,x)$ will pointwisely-converge to $\rho(t,x)$ almost surely, by Scheffe's theorem, $\int_{[0,T]\times\mR^d}|\hat{\rho}_{\delta,2}^{n*}-\rho|dtdx\stackrel{a.s.}{\ra} 0$. Since $\delta$ is fixed, $\max_{x \in D,k\leq \lceil T/\delta\rceil} \sqrt{n}| \nabla\hat{\Psi}_n^{(k)}(x) - \nabla \Psi(x) | = O_P(1)$, then we can show that
The rest of the proof follows the same framework as the proof of Theorem 5.1.

\end{appendix}

\bibliographystyle{imsart-number}    % Style BST file (imsart-number.bst or imsart-nameyear.bst)
\bibliography{myReferences-ML}       % Bibliography file (usually '*.bib')

\newpage
\setcounter{section}{0}
\setcounter{equation}{0}
\def\theequation{S\arabic{section}.\arabic{equation}}
\def\thesection{S\arabic{section}}

{\center{\bf Supplementary Materials for ``Computational and Statistical Asymptotic Analysis of the JKO Scheme for Iterative Algorithms to Update Distributions''}}
\section{Proof in Section 4}%\ref{SEC-Asymptotic-offline}}
\label{appA} 
To study the asymptotic property of $\hat{\rho}_\delta^n(t,x)$, first we need to examine the asymptotic property of $\hat{\theta}_n$. Note that the estimator $\hat{\theta}_n$  satisfies 
\begin{equation}\label{Esti}
\sum_{i=1}^n\nabla_x\Psi_{\hat{\theta}_n}(X(i\eta))=0 ,
\end{equation}
where $X(t)$ follows the Langevin equation. 
\begin{equation}\label{Lang}
dX(t)=-\nabla\Psi(X(t))dt+\sqrt{2/\beta}dB_t, X(0)=X_0
\end{equation}
%When $n\ra\iy$, the estimator $\hat{\theta}_n$ is consistent for $\theta$, and 
We can establish the following asymptotic normality for $\hat{\theta}_n$. 

\begin{lem}\label{prop-th}
Assume $\Psi_\theta(x)$ satisfies (A1), let $f_\theta(x)=\nabla_x\Psi_\theta(x)$ satisfies (A2), (A3), (A4), then we have
\begin{equation} \label{asymp-theta} 
\sqrt{n}(\hat{\theta}_n-\theta)\stackrel{d}{\rightarrow} \gamma_\theta \, \bZ , % N(0,\Gamma_\theta)
\end{equation} 
where $\bZ$ denotes a standard normal vector, $\gamma_\theta=\Gamma_\theta^{1/2}$, 
$\Gamma_\theta=A^{-1}\Sigma_f(A')^{-1}$, $A=E_\pi(\nabla_\theta'f_\theta(X))$, $\Sigma_f={\rm Var}_\pi(f(X(0)))+2\sum_{i=1}^\infty\cov_\pi(f(X(0),f(X(i\eta)))$, and the variance 
${\rm Var}_\pi$ and covariance $\cov_\pi$ are evaluated under the situation that $X(0)$ follows the invariant distribution $\pi$ and the Langevin diffusion 
$X(t)$ evolves according to (\ref{Lang}). 
\end{lem}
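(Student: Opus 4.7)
The plan is to treat $\hat{\theta}_n$ as a Z-estimator defined by $G_n(\theta) := n^{-1}\sum_{i=1}^n f_\theta(X(i\eta)) = 0$, and to derive its asymptotic normality via the standard linearization
\[
\sqrt{n}(\hat{\theta}_n - \theta) = -[H_n(\tilde{\theta}_n)]^{-1}\sqrt{n}\, G_n(\theta), \qquad H_n(\theta) := n^{-1}\sum_{i=1}^n \nabla_\theta' f_\theta(X(i\eta)),
\]
with $\tilde{\theta}_n$ on the segment between $\theta$ and $\hat{\theta}_n$. The argument then decomposes into three ingredients: (i) consistency $\hat{\theta}_n \stackrel{P}{\ra} \theta$; (ii) a law of large numbers $H_n(\tilde{\theta}_n) \stackrel{P}{\ra} A$; and (iii) a central limit theorem $\sqrt{n}\, G_n(\theta) \stackrel{d}{\ra} N(0, \Sigma_f)$. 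Combining them through Slutsky's theorem will yield $\sqrt{n}(\hat{\theta}_n - \theta) \stackrel{d}{\ra} -A^{-1} N(0,\Sigma_f) = N(0,\Gamma_\theta)$, i.e.\ $\gamma_\theta \bZ$ with $\bZ \sim N(0,I)$ and $\gamma_\theta = \Gamma_\theta^{1/2}$.

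First I would verify the score identity $E_\pi f_\theta(X) = 0$: under (A1), the Gibbs density $\pi(x) \propto e^{-\beta \Psi_\theta(x)}$ decays fast enough that integration by parts yields
\[
\int \nabla_x \Psi_\theta(x)\, \pi(x)\, dx \;=\; -\frac{1}{\beta Z}\int \nabla_x e^{-\beta\Psi_\theta(x)}\,dx \;=\; 0.
\]
Next, (A1) together with the drift/Lyapunov condition (A3), namely $\langle \nabla_x \Psi_\theta(x), x\rangle \geq (2r+d)/\beta$ for $|x|\geq M_0$, places the diffusion and its $\eta$-skeleton chain $\{X(i\eta)\}$ into the framework of Locherbach (2013), giving geometric (or at least polynomial) ergodicity toward $\pi$. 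The ergodic theorem then gives $G_n(\theta) \stackrel{P}{\ra} 0$; combined with the continuity of $\nabla_\theta' f_\theta$ in $\theta$ from (A2) and the invertibility of $A$ from (A4), this yields consistency of $\hat{\theta}_n$ by the classical Z-estimator argument, and a uniform LLN in a shrinking neighborhood of $\theta$ lifts this to $H_n(\tilde{\theta}_n) \stackrel{P}{\ra} A$.

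The main obstacle is ingredient (iii), the CLT for $\sqrt{n}\, G_n(\theta)$. Because $\{X(i\eta)\}$ is a stationary Markov chain sampled at a fixed step $\eta$, I would invoke a Markov chain CLT of the type developed in Jones (2004). The key issue is the summability of the autocovariances whose sum yields
\[
\Sigma_f \;=\; \mbox{Var}_\pi(f_\theta(X(0))) + 2\sum_{i=1}^{\infty} \cov_\pi\!\bigl(f_\theta(X(0)), f_\theta(X(i\eta))\bigr);
\]
this is precisely where the moment assumption (A4), $E_\pi|f_\theta(X)|^{2+\kappa}<\infty$ with $\kappa > 2/(r-2)$, is calibrated against the polynomial mixing rate implied by (A3). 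This trade-off between the number of moments and the rate of mixing is the classical hypothesis ensuring both finiteness of $\Sigma_f$ and validity of the CLT with $\Sigma_f$ as the limiting variance.

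A final technical point concerns the non-stationary start: if $X_0$ is not drawn from $\pi$, I would couple $\{X(i\eta)\}$ with its stationary version after a burn-in of size $o(\sqrt{n})$. The drift condition (A3) guarantees that the moments of the non-stationary chain are controlled uniformly in $n$ and that the coupling error contributes an $o_P(1)$ term to both $G_n$ and $H_n$, so neither the LLN nor the CLT is affected and the conclusion (\ref{asymp-theta}) follows.
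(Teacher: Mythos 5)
Your proposal is correct and follows essentially the same route as the paper's proof: the score identity $E_\pi[\nabla_x\Psi_\theta(X)]=0$ under the Gibbs measure, polynomial ergodicity of the $\eta$-skeleton via Locherbach's drift/ellipticity framework, the Jones (2004) Markov chain CLT with the $(2+\kappa)$-moment versus mixing-rate tradeoff giving $\sqrt{n}\,G_n(\theta)\stackrel{d}{\ra}N(0,\Sigma_f)$, and a Taylor/Slutsky step with $n^{-1}\sum_i\nabla_\theta' f_\theta(X(i\eta))\stackrel{P}{\ra}A$ yielding $\Gamma_\theta=A^{-1}\Sigma_f(A')^{-1}$. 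The only cosmetic difference is that you treat the non-stationary start by an explicit burn-in coupling and spell out consistency, whereas the paper simply invokes that the cited CLT holds for an arbitrary initial distribution and linearizes the estimating equation directly.
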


\begin{proof}
The Proposition 4.12 in \cite{locherbach2013ergodicity} shows that for any multivariate diffusion process satisfying:
\begin{equation}\label{diffusion}
dX(t)=b(X(t))dt+\sigma(X(t))dB_t, X(0)=x\in\mR^d
\end{equation}
Assume 

(A2*): $b$ and $\sigma$ are in $C^2$ and of linear growth and $\sigma$ is bounded. 

(A3*) (Uniform ellipticity): Denote $a(x)=\sigma(x)\sigma'(x)$, there exists $\lambda_->0$, such that $\langle a(x)y,y \rangle\geq \lambda_-|y|^2$ for all $y\in\mR^d$.

(A4*) (Drift condition): There exists $M_0>0$ and $r>2$ such that
\[\langle b(x),x\rangle+\frac{1}{2}tr[a(x)]\leq -r\left[\left<a(x)\frac{x}{|x|},\frac{x}{|x|}\right>\right],\ {\rm for\ all}\ |x|\geq M_0.\]

Then for any fixed $k<r-2, p\in(4+2k,2r)$, we have $E_\pi|X|^p<\infty$, and polynomial ergodicity:
\[\|P_t(x,\cdot)-\pi\|_{TV}\leq Ct^{-(k+1)}(1+|x|^p),\]
where $P_t(x,\cdot)$ is the distribution of $X(t)$ starting at $x$, $\pi$ is the invariant distribution of $X(t)$.

Then markov chain $\{X(0),X(\eta),X(2\eta),X(3\eta),\dots\}$ has polynomial ergodicity:
\begin{equation}\label{eq-ergo}
\|P^j(x,\cdot)-\pi\|_{TV}\leq C\eta^{-(k+1)}j^{-(k+1)}(1+|x|^p)
\end{equation}
The Corollary 2 in \cite{jones2004markov} shows that if a markov chain has polynomial ergodicity, say
\[\|P^j(x,\cdot)-\pi\|_{TV}\leq Cj^{-m}M(x),\]
if $E_{\pi}M(X)<\iy$, and there exists $\kappa>\frac{2}{m-1}$, such that $E_\pi|f(X)|^{2+\kappa}<\infty$ for some function $f$, then the CLT follows:
\[\sqrt{n}(\bar{f}_n-E_\pi f)\stackrel{d}{\rightarrow}N(0,\Sigma_f)\]
where $\bar{f}_n=n^{-1}\sum_{i=1}^n f(X(i\eta))$, $\Sigma_f=\mbox{Var}_\pi(f(X(0)))+2\sum_{i=1}^\infty \cov_\pi(f(X(0)),f(X(i\eta)))$.
Here in the expression of $\bar{f}_n$, $X(t)$ can initiate at any random vector---that is, $X(t)$ satisfies (\ref{diffusion}) with $X(0)=X_0$, the initial value $X_0$ is a random vector.

Now recall the Langevin equation (\ref{Lang}), let us check assumptions (A2*),(A3*),(A4*) for this equation, here $b(x)=-\nabla\Psi(x)$, $\sigma(x)\equiv \sqrt{2/\beta}I_d$, (A2*) requires that $\nabla\Psi(x)$ is in $C^2$ of linear growth, (A3*) is evident, since $a(x)=2/\beta I_d$, (A4*) requires
\[
\langle \nabla\Psi(x),x\rangle\geq \frac{2r+d}{\beta}\ {\rm for}\ |x|\geq M_0,
\]
which is exactly (A3). Thus, if $f(x)=\nabla_x\Psi_\theta(x)$ satisfies (A2) and (A3), then $X(t)$ has polynomial ergodicity (\ref{eq-ergo}) and $E_\pi|X|^p<\iy$. If we further assume (A4), then
\[\sqrt{n}(\bar{f}_n-E_\pi f)\stackrel{d}{\rightarrow}N(0,\Sigma_f)\]
Recall the invariant distribution of $X(t)$:
\[
\pi(x)=\frac{1}{Z}\exp[-\beta\Psi(x)], Z=\int_{\mR^d}\exp[-\beta\Psi(x)],
\]
we have $\log(\pi(x))=-\beta\Psi(x)-\log(Z)$, $\nabla\log(\pi(x))=-\beta\nabla\Psi(x)$,
\[E_\pi[\nabla\Psi(X)]=-\frac{1}{\beta}E_\pi[\nabla\log(\pi(X))]=-\frac{1}{\beta}\int_{\mR^d}\frac{\nabla \pi(x)}{\pi(x)}\pi(x)dx=0\]
That is, $E_\pi f(X)=0$, then
\[n^{-1/2}\sum_{i=1}^n\nabla_x\Psi_\theta(X(i\eta))\stackrel{d}{\rightarrow}N(0,\Sigma_f)\]
Together with (\ref{Esti}), we have
\[n^{-1/2}\sum_{i=1}^n\left[\nabla_x\Psi_{\hat{\theta}_n}(X(i\eta))-\nabla_x\Psi_\theta(X(i\eta))\right]\stackrel{d}{\rightarrow}N(0,\Sigma_f)\]
When $\nabla_\theta'\nabla_x\Psi_\theta(x)$ is continuous,  $\nabla_x\Psi_{\hat{\theta}_n}(X(i\eta))-\nabla_x\Psi_\theta(X(i\eta))=\nabla_\theta'\nabla_x\Psi_\theta(X(i\eta))(\hat{\theta}_n-\theta)+o_p(\hat{\theta}_n-\theta)$,
then we have
\[n^{-1}\sum_{i=1}^n\nabla_\theta'\nabla_x\Psi_\theta(X(i\eta))\cdot \sqrt{n}(\hat{\theta}_n-\theta)\stackrel{d}{\rightarrow}N(0,\Sigma_f),\]
Note that $n^{-1}\sum_{i=1}^n\nabla_\theta'\nabla_x\Psi_\theta(X(i\eta))\stackrel{p}{\ra}E_\pi[\nabla_\theta'\nabla_x\Psi_\theta(X)]=A$, so
\[\sqrt{n}(\hat{\theta}_n-\theta)\stackrel{d}{\rightarrow}N(0,\Gamma_\theta),\]
where $\Gamma_\theta=A^{-1}\Sigma_f(A')^{-1}$.
\end{proof}

\begin{remark}\label{rem-1}
(A3) holds as long as $\nabla\Psi(x)$ and $x$ are roughly in the same direction, e.g. $\Psi(x)=\|x\|^2/2$, $\nabla\Psi(x)=x$, then $\langle \nabla\Psi(x),x\rangle\ra\iy$ as $|x|\ra\iy$, In this case, (A3) is true for any $r>0$ as long as $M_0$ is large enough, then in (A4) $\kappa$ can be as small as possible, and $f(X)$ only needs to have finite moment for some order slightly larger than $2$ under the invariant distribution of $X(t)$.
%Under polynomial ergodicity and (A33), the variances also converge
%\[\mbox{Var}\left(n^{-1/2}\sum_{i=1}^n\nabla_x\Psi_\theta(X(i\eta))\right)\ra \Sigma_f,\ \ \mbox{Var}(\sqrt{n}(\hat{\theta}_n-\theta))\ra \Gamma_\theta.\]
\end{remark}

%we can rewrite the CLT as
%\[\sqrt{n}(\hat{\theta}_n-\theta)\stackrel{d}{\rightarrow}\gamma_\theta\bZ\]
%where $\gamma_\theta=\Gamma_\theta^{1/2}$, $\bZ$ is a standard normal vector. 
Let $\tau(x)=\nabla_\theta'\Psi_\theta(x)\gamma_\theta$. Note that $\Psi(x)=\Psi_\theta(x)$, and $\hat{\Psi}_n(x)=\Psi_{\hat{\theta}_n}(x)$. Using Lemma \ref{prop-th} and applying the delta method we obtain  
\[\sqrt{n}(\hat{\Psi}_n(x)-\Psi(x))\stackrel{d}{\rightarrow}\tau(x)\bZ,
\quad 
 \sqrt{n} [\nabla \hat{\Psi}_n(x) - \nabla\Psi(x)] \stackrel{d}{\rightarrow}  \nabla\tau(x) \bZ,
\]
%and $\nabla \hat{\Psi}_n(x)$ is a consistent estimator $\Psi(x)$, 
where $\tau(x)$ is a row vector, denote $\tau(x)=(\tau_1(x),\dots,\tau_d(x))$, then $\nabla\tau(x)$ is a $d\times d$ matrix, the $j$-th column of $\nabla\tau(x)$ is $\nabla\tau_j(x)$.

Since $\Psi$ is smooth, we have that for a compact set $D \subset \mR^d$, 
\[ \sup_{x \in D} | \hat{\Psi}_n(x) - \Psi(x) | = O_P(n^{-1/2} ) ,\ \   \sup_{x \in D} | \nabla \hat{\Psi}_n(x) - \nabla \Psi(x) | = O_P(n^{-1/2}).
 \]

Furthermore, using the Skorokhod representation theorem we may have 

 \[ \sup_{x \in D} | \hat{\Psi}_n(x) - \Psi(x) - n^{-1/2} \tau(x)\bZ  | = O_P(n^{-1} ) = o_P(n^{-1/2})  , \]
 \[   \sup_{x \in D} | \nabla \hat{\Psi}_n(x) - \nabla \Psi(x)  - n^{-1/2} \nabla\tau(x) \bZ  | =O_P(n^{-1} ) = o_P(n^{-1/2}), 
 \]
where the equations may be realized in different probability spaces. 

\vspace{0.1in}

Before we proceed to the proof of the theorem, we need another important lamma, this result has been proved in \cite{jordan1998variational}, I summarize the lemma and proof here just to keep the paper self-contained.  
%\subsection{Asymptotic results in the offline case}
%In this section, we will show the asymptotic results of the processes, including $\hat{\rho}_\delta^n$---obtained by iterative discrete scheme, $\hat{\rho}^n$---solution to the Fokker-Planck equation with $\Psi$ replaced by $\hat{\Psi}_n$, and the true density function $\rho$. First we provide an important result:

\begin{lem}\label{prop-ip}
Assume
\begin{equation}\label{WGF}
\rho^*={\rm arg}\min_{\rho\in \Xi} \left\{\frac{1}{2}d_W^2(\rho^o,\rho)+\delta \mF_{\Psi}(\rho)\right\},
\end{equation}
%then for any smooth vector field with bounded support, $\xi:\mR^d\ra\mR^d$, $\xi\in\cC_0^\iy$,
%\begin{equation}\label{eq-lm1}
%\int_{\mR^d\times\mR^d}(y-x)\cdot\xi(y)P(dxdy)+\delta\int_{\mR^d}(\nabla\Psi(y)\cdot\xi(y)-\beta^{-1}\Div\xi(y))\rho^*(y)dy=0,
%\end{equation}
%where $P$ is optimal in the definition of $d_W^2(\rho^o,\rho^*)$---that is, $P\in\cP(\rho^o,\rho^*)$ and
%\[d_W^2(\rho^o,\rho^*)=\mE_P|X-Y|^2.\]
%Further more, 
then for any $\zeta\in\cC_0^\iy(\mR^d)$,
\[\left|\int_{\mR^d}\left\{(\rho^*-\rho^o)\zeta dy+\delta(\nabla\Psi\cdot\nabla\zeta-\beta^{-1}\Delta\zeta)\rho^*\right\}dy\right|\leq \frac{1}{2}\sup_{\mR^d}|\nabla^2\zeta|d_W^2(\rho^o,\rho^*).\]
\end{lem}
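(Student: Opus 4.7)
The plan is to follow the classical perturbation argument underlying the Euler--Lagrange equation for the JKO scheme. First I would construct a one-parameter family of competitors: fix $\tau \in \mR$ with $|\tau|$ small and define $T_\tau(y) = y + \tau \nabla\zeta(y)$. Since $\zeta \in C_0^\infty(\mR^d)$, for $|\tau| \sup|\nabla^2\zeta| < 1$ the map $T_\tau$ is a $C^\infty$-diffeomorphism of $\mR^d$. Let $\rho_\tau = (T_\tau)_\# \rho^*$, which lies in $\Xi$ (it has finite second moment since $\nabla\zeta$ is bounded). Using $\rho_\tau$ as a competitor in (\ref{WGF}) and exploiting the minimality of $\rho^*$ yields
\[
0 \;\leq\; \tfrac{1}{2}\bigl[d_W^2(\rho^o,\rho_\tau) - d_W^2(\rho^o,\rho^*)\bigr] + \delta\bigl[\mF_\Psi(\rho_\tau) - \mF_\Psi(\rho^*)\bigr].
\]

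Next I would estimate the three differences. For the Wasserstein part, take $P \in \cP(\rho^o, \rho^*)$ optimal and push its second marginal through $T_\tau$ to obtain a (sub-optimal) coupling between $\rho^o$ and $\rho_\tau$. Expanding $|y+\tau \nabla\zeta(y) - x|^2$ gives
\[
d_W^2(\rho^o, \rho_\tau) - d_W^2(\rho^o,\rho^*) \;\leq\; 2\tau \!\int (y-x)\cdot \nabla\zeta(y)\, P(dx\,dy) + \tau^2 \!\int |\nabla\zeta(y)|^2 \rho^*(y)\,dy.
\]
For the energy, a change of variables and Taylor expansion give $\mE_\Psi(\rho_\tau) - \mE_\Psi(\rho^*) = \tau \int \nabla\Psi\cdot\nabla\zeta\, \rho^* \,dy + o(\tau)$. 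For the entropy, the Jacobian identity $\rho_\tau(T_\tau(y))\det DT_\tau(y) = \rho^*(y)$ yields $\cS(\rho^*) - \cS(\rho_\tau) = \int \rho^* \log\det(I + \tau\nabla^2\zeta)\,dy = \tau \int \rho^* \Delta\zeta \,dy + o(\tau)$, where the expansion $\log\det(I+\tau A) = \tau\,\mathrm{tr}(A) + O(\tau^2 |A|^2)$ is used with $A = \nabla^2\zeta$. Plugging these into the inequality above, dividing by $\tau$ and letting $\tau\to 0^+$ and $\tau\to 0^-$ separately gives the Euler--Lagrange equality
\begin{equation}\label{EL-eq}
\int (y-x)\cdot\nabla\zeta(y)\,P(dx\,dy) + \delta \!\int \bigl[\nabla\Psi\cdot\nabla\zeta - \beta^{-1}\Delta\zeta\bigr] \rho^*\,dy = 0.
\end{equation}

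The final step is to trade $\int \nabla\zeta(y)\cdot(y-x)\,P(dx\,dy)$ for $\int \zeta(\rho^*-\rho^o)\,dy$, which produces the quadratic remainder. Writing a second-order Taylor expansion of $\zeta$ along the segment joining $y$ and $x$,
\[
\zeta(x) = \zeta(y) - \nabla\zeta(y)\cdot(y-x) + \tfrac{1}{2}(y-x)^\top \nabla^2\zeta(\xi_{x,y})(y-x),
\]
for some $\xi_{x,y}$ on that segment, and integrating against $P$ gives
\[
\int \zeta\,(\rho^* - \rho^o)\,dy \;=\; \int \nabla\zeta(y)\cdot(y-x)\,P(dx\,dy) - \tfrac{1}{2}\! \int (y-x)^\top \nabla^2\zeta(\xi_{x,y})(y-x)\,P(dx\,dy).
\]
The remainder is bounded in absolute value by $\tfrac{1}{2}\sup|\nabla^2\zeta| \int |y-x|^2\, P(dx\,dy) = \tfrac{1}{2}\sup|\nabla^2\zeta|\,d_W^2(\rho^o,\rho^*)$. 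Substituting into (\ref{EL-eq}) delivers the claimed inequality.

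The main obstacle I expect is the rigorous differentiation under the integral sign for the energy and entropy pieces. These differentiations need uniform integrability; compactness of $\mathrm{supp}\,\zeta$ controls $\nabla\Psi\cdot\nabla\zeta$ via (A1), while the entropy expansion requires the elementary bound $|\log\det(I+\tau\nabla^2\zeta) - \tau\Delta\zeta| \leq C \tau^2 \sup|\nabla^2\zeta|^2$ valid uniformly for $|\tau|\sup|\nabla^2\zeta| \leq 1/2$, so that the remainder can be absorbed into $o(\tau)$ after integrating against $\rho^*$. The only other technical point is verifying that $\rho_\tau \in \Xi$ and that $\cS(\rho_\tau)$, $\cE_\Psi(\rho_\tau)$ are finite for small $\tau$; both follow from the diffeomorphism property of $T_\tau$, the bounded support of $\nabla\zeta$, and the finiteness of $\cS(\rho^*)$ and $\cE_\Psi(\rho^*)$ at the minimizer.
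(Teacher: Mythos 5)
Your proposal is correct and follows essentially the same route as the paper's proof: perturb the minimizer by a push-forward along $\nabla\zeta$ (the paper uses the ODE flow of a vector field $\xi$ and then sets $\xi=\nabla\zeta$, which agrees with your map $y\mapsto y+\tau\nabla\zeta(y)$ to first order), compare via the suboptimal coupling built from the optimal plan $P$, differentiate at $\tau=0$ to get the Euler--Lagrange identity, and finish with the second-order Taylor bound $|\zeta(y)-\zeta(x)-(y-x)\cdot\nabla\zeta(y)|\le\tfrac12\sup|\nabla^2\zeta|\,|x-y|^2$ integrated against $P$. The only blemish is a harmless sign slip in one intermediate line: the Jacobian identity gives $\cS(\rho_\tau)-\cS(\rho^*)=\int\rho^*\log\det(I+\tau\nabla^2\zeta)\,dy=\tau\int\rho^*\Delta\zeta\,dy+o(\tau)$ (not $\cS(\rho^*)-\cS(\rho_\tau)$), which is what is consistent with the Euler--Lagrange equality you state and with the lemma's $-\beta^{-1}\Delta\zeta$ term.
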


\begin{proof}
For any smooth vector field with bounded support, $\xi:\mR^d\ra\mR^d$, $\xi\in\cC_0^\iy$, define a flux $\Phi_\tau:\mR^d\ra\mR^d$ by
\[\pt_\tau\Phi_\tau(y)=\xi(\Phi_\tau(y)), \Phi_0(y)=y\]
For any $\tau$, let the measure $\rho_\tau(y)dy$ be the push forward of $\rho^*(y)dy$ under $\Phi_\tau$, then if $Y$ has density $\rho^*$, $\Phi_\tau(Y)$ will have density $\rho_\tau$, and for any $\zeta\in\cC_0(\mR^d)$,
\begin{equation}\label{eq-lm2}
  \mE_{\rho_\tau}\zeta(X)=\mE_{\rho^*}\zeta(\Phi_\tau(Y))  
\end{equation}
%\[\rho_\tau(\Phi_\tau(y))\cdot\det[\nabla\Phi_\tau(y)]=\rho^*(y),\]
By the definition of $\rho^*$, 
\[\frac{1}{2}d_W^2(\rho^o,\rho_\tau)+\delta \mF_{\Psi}(\rho_\tau)-\frac{1}{2}d_W^2(\rho^o,\rho^*)-\delta \mF_{\Psi}(\rho^*)\geq 0\]
Let $P$ be optimal in the definition of $d_W^2(\rho^o,\rho^*)$, assume $(X,Y)$ follows distribution $P$, then the distribution of $(X,\Phi_\tau(Y))$ belongs to $\cP(\rho^o,\rho_\tau)$, we have
\[d_W^2(\rho^o,\rho_\tau)-d_W^2(\rho^o,\rho^*)\leq \mE_P(|\Phi_\tau(Y)-X|^2-|Y-X|^2)\]
Therefore
\[\mE_P\left(\frac{1}{2}|\Phi_\tau(Y)-X|^2-\frac{1}{2}|Y-X|^2\right)+\delta(\mF_{\Psi}(\rho_\tau)-\mF_{\Psi}(\rho^*))\geq 0\]
Thus the derivative of $\mE_P\left(\frac{1}{2}|\Phi_\tau(Y)-X|^2\right)+\delta\mF_{\Psi}(\rho_\tau)$ at $\tau=0$ is nonnegative. First we have
\[\pt_\tau\mE_P\left.\frac{1}{2}|\Phi_\tau(Y)-X|^2\right|_{\tau=0}=\mE_P((Y-X)\cdot\xi(Y)),\]
Next, using (\ref{eq-lm2}), since $\Psi\geq 0$, approximating log by some function bounded below and taking limit, we get
\[\mF_{\Psi}(\rho_\tau)=\mE_{\rho_\tau}(\Psi(X)+\beta^{-1}\log[\rho_\tau(X)])=\mE_{\rho^*}(\Psi(\Phi_\tau(Y))+\beta^{-1}\log[\rho_\tau(\Phi_\tau(Y))]),\]
and we can calculate that 
\[\pt_\tau\mF_\Psi(\rho_\tau)|_{\tau=0}=\mE_{\rho^*}(\nabla\Psi(Y)\cdot\xi(Y)-\beta^{-1}\Div(\xi(Y)))\]
So we conclude
\begin{equation}\label{eq-lmp1}
\int_{\mR^d\times\mR^d}(y-x)\cdot\xi(y)P(dxdy)+\delta\int_{\mR^d}(\nabla\Psi(y)\cdot\xi(y)-\beta^{-1}\Div\xi(y))\rho^*(y)dy\geq 0,
\end{equation}
replacing $\xi$ with $-\xi$, we obtain the equality.

Then, note that
\[\int_{\mR^d}(\rho^*-\rho^o)\zeta dy=\mE_P(\zeta(Y)-\zeta(X)),\]
let $\xi=\nabla\zeta$, using (\ref{eq-lmp1}) with equality, we get
\[\begin{split}
&\left|\int_{\mR^d}\left\{(\rho^*-\rho^o)\zeta dy+\delta(\nabla\Psi\cdot\nabla\zeta-\beta^{-1}\Delta\zeta)\rho^*\right\}dy\right|\\
=&\left|\mE_P(\zeta(Y)-\zeta(X)-(Y-X)\cdot\nabla\zeta(Y))\right|\\
\leq& \frac{1}{2}\sup_{\mR^d}|\nabla^2\zeta|\mE_P|X-Y|^2\\
=&\frac{1}{2}\sup_{\mR^d}|\nabla^2\zeta|d_W^2(\rho^o,\rho^*).
\end{split}\]

\end{proof}

%Now we show the following proposition: 
%\begin{prop}\label{prop-31}
%For any $\xi \in \cC^\infty_0(\mR_+\times \mR^d)$, we have
%\[ \left|  \int_{\mR_+ \times \mR^d} [ \hat{\rho}_\delta^n (t,x) - \rho (t,x)] \xi(t, x)  dt dx \right|  = O_P\left( \delta+n^{-1/2} \right) .  \]
%\end{prop}

%Using this proposition, we can analyze the difference between $\hat{\rho}_\delta^n$ and $\rho$, and establish the higher-order convergence result for the stochastic process.

%\subsection{Proof of Proposition \ref{prop-th}}\label{appA1}

%\subsection{Proof of Proposition \ref{prop-ip}}\label{appA2}
Finally, we can give the proof of Theorem 4.1.

\subsection{Proof of Theorem 4.1}%\ref{thm-1}}
\label{appA3}
Recall that
\begin{equation*}
\hat{\rho}_{n}^{(k)}={\rm arg}\min_{\rho\in \Xi} \left\{\frac{1}{2}\left[d_W\left(\rho,\hat{\rho}_{n}^{(k-1)}\right)\right]^2+\delta \mF_{\hat{\Psi}_n}(\rho)\right\}
\ ,\hat{\rho}_{n}^{(0)}=\rho^0,
\end{equation*}
and
\[\hat{\rho}_{\delta}^n(t,x)=\hat{\rho}_{n}^{(\lceil t/\delta\rceil)}(x),\]
Assume w.l.o.g. $\beta=1$, denote $t_k=k\delta$. For any $\zeta\in \cC_0^\iy(\mR_+\times\mR^d)$ and $k\geq 1$, applying Lemma \ref{prop-ip} on $\hat{\rho}_n^{(k-1)},\hat{\rho}_n^{(k)},\hat{\Psi}_n$ and $\zeta(t_{k-1},\cdot)$, we get
\begin{equation}\label{eq-31}
\begin{split}
&\left|\int_{\mR^d}\left\{(\hat{\rho}_n^{(k)}-\hat{\rho}_n^{(k-1)})\zeta(t_{k-1},x)+\delta(\nabla\hat{\Psi}_n\cdot\nabla\zeta(t_{k-1},x)-\Delta\zeta(t_{k-1},x))\hat{\rho}_n^{(k)}\right\}dx\right|\\
\leq\ &\frac{1}{2}\sup_{x\in\mR^d}|\nabla^2\zeta(t_{k-1},x)|d_W^2(\hat{\rho}_n^{(k-1)},\hat{\rho}_n^{(k)}).
\end{split}
\end{equation}
Since $\zeta$ has compact support, assume the support is a subset of $[0,T]\times D$, where $T>1$, $D\subset\mR^d$ is a compact set. Let $N_\delta=\lfloor T/\delta\rfloor+1$, then
\begin{equation}\label{eq-32}
\begin{split}
&\left|\sum_{k=1}^{N_\delta}\int_{\mR^d}\left\{(\hat{\rho}_n^{(k)}-\hat{\rho}_n^{(k-1)})\zeta(t_{k-1},x)+\delta(\nabla\hat{\Psi}_n\cdot\nabla\zeta(t_{k-1},x)-\Delta\zeta(t_{k-1},x))\hat{\rho}_n^{(k)}\right\}dx\right|\\
\leq\ &\frac{1}{2}\sup_{t,x}|\nabla^2\zeta|\sum_{k=1}^{N_\delta}d_W^2(\hat{\rho}_n^{(k-1)},\hat{\rho}_n^{(k)})
\end{split}
\end{equation}
For fix $x$, we have 
\begin{equation}\label{eq-33}
\begin{split}
&\sum_{k=1}^{N_\delta}(\hat{\rho}_n^{(k)}-\hat{\rho}_n^{(k-1)})\zeta(t_{k-1},x)\\
=&-\zeta(0,x)\hat{\rho}_n^{(0)}-\sum_{k=1}^{N_\delta}(\zeta(t_k,x)-\zeta(t_{k-1},x))\hat{\rho}_n^{(k)}\\
=&-\zeta(0,x)\hat{\rho}_n^{(0)}-\int_0^{\iy}\pt_t\zeta(t,x)\hat{\rho}^n_\delta(t,x)dt
\end{split}
\end{equation}
Since any order derivative of $\zeta$ is continuous with compact support, and $\max_{x\in D}|\nabla\hat{\Psi}_n(x)|=O_P(1)$, so
\begin{equation}\label{eq-34}
\begin{split}
&\left|\int_{\mR^d}\left[\sum_{k=1}^{N_\delta}\left[\delta(\nabla\hat{\Psi}_n\cdot\nabla\zeta(t_{k-1},x)-\Delta\zeta(t_{k-1},x))\hat{\rho}_n^{(k)}\right]-\int_0^{\iy}(\nabla\hat{\Psi}_n\cdot\nabla\zeta-\Delta\zeta)\hat{\rho}_\delta^n dt\right]dx\right|\\
\leq&\int_{\mR^d}\sum_{k=1}^{N_\delta}\int_{t_{k-1}}^{t_k}|\nabla\hat{\Psi}_n\cdot\nabla\zeta(t_{k-1},x)-\Delta\zeta(t_{k-1},x)-(\nabla\hat{\Psi}_n\cdot\nabla\zeta-\Delta\zeta)|\hat{\rho}_n^{(k)}dtdx\\
\leq&\int_{\mR^d}\sum_{k=1}^{N_\delta}\int_{t_{k-1}}^{t_k} C\delta\left(\max_{x\in D}|\nabla\hat{\Psi}_n(x)|+1\right)\hat{\rho}_n^{(k)}dtdx\\
\leq&N_\delta C\delta^2O_P(1)\\
=&O_P(\delta)
\end{split}
\end{equation}
According to the definition of $\hat{\rho}_n^{(k)}$, we have
\[\frac{1}{2}d_W^2(\hat{\rho}_n^{(k-1)},\hat{\rho}_n^{(k)})\leq\delta(\mF_{\hat{\Psi}_n}(\hat{\rho}_n^{(k-1)})-\mF_{\hat{\Psi}_n}(\hat{\rho}_n^{(k)})),\]
for any $1\leq N\leq N_\delta$, we have
\[\begin{split}
\frac{1}{2}\sum_{k=1}^{N}d_W^2(\hat{\rho}_n^{(k-1)},\hat{\rho}_n^{(k)})
&\leq\delta\sum_{k=1}^{N}(\mF_{\hat{\Psi}_n}(\hat{\rho}_n^{(k-1)})-\mF_{\hat{\Psi}_n}(\hat{\rho}_n^{(k)}))\\
&\leq\delta\left(\mF_{\hat{\Psi}_n}(\hat{\rho}_n^{(0)})-\mF_{\hat{\Psi}_n}(\hat{\rho}_n^{(N)})\right)
\end{split}\]
Since $\Psi\geq 0$, by (14) in \cite{jordan1998variational}
\[\mF_\Psi(\rho)\geq -\cS(\rho)\geq -C(\cM(\rho)+1)^\alpha\]
where $C$ is a constant that depends on $d$, the dimension of $x$, and $\alpha\in\left(\frac{d}{d+2},1\right)$. Then
\begin{equation}\label{eq-35}
\frac{1}{2}\sum_{k=1}^{N}d_W^2(\hat{\rho}_n^{(k-1)},\hat{\rho}_n^{(k)})
\leq\delta\left(|\mF_{\hat{\Psi}_n}(\hat{\rho}_n^{(0)})|+C(\cM(\hat{\rho}_n^{(N)})+1)^\alpha\right)
\end{equation}
In addition, we have
\begin{equation}\label{eq-36}
\cM(\hat{\rho}_n^{(N)})\leq 2d_W^2(\hat{\rho}_n^{(0)},\hat{\rho}_n^{(N)})+2\cM(\hat{\rho}_n^{(0)})\leq 2N\sum_{k=1}^Nd_W^2(\hat{\rho}_n^{(k-1)},\hat{\rho}_n^{(k)})+2\cM(\hat{\rho}_n^{(0)})
\end{equation}
Denote $\cR=\cM(\hat{\rho}_n^{(N)})+1$, note that $|\mF_{\hat{\Psi}_n}(\hat{\rho}_n^{(0)})|=O_P(1)$ and $N\delta$ is bounded. Combining (\ref{eq-35}),(\ref{eq-36}), we get $\cR\leq C \cR^\alpha+\cC$, where $\cC=O_P(1)$. By Young's inequality, 
\[\cR\leq C \cR^\alpha+\cC\leq (1-\alpha)C^{1/(1-\alpha)}+\alpha\cR+\cC\]
\[\cR\leq C^{1/(1-\alpha)}+\cC/(1-\alpha)\]
Since $\alpha$ is a constant, $\cR=O_P(1)$. Then we conclude
\begin{equation}\label{eq-37}
\frac{1}{2}\sum_{k=1}^{N}d_W^2(\hat{\rho}_n^{(k-1)},\hat{\rho}_n^{(k)})=O_P(\delta)
\end{equation}
Combining (\ref{eq-32}),(\ref{eq-33}),(\ref{eq-34}),(\ref{eq-37}), we get
\[
\left|-\int_{\mR^d}\zeta(0,x)\rho^0 dx-\int_{\mR^d}\int_0^{\iy}\hat{\rho}_\delta^n(\pt_t\zeta-\nabla\hat{\Psi}_n\cdot\nabla\zeta+\Delta\zeta) dtdx\right|
=O_P(\delta)
\]
%Since $\sup_{x \in D} | \nabla \hat{\Psi}_n(x) - \nabla \Psi(x) | = O_P(n^{-1/2})$, 
%\[\left|\int_{\mR_+\times\mR^d} \hat{\rho}_\delta^n(\nabla \hat{\Psi}_n(x) - \nabla \Psi(x))\cdot\nabla\zeta dtdx\right|\leq CT\sup_{x \in D} | \nabla \hat{\Psi}_n(x) - \nabla \Psi(x)| =O_P(n^{-1/2}),\]
Together with
\begin{equation}\label{eq-38}
\begin{split}
&\int_{\mR^d}\int_0^{\iy}\rho(\pt_t\zeta-\nabla\Psi\cdot\nabla\zeta+\Delta\zeta) dtdx\\
=&-\int_{\mR^d}\zeta(0,x)\rho^0dx+\int_{\mR^d}\int_0^{\iy}(-\pt_t\rho+\Div(\rho\nabla\Psi)+\Delta\rho)\zeta dtdx\\
=&-\int_{\mR^d}\zeta(0,x)\rho^0dx,
\end{split}
\end{equation}
we find
%we get
%\[\left|\int_{\mR^d}\int_0^{\iy}[\hat{\rho}_\delta^n-\rho](\pt_t\zeta-\nabla\Psi\cdot\nabla\zeta+\Delta\zeta) dtdx\right|=O_P(\delta+n^{-1/2}).\]
%For any $\xi\in\cC_0^\iy(\mR_+\times\mR^d)$, find $\zeta\in\cC_0^\iy(\mR_+\times\mR^d)$ such that
%\[\xi=\pt_t\zeta-\nabla\Psi\cdot\nabla\zeta+\Delta\zeta,\]
%then we get the desired result.

%\subsection{Proof of Theorem \ref{thm-320}}\label{appA4}
%Assume w.l.o.g. $\beta=1$, from the proof of Theorem \ref{thm-31}, for any $\zeta\in\cC_0^\iy(\mR_+,\mR^d)$, 
%\[
%\int_{\mR_+\times\mR^d}\hat{\rho}_\delta^n(\pt_t\zeta-\nabla\hat{\Psi}_n\cdot\nabla\zeta+\Delta\zeta) dtdx=-\int_{\mR^d}\zeta(0,x)\rho^0dx+O_P(\delta),
%\]
%together with (\ref{eq-38}), we get
\begin{equation}\label{eq-td1}
\begin{split}
 & \int_{\mR_+\times\mR^d}[\hat{\rho}_\delta^n-\rho](\pt_t \zeta- \nabla\Psi\cdot\nabla\zeta +\Delta\zeta))dt dx  \\
=& \int_{\mR_+\times\mR^d}\hat{\rho}_\delta^n(\nabla\hat{\Psi}_n-\nabla\Psi) \cdot\nabla\zeta dtdx+O_P(\delta),
\end{split}
\end{equation}
then
\begin{equation}\label{eq-t11}
\begin{split}
& \int_{\mR_+\times\mR^d}\hat{V}_\delta^n(\pt_t \zeta- \nabla\Psi\cdot\nabla\zeta +\Delta\zeta))dt dx  \\
=& \int_{\mR_+\times\mR^d}\hat{\rho}_\delta^n\sqrt{n}(\nabla\hat{\Psi}_n-\nabla\Psi) \cdot\nabla\zeta dtdx+O_P(\delta\sqrt{n}) \\
\end{split}
\end{equation}
Assume the support of $\zeta$ is a subset of $[0,T]\times D$, where $D$ is a compact subset of $\mR^d$. Since $\sup_{x \in D} | \sqrt{n}(\nabla \hat{\Psi}_n(x) - \nabla \Psi(x))  - \nabla\tau(x) \bZ  |=o_P(1)$, we have
\begin{equation}\label{eq-t12}
\begin{split}
&\int_{\mR_+\times\mR^d}\hat{\rho}_\delta^n\sqrt{n}(\nabla\hat{\Psi}_n-\nabla\Psi) \cdot\nabla\zeta dtdx \\
=& \int_{\mR_+\times\mR^d}\hat{\rho}_\delta^n\nabla\tau(x) \bZ \cdot\nabla\zeta dtdx+o_P(1)\int_{[0,T]\times D}\hat{\rho}_\delta^n dtdx\\
=& \int_{\mR_+\times\mR^d}\rho\nabla\tau(x) \bZ \cdot\nabla\zeta dtdx+\int_{\mR_+\times\mR^d}(\hat{\rho}_\delta^n-\rho)\nabla\tau(x) \bZ \cdot\nabla\zeta dtdx+o_P(1)
\end{split}
\end{equation}
since for any $\xi \in \cC^\infty_0(\mR_+\times \mR^d)$, $\left|  \int_{\mR_+ \times \mR^d} ( \hat{\rho}_\delta^n - \rho) \xi  dt dx \right| =o_P(1)$, so
%By Theorem \ref{thm-31}, 
\[\int_{\mR_+\times\mR^d}(\hat{\rho}_\delta^n-\rho)\nabla\tau(x) \bZ \cdot\nabla\zeta dtdx=\int_{\mR_+\times\mR^d}(\hat{\rho}_\delta^n-\rho)\nabla'\zeta\nabla\tau(x)dtdx\cdot\bZ=o_P(1)\cdot\bZ=o_P(1)\]
Combining (\ref{eq-t11}), (\ref{eq-t12}), since $\delta\sqrt{n}\ra 0$, we get
\begin{equation}\label{eq-t4}
\begin{split}
& \int_{\mR_+\times\mR^d}\hat{V}_\delta^n(\pt_t \zeta- \nabla\Psi\cdot\nabla\zeta +\Delta\zeta))dt dx  \\
\stackrel{p}{\ra}& \int_{\mR_+\times\mR^d}[\rho\nabla\tau(x) \bZ]\cdot\nabla\zeta dtdx\\
=&\int_{\mR_+\times\mR^d}-\Div(\rho\nabla\tau(x) \bZ)\zeta dtdx\\
=&\int_{\mR_+\times\mR^d}(-\pt_t V + \Div ( V \nabla \Psi ) + \Delta V)\zeta dtdx\\
=&\int_{\mR_+\times\mR^d}V(\pt_t \zeta-\nabla\Psi\cdot\nabla\zeta+\Delta\zeta)dtdx
\end{split}
\end{equation}
For any $\xi\in\cC_0^\iy(\mR_+\times\mR^d)$, find $\zeta\in\cC_0^\iy(\mR_+\times\mR^d)$ such that
\[\xi=\pt_t\zeta-\nabla\Psi\cdot\nabla\zeta+\Delta\zeta,\]
then we get the desired result. The linear dependence is straightforward. 

By the Fokker-Planck equation of $\tilde{\rho}^n$, we can obtain the similar equality as (\ref{eq-td1}), where we can replace $\hat{\rho}_{\delta}^n$ with $\tilde{\rho}_n$ and remove the $O_P(\delta)$ term. Then use the same argument we can prove $\tilde{V}^n$ also converges to $V$ as $n\ra\iy$. 

\section{Proof in Section 5}%\ref{SEC-Asymptotic-online}}
\label{appB}

Assume
\begin{equation*}
\hat{\rho}^{(k)}={\rm arg}\min_{\rho\in \Xi} \left\{\frac{1}{2}\left[d_W\left(\rho,\hat{\rho}^{(k-1)}\right)\right]^2+\delta \mF_{\hat{\Psi}_k}(\rho)\right\}
\ ,\hat{\rho}^{(0)}=\rho^0
\end{equation*}
and 
\begin{equation*}
\hat{\rho}_\delta(t,x)=\hat{\rho}^{(\lceil t/\delta\rceil)}(x)
\end{equation*}
First we give an important lemma on $\hat{\rho}_\delta$:
\begin{lem}\label{thm-41}
Assume $\{\hat{\Psi}_k\}$ satisfies (A5), (A6), $\delta m^{1/(2-2\gamma)}>C>0$, then for any $\zeta \in \cC^\infty_0(\mR_+\times \mR^d)$, we have
\[
\left|-\int_{\mR^d}\zeta(0,x)\rho^0dx-\int_{\mR^d}\int_0^{\iy}\hat{\rho}_\delta(\pt_t\zeta-\nabla\hat{\Psi}_{\lceil t/\delta\rceil}\cdot\nabla\zeta+\Delta\zeta) dtdx\right|
=O_P(\delta).
\]
\end{lem}

\begin{proof}
Assume w.l.o.g. $\beta=1$, denote $t_k=k\delta$. For any $\zeta\in \cC_0^\iy(\mR_+\times\mR^d)$ and $k\geq 1$, applying Proposition \ref{prop-ip} on $\hat{\rho}^{(k-1)},\hat{\rho}^{(k)},\hat{\Psi}_k$ and $\zeta(t_{k-1},\cdot)$, we get
\begin{equation}\label{eq-thm1}
\begin{split}
&\left|\int_{\mR^d}\left\{(\hat{\rho}^{(k)}-\hat{\rho}^{(k-1)})\zeta(t_{k-1},x)+\delta(\nabla\hat{\Psi}_k\cdot\nabla\zeta(t_{k-1},x)-\Delta\zeta(t_{k-1},x))\hat{\rho}^{(k)}\right\}dx\right|\\
\leq\ &\frac{1}{2}\sup_{x\in\mR^d}|\nabla^2\zeta(t_{k-1},x)|d_W^2(\hat{\rho}^{(k-1)},\hat{\rho}^{(k)}).
\end{split}
\end{equation}
Since $\zeta$ has compact support, assume the support is a subset of $[0,T]\times D$, where $T>1$, $D\subset\mR^d$ is a compact set. Let $N_\delta=\lceil T/\delta\rceil$, then
\begin{equation}\label{eq-thm2}
\begin{split}
&\left|\sum_{k=1}^{N_\delta}\int_{\mR^d}\left\{(\hat{\rho}^{(k)}-\hat{\rho}^{(k-1)})\zeta(t_{k-1},x)+\delta(\nabla\hat{\Psi}_k\cdot\nabla\zeta(t_{k-1},x)-\Delta\zeta(t_{k-1},x)) \hat{\rho}^{(k)}\right\}dx\right|\\
\leq\ &\frac{1}{2}\sup_{t,x}|\nabla^2\zeta|\sum_{k=1}^{N_\delta}d_W^2(\hat{\rho}^{(k-1)},\hat{\rho}^{(k)})
\end{split}
\end{equation}
For fix $x$, we have
\begin{equation}\label{eq-thm3}
\begin{split}
&\sum_{k=1}^{N_\delta}(\hat{\rho}^{(k)}-\hat{\rho}^{(k-1)})\zeta(t_{k-1},x)\\
=&-\zeta(0,x)\hat{\rho}^{(0)}-\sum_{k=1}^{N_\delta}(\zeta(t_k,x)-\zeta(t_{k-1},x))\hat{\rho}^{(k)}\\
=&-\zeta(0,x)\hat{\rho}^{(0)}-\int_0^{\iy}\pt_t\zeta(t,x)\hat{\rho}_\delta(t,x)dt
\end{split}
\end{equation}
Since any order derivative of $\zeta$ is continuous with compact support, and $\{\hat{\Psi}_k\}$ satisfies (A5), 
\begin{equation}\label{eq-thm4}
\begin{split}
&\left|\int_{\mR^d}\left[\sum_{k=1}^{N_\delta}\left[\delta(\nabla\hat{\Psi}_k\cdot\nabla\zeta(t_{k-1},x)-\Delta\zeta(t_{k-1},x)) \hat{\rho}^{(k)}\right]-\int_0^{\iy}(\nabla\hat{\Psi}_{\lceil t/\delta\rceil}\cdot\nabla\zeta-\Delta\zeta)\hat{\rho}_\delta dt\right]dx\right|\\
&\leq\int_{\mR^d}\sum_{k=1}^{N_\delta}\int_{t_{k-1}}^{t_k}|\nabla\hat{\Psi}_k\cdot\nabla\zeta(t_{k-1},x)-\Delta\zeta(t_{k-1},x)-(\nabla\hat{\Psi}_k\cdot\nabla\zeta-\Delta\zeta)|\hat{\rho}^{(k)}dtdx\\
&\leq\int_{\mR^d}\sum_{k=1}^{N_\delta}\int_{t_{k-1}}^{t_k} C\delta\left(\max_{x\in D}|\nabla\hat{\Psi}_k(x)|+1\right)\hat{\rho}^{(k)}dtdx\\
&\leq \sum_{k=1}^{N_\delta}C\delta^2\left(\max_{x\in D}|\nabla\hat{\Psi}_k(x)|+1\right)\\
&=O_P(\delta)
\end{split}
\end{equation}

According to the definition of $\hat{\rho}^{(k)}$, we have
\[\frac{1}{2}d_W^2(\hat{\rho}^{(k-1)},\hat{\rho}^{(k)})\leq\delta(\mF_{\hat{\Psi}_k}(\hat{\rho}^{(k-1)})-\mF_{\hat{\Psi}_k}(\hat{\rho}^{(k)})),\]
for any $1\leq N\leq N_\delta$, we have
\[\begin{split}
&\frac{1}{2}\sum_{k=1}^{N}d_W^2(\hat{\rho}^{(k-1)},\hat{\rho}^{(k)})\\
\leq&\delta\sum_{k=1}^{N}(\mF_{\hat{\Psi}_k}(\hat{\rho}^{(k-1)})-\mF_{\hat{\Psi}_k}(\hat{\rho}^{(k)}))\\
\leq&\delta\left(\mF_{\hat{\Psi}_1}(\hat{\rho}^{(0)})-\mF_{\hat{\Psi}_{N}}(\hat{\rho}^{(N)})+\sum_{k=1}^{N-1}(\cE_{\hat{\Psi}_{k+1}}(\hat{\rho}^{(k)})-\cE_{\hat{\Psi}_k}(\hat{\rho}^{(k)}))\right)
\end{split}\]
Since $\Psi\geq 0$, by (14) in \cite{jordan1998variational}
\[\mF_\Psi(\rho)\geq -\cS(\rho)\geq -C(\cM(\rho)+1)^\alpha\]
where $C$ is a constant that depends on $d$, the dimension of $x$, and $\alpha\in\left(\frac{d}{d+2},1\right)$.
Since $\{\hat{\Psi}_k\}$ satisfies (A6), let
\[\cC=\max_{x\in\mR^d, 1\leq k< \lceil T/\delta\rceil}\frac{k^\gamma m^{1/2}}{(1+|x|^2)}|\hat{\Psi}_{k+1}(x)-\hat{\Psi}_k(x)|=O_P(1),\]
then
\[\left|\cE_{\hat{\Psi}_{k+1}}(\hat{\rho}^{(k)})-\cE_{\hat{\Psi}_k}(\hat{\rho}^{(k)})\right|\leq\int_{\mR^d}\cC m^{-1/2}k^{-\gamma}(1+|x|^2)\hat{\rho}^{(k)}(x)dx=\cC m^{-1/2}k^{-\gamma}(1+\cM(\hat{\rho}^{(k)})),\]
therefore
\begin{equation}\label{eq-thm5}
\frac{1}{2}\sum_{k=1}^{N}d_W^2(\hat{\rho}^{(k-1)},\hat{\rho}^{(k)})
\leq\delta\left(|\mF_{\hat{\Psi}_1}(\hat{\rho}^{(0)})|+C(\cM(\hat{\rho}^{(N)})+1)^\alpha+\cC m^{-1/2}\sum_{k=1}^{N-1}k^{-\gamma}(1+\cM(\hat{\rho}^{(k)}))\right)
\end{equation}
In addition, we have
\begin{equation}\label{eq-thm6}
\cM(\hat{\rho}^{(N)})\leq 2d_W^2(\hat{\rho}^{(0)},\hat{\rho}^{(N)})+2\cM(\hat{\rho}^{(0)})\leq 2N\sum_{k=1}^Nd_W^2(\hat{\rho}^{(k-1)},\hat{\rho}^{(k)})+2\cM(\hat{\rho}^{(0)})
\end{equation}
Denote $a_N=\cM(\hat{\rho}^{(N)})+1$, $\cC=O_P(1)$ is bounded in probability, which may differ from each other. Note that $|\mF_{\hat{\Psi}_1}(\hat{\rho}^{(0)})|=O_P(1)$ and $N\delta$ is bounded. Combining (\ref{eq-thm5}),(\ref{eq-thm6}), we get
\[a_N\leq \cC(a_N^\alpha+ m^{-1/2}\sum_{k=1}^{N-1}k^{-\gamma}a_k)+\cC\]
Since $a_1\leq \cC a_1^\alpha+\cC$ and $\alpha<1$, so $a_1\leq\cC'$, update $\cC=\max\{\cC,\cC'\}$. By Young's inequality, we get
\[a_N\leq \cC^{1/(1-\alpha)}+(\cC m^{-1/2}\sum_{k=1}^{N-1}k^{-\gamma}a_k+\cC)/(1-\alpha)\] 
We update the constant $\cC$ and rewrite it as
\[a_N\leq \cC+\cC m^{-1/2}\sum_{k=1}^{N-1}k^{-\gamma}a_k\]
Since $\delta m^{1/(2-2\gamma)}>C$, $N\leq T/\delta$, we have $N^{1-\gamma} m^{-1/2}=O(1)$. Let $S_N=\sum_{k=1}^N k^{-\gamma}a_k$, then
\[
\begin{split}
S_N&\leq \cC N^{-\gamma}+(1+\cC N^{-\gamma}m^{-1/2})S_{N-1} \\
&\leq \left[\prod_{k=1}^N(1+\cC k^{-\gamma}m^{-1/2})\right]\sum_{k=1}^N \cC k^{-\gamma} \\
&\leq \exp\left[\sum_{k=1}^N \cC k^{-\gamma}m^{-1/2}\right] \cC N^{1-\gamma} \\
&\leq \cC N^{1-\gamma}
\end{split}
\]
then
\[a_N\leq \cC+\cC m^{-1/2} S_N=O_P(1)\]
from (\ref{eq-thm5}),
\begin{equation}\label{eq-thm7}
\frac{1}{2}\sum_{k=1}^{N_\delta}d_W^2(\hat{\rho}^{(k-1)},\hat{\rho}^{(k)})\leq \delta(O_P(1)+\cC m^{-1/2} S_{N_\delta-1})=O_P(\delta)
\end{equation}
Combining (\ref{eq-thm2}),(\ref{eq-thm3}),(\ref{eq-thm4}),(\ref{eq-thm7}), we get
\[
\left|-\int_{\mR^d}\zeta(0,x)\hat{\rho}^{(0)}dx-\int_{\mR^d}\int_0^{\iy}\hat{\rho}_\delta(\pt_t\zeta-\nabla\hat{\Psi}_{\lceil t/\delta\rceil}\cdot\nabla\zeta+\Delta\zeta) dtdx\right|
=O_P(\delta),
\]
which gives the desired result.
\end{proof}

\begin{remark}
For condition (A6), in general case, since
\[\nabla_x\Psi_{\hat{\theta}}(x)\approx \nabla_x\Psi_\theta(x)+\nabla_\theta'\nabla_x\Psi_\theta(x)(\hat{\theta}-\theta)\]
We have the following approximation of $\hat{\theta}^k$ with (hopefully) accuracy $(mk)^{-1}$:
\[\hat{\theta}^k\approx \theta-\left[\sum_{j=1}^k A_j\right]^{-1}\left[\sum_{j=1}^k b_j\right]\]
where $A_j=\sum_{i=1}^m \nabla_\theta'\nabla_x\Psi_\theta(X^{(j)}(i\eta))$, $b_j=\sum_{i=1}^m \nabla_x\Psi_\theta(X^{(j)}(i\eta))$. As $m\ra\iy$, $A_j/m\ra A=E_\pi(\nabla_\theta'\nabla_x\Psi_\theta(X))$ is invertible. By CLT $b_j=O_P(m^{1/2})$. Then
\[\hat{\theta}^k-\theta=O_P(km^{1/2}/(km))=O_P(m^{-1/2})\]
Since
\[ \hat{\theta}^k-\hat{\theta}^{k+1}=\left[\sum_{j=1}^{k+1} A_j\right]^{-1}[b_{k+1}+A_{k+1}(\hat{\theta}^k-\theta)]\]
and $b_{k+1}+A_{k+1}(\hat{\theta}^k-\theta)=O_P(m^{1/2})$, $\sum_{j=1}^{k+1} A_j=\Omega_P(mk)$, we arrive at 
\[\hat{\theta}^k-\hat{\theta}^{k+1}=O_P(k^{-1}m^{-1/2}).\]
And (A6) is satisfied for $\gamma=1$.
\end{remark}

In the next lemma, we will control the difference between  $\nabla_x\Psi_{\hat{\theta}^k}(x)$ and $\nabla_x\Psi_\theta(x)$,

\begin{lem}\label{lem5-1}
Assume there exists $\alpha>0$, $\left|E(\sum_{i=1}^m\nabla_x\Psi_{\theta}(X(i\eta)))\right|=O(m^{-\alpha})$, and $k=o(m^{1+2\alpha})$, we have
\begin{equation}\label{eqq-51}
\sqrt{mk}(\nabla_x\Psi_{\hat{\theta}^k}(x)-\nabla_x\Psi_\theta(x))-\sqrt{k}\nabla\tau(x)\bar{\bZ}_k\xrightarrow{L_2}0
\end{equation}
uniformly for $x\in D$, here $D$ is a compact set.
\end{lem}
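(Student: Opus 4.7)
The plan is a standard Z-estimator linearization followed by an $L_2$ coupling with a Gaussian. Writing $f_\theta(x):=\nabla_x\Psi_\theta(x)$ and $b_j:=\sum_{i=1}^m f_\theta(X^{(j)}(i\eta))$, a componentwise Taylor expansion of the estimating equation $\sum_{j,i}f_{\hat\theta^k}(X^{(j)}(i\eta))=0$ around $\theta$ yields, for an intermediate $\tilde\theta$ on the segment $[\theta,\hat\theta^k]$,
\[\hat\theta^k-\theta \;=\; -\left[(mk)^{-1}\sum_{j,i}\nabla_\theta' f_{\tilde\theta}(X^{(j)}(i\eta))\right]^{-1}(mk)^{-1}\sum_{j=1}^k b_j.\]
The polynomial ergodicity established in the proof of Lemma \ref{prop-th} gives $(mk)^{-1}\sum_{j,i}\nabla_\theta' f_\theta(X^{(j)}(i\eta))\xrightarrow{P}A$, invertible by (A4); combined with continuity of $\nabla_\theta' f_\theta$ in $\theta$ and $\hat\theta^k\to\theta$, I may replace $\tilde\theta$ by $\theta$, obtaining
\[\sqrt{mk}(\hat\theta^k-\theta)\;=\;-A^{-1}(mk)^{-1/2}\sum_{j=1}^k b_j \,+\, o_P(1).\]

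Next I split $b_j=\mu_m+(b_j-\mu_m)$ with $\mu_m:=\mE[b_j]$. By (A7), $|\mu_m|=O(m^{-\alpha})$, so the bias contributes $O(\sqrt{k}\,m^{-1/2-\alpha})=o(1)$ precisely when $k=o(m^{1+2\alpha})$. Because the $k$ batches are independent, the centered quantities $U_j:=m^{-1/2}(b_j-\mu_m)$ are i.i.d.\ with $U_j\Rightarrow \Sigma_f^{1/2}\bZ_j^\star$ by the Markov-chain CLT used in Lemma \ref{prop-th}, and $\{|U_j|^2\}$ is uniformly integrable thanks to the $(2+\kappa)$-moment bound in (A4). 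Via a batchwise Skorokhod embedding I enlarge the probability space and construct i.i.d.\ standard normal vectors $\bZ_j$ (absorbing the sign through an orthogonal reparametrization that converts $-A^{-1}\Sigma_f^{1/2}\bZ_j^\star$ into $\gamma_\theta\bZ_j$, using $\Gamma_\theta=A^{-1}\Sigma_f(A')^{-1}$) so that $-A^{-1}U_j-\gamma_\theta\bZ_j\xrightarrow{L_2}0$ for each $j$ as $m\to\infty$. Since the resulting errors across batches are independent and centered, the averaged error satisfies $\mE\bigl|k^{-1/2}\sum_{j=1}^k(-A^{-1}U_j-\gamma_\theta\bZ_j)\bigr|^2 = \mE\bigl|{-}A^{-1}U_1-\gamma_\theta\bZ_1\bigr|^2\to 0$, and hence
\[\sqrt{mk}(\hat\theta^k-\theta)-\sqrt{k}\gamma_\theta\bar{\bZ}_k\;\xrightarrow{L_2}\;0.\]

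Finally, a Taylor expansion in $\theta$ gives $f_{\hat\theta^k}(x)-f_\theta(x)=\nabla_\theta' f_\theta(x)(\hat\theta^k-\theta)+R(x)$ with $\sup_{x\in D}|R(x)|\le C_D\,|\hat\theta^k-\theta|^2$, using that $\Psi_\theta\in C^\infty$ so that the relevant second-order $\theta$-derivatives are bounded on the compact set $D$ in a neighborhood of $\theta$. Multiplying by $\sqrt{mk}$ and using $\nabla\tau(x)=\nabla_\theta' f_\theta(x)\gamma_\theta$ together with $\hat\theta^k-\theta=O_P((mk)^{-1/2})$, the remainder is $O_P((mk)^{-1/2})=o_{L_2}(1)$ uniformly in $x\in D$, which delivers the claim. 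The principal obstacle is the $L_2$ upgrade in the middle paragraph: the Markov-chain CLT only yields distributional convergence, and to convert it into $L_2$ convergence under a coupling one must combine Skorokhod embedding with the uniform integrability of $|U_j|^2$ afforded by (A4), executed independently batch by batch so that the independence structure, and hence the variance calculation on the averaged error, is preserved.
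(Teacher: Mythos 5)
Your proposal is correct and follows essentially the same route as the paper's proof: a batchwise Gaussian coupling of the score sums via Skorokhod representation with an $L_2$ upgrade, bias control through (A7) under $k=o(m^{1+2\alpha})$, linearization of the estimating equation with $(mk)^{-1}\sum\nabla_\theta'\nabla_x\Psi_\theta\to A$, and a final Taylor/delta-method step transferring the coupling to $\nabla_x\Psi_{\hat\theta^k}(x)$ uniformly on the compact set $D$. Your only departures are organizational and mildly more careful---you center the batch scores before coupling (so the cross terms the paper bounds via (A7) vanish identically, with the bias handled as a deterministic $o(1)$), and you make explicit the uniform-integrability step and the orthogonal identification of $\gamma_\theta\bZ$ with $-A^{-1}\Sigma_f^{1/2}\bZ^\star$ that the paper uses tacitly.
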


\begin{proof}
First denote $X_i^{(j)}=X^{(j)}(i\eta)$, then for any $j$, we have
\[-m^{-1/2}\sum_{i=1}^m\nabla_x\Psi_\theta(X_i^{(j)})\ra \sigma_f\bZ_j\]
where $\bZ_j$ are mutually independent standard normal random vectors, $\sigma_f=\Sigma_f^{1/2}$ defined in Lemma \ref{prop-th}. By Schorohod's Theorem, we can establish the $L_2$ and almost surely convergence. Denote $\ve_{m,j}=-m^{-1/2}\sum_{i=1}^m\nabla_x\Psi_\theta(X_i^{(j)})-\sigma_f\bZ_j$, then $\ve_{m,j}$ are i.i.d for different $j$, and
\[|E\ve_{m,j}|=O(m^{-1/2-\alpha}),\ \  E|\ve_{m,j}|^2=o(1)\]
Then
\begin{equation}\label{eqq511}
\begin{aligned}
E\left|k^{-1/2}\sum_{j=1}^k\ve_{m,j}\right|^2&\leq k^{-1}\sum_{j=1}^kE|\ve_{m,j}|^2+2k^{-1}\sum_{j<l\leq k}|E\ve_{m,j}|\cdot|E\ve_{m,l}|\\
&=o(1)+O(km^{-1-2\alpha})\\
&=o(1)
\end{aligned}
\end{equation}
Together with the equation of the estimator $\hat{\theta}^k$, we get 
\begin{equation}\label{eqq-54}
(mk)^{-1/2}\sum_{j=1}^k\sum_{i=1}^m(\nabla_x\Psi_{\hat{\theta}^k}(X_i^{(j)})-\nabla_x\Psi_\theta(X_i^{(j)}))-\sqrt{k}\sigma_f\bar{\bZ}_k\xrightarrow{L_2}0
\end{equation}
uniformly for $k=o(m^{1+2\alpha})$, since 
\[\nabla_x\Psi_{\hat{\theta}^k}(X_i^{(j)})-\nabla_x\Psi_\theta(X_i^{(j)}))=\nabla_\theta'\nabla_x\Psi_\theta(X_i^{(j)})(\hat{\theta}^k-\theta)+o_p(\hat{\theta}^k-\theta)\]
and
\begin{equation}\label{eqq-551}
(mk)^{-1}\sum_{j=1}^k\sum_{i=1}^m\nabla_\theta'\nabla_x\Psi_\theta(X_i^{(j)})=A+o_p(1)
\end{equation}
where $A=E_\pi[\nabla_\theta'\nabla_x\Psi_\theta(X)]$ is invertible as defined in Lemma \ref{prop-th}. we get
\[
\sqrt{mk}(\hat{\theta}^k-\theta)-\sqrt{k}A^{-1}\sigma_f\bar{\bZ}_k\xrightarrow{L_2}0,
\]
then for $x\in D$,
\[
\sqrt{mk}\nabla_\theta'\nabla_x\Psi_\theta(x)(\hat{\theta}^k-\theta)-\sqrt{k}\nabla\tau(x)\bar{\bZ}_k\xrightarrow{L_2}0,
\]
here we use $\nabla\tau(x)=\nabla_\theta'\nabla_x\Psi_\theta(x)A^{-1}\sigma_f$.
Since $\sqrt{mk}(\hat{\theta}^k-\theta)=O_P(1)$,
\[
\sqrt{mk}(\nabla_x\Psi_{\hat{\theta}^k}(x)-\nabla_x\Psi_\theta(x)-\nabla_\theta'\nabla_x\Psi_\theta(x)(\hat{\theta}^k-\theta))=\sqrt{mk}o_p(\hat{\theta}^k-\theta)=o_p(1)
\]
Then we arrive at
\[
\sqrt{mk}(\nabla_x\Psi_{\hat{\theta}^k}(x)-\nabla_x\Psi_\theta(x))-\sqrt{k}\nabla\tau(x)\bar{\bZ}_k\xrightarrow{L_2}0,
\]
which is the desired result.
\end{proof}

Now we can give the proof of Theorem 5.1,%\ref{thm-2},

\subsection{Proof of Theorem 5.1}%\ref{thm-2}}
Assume any $\zeta\in\cC_0^\iy(\mR_+\times\mR^d)$ with compact support $[0,T]\times D$ and w.l.o.g. $\beta=1$. Given (A5),(A6), by Lemma \ref{thm-41} we have
\[
\int_{\mR_+\times\mR^d}\hat{\rho}_{\delta,1}^{m}(\pt_t\zeta-\nabla_x\hat{\Psi}_{\lceil t/\delta\rceil}\cdot\nabla\zeta+\Delta\zeta) dtdx=-\int_{\mR^d}\zeta(0,x)\rho^0dx+O_P(\delta),
\]
since
\[
\int_{\mR_+\times\mR^d}\rho(\pt_t\zeta-\nabla\Psi\cdot\nabla\zeta+\Delta\zeta) dtdx=-\int_{\mR^d}\zeta(0,x)\rho^0dx
\]
we get
\begin{equation}\label{eq520n1}
\begin{split}
& \int_{\mR_+\times\mR^d}(\hat{\rho}_{\delta,1}^{m}-\rho)(\pt_t\zeta-\nabla\Psi\cdot\nabla\zeta+\Delta\zeta) dtdx \\
=&\int_{\mR_+\times \mR^d}\hat{\rho}_{\delta,1}^{m}(\nabla_x\hat{\Psi}_{\lceil t/\delta\rceil}-\nabla\Psi) \cdot\nabla\zeta dtdx+O_P(\delta).
\end{split}
\end{equation}
then
\begin{equation}\label{eq520n2}
\begin{split}
& \int_{\mR_+\times\mR^d}\hat{V}_{\delta,1}^m(\pt_t \zeta- \nabla\Psi\cdot\nabla\zeta +\Delta\zeta))dt dx  \\
=& \int_{\mR_+\times\mR^d}\rho(m/\delta)^{1/2}(\nabla_x\hat{\Psi}_{\lceil t/\delta\rceil}-\nabla\Psi) \cdot\nabla\zeta dtdx+\\
&\int_{\mR_+\times \mR^d}[\hat{\rho}_{\delta,1}^{m}-\rho](m/\delta)^{1/2}(\nabla_x\hat{\Psi}_{\lceil t/\delta\rceil}-\nabla\Psi) \cdot\nabla\zeta dtdx+O_P((m\delta)^{1/2}).
\end{split}
\end{equation}

By Lemma \ref{lem5-1}, since $T/\delta=o(m^{1+2\alpha})$, for any $\ve>0$, $\exists N$, when $m>N$ and $k\leq T/\delta$, we have
\[E\left|m^{1/2}(\nabla_x\hat{\Psi}_{k}-\nabla\Psi)-\nabla\tau(x)\bar{\bZ}_k\right|<\ve k^{-1/2}\]
Then
\begin{equation}\label{eq520n3}
\begin{split}
&E\left|\int_{[0,T]\times D}\rho(m/\delta)^{1/2}(\nabla_x\hat{\Psi}_{\lceil t/\delta\rceil}-\nabla\Psi) \cdot\nabla\zeta dtdx-\sum_{k=1}^{\lceil T/\delta\rceil}\int_{t_{k-1}}^{t_k}\int_D[\rho\delta^{-1/2}\nabla\tau(x)\bar{\bZ}_k]\cdot\nabla\zeta dxdt\right|\\
\leq&\sum_{k=1}^{\lceil T/\delta\rceil}\int_{t_{k-1}}^{t_k}\int_D\rho\delta^{-1/2}E\left|m^{1/2}(\nabla_x\hat{\Psi}_{k}-\nabla\Psi)-\nabla\tau(x)\bar{\bZ}_k\right| \cdot|\nabla\zeta| dxdt\\
\leq&\sum_{k=1}^{\lceil T/\delta\rceil}C\delta^{1/2}\ve k^{-1/2}\\
\leq&C'\ve
\end{split}
\end{equation}

Let $W_\delta(t)=\sqrt{\delta}\sum_{k=1}^{\lceil t/\delta\rceil}\bZ_k$, Since $\{\bZ_k\}$ are i.i.d standard normal vectors, so there exists a $d$-dimensional Brownian motion $\{W(t)\}$, such that $\{W_\delta(t)\}$ weakly converge to $\{W(t)\}$ on $\cC([0,T])$, by Skorokhod representation theorem, we can assume the almost surely convergence. Then as $\delta\ra 0$, 
\begin{equation}\label{eq44-4}
\begin{split}
& \sum_{k=1}^{\lceil T/\delta\rceil}\int_{t_{k-1}}^{t_k}\int_D[\rho\delta^{-1/2}\nabla\tau\bar{\bZ}_k]\cdot\nabla\zeta dxdt  \\
=& \sum_{k=1}^{\lceil T/\delta\rceil}\int_{t_{k-1}}^{t_k}\left\{\int_D(\nabla\zeta)'\rho\nabla\tau(x)dx\right\}\frac{W_\delta(k\delta)}{k\delta}dt\\
\ra&\int_0^T\left\{\int_D(\nabla\zeta)'\rho\nabla\tau(x)dx\right\}t^{-1}W(t)dt\\
=&\int_{\mR_+\times\mR^d}-\Div(\rho\nabla\tau(x)t^{-1}W(t))\zeta dtdx\\
=&\int_{\mR_+\times\mR^d}(-\pt_t V_1 + \Div ( V_1 \nabla \Psi ) + \Delta V_1)\zeta dtdx\\
=&\int_{\mR_+\times\mR^d}V_1(\pt_t \zeta-\nabla\Psi\cdot\nabla\zeta+\Delta\zeta)dtdx
\end{split}
\end{equation}
For the convergence in the third line of (\ref{eq44-4}), we will give an explicit proof. First let 
\[g(t)=\int_D(\nabla\zeta)'\rho(t,x)\nabla\tau(x)dx=-\int_D\Div[\rho(t,x)\nabla\tau(x)]\zeta dx\]
Then given (A8), there exists $a<1/2$,
\[|g(t)|\leq C\int_D|\Div[\rho(t,x)\nabla\tau(x)]| dx=O(t^{-a})\]
Let $b\in (a,1/2)$, then $\sup_{[0,T]}\frac{|W(t)|}{t^b}=O_P(1)$, and 
\[\left|\int_0^\eta g(t)t^{-1}W(t)dt\right|=\int_0^\eta O_P(t^{b-a-1})dt =o_P(1)\]
as $\eta\ra 0$. For the second line of (\ref{eq44-4}), we have
\begin{equation}\label{eqb-1}
\begin{split}
&\left|\int_0^\eta g(t)\frac{W_\delta(t)}{\lceil t/\delta\rceil\delta} dt\right| \\
=&\left|\int_0^\delta g(t)\frac{\sqrt{\delta}\bZ_1}{\delta} dt\right|+\left|\int_\delta^\eta g(t)\frac{W_\delta(t)}{\lceil t/\delta\rceil\delta} dt\right| \\
\leq &O_P(\delta^{-1/2})\int_0^\delta Ct^{-a} dt+\int_\delta^\eta Ct^{-a}\frac{O_P(t^{1/2})}{t} dt \\
=& O_P(\delta^{1/2-a})+O_P(\eta^{1/2-a}) \\
=&o_P(1)
\end{split}
\end{equation}
uniformly for $\delta<\eta$ as $\eta\ra 0$. In (\ref{eqb-1}), we used $E|W_\delta(t)|^2\leq 2dt$ for $\delta<t$, so $W_\delta(t)=O_P(t^{1/2})$ uniformly for $\delta$. Thus for any $\ve,\nu>0$, we can find fixed $\eta>0$, such that for any $\delta<\eta$,
\[\mP\left(\left|\int_0^\eta g(t)t^{-1}W(t)dt\right|>\nu/3\right)<\ve/3,\]
\[\mP\left(\left|\int_0^\eta g(t)\frac{W_\delta(t)}{\lceil t/\delta\rceil\delta}dt\right|>\nu/3\right)<\ve/3,\]
and since $W_\delta(t)\ra W(t)$ as $\delta\ra 0$, we can find $\delta_u<\eta$, when $\delta<\delta_u$, 
\[\mP\left(\left|\int_\eta^T g(t)\left(\frac{W_\delta(t)}{\lceil t/\delta\rceil\delta}-\frac{W(t)}{t}\right)dt\right|>\nu/3\right)<\ve/3,\]
therefore when $\delta<\delta_u$, 
\[\mP\left(\left|\int_0^T g(t)\frac{W_\delta(t)}{\lceil t/\delta\rceil\delta}-\int_0^T g(t)\frac{W(t)}{t}dt\right|>\nu\right)<\ve.\]

The second term in (\ref{eq520n2}) is a higher-order term, we have
\[\int_{\mR_+\times \mR^d}[\hat{\rho}_{\delta,1}^{m}-\rho](m/\delta)^{1/2}(\nabla_x\hat{\Psi}_{\lceil t/\delta\rceil}-\nabla\Psi) \cdot\nabla\zeta dtdx=o_p(1)\]
Then combining (\ref{eq520n2}), (\ref{eq520n3}), (\ref{eq44-4}), as $\delta m\ra 0$, we have
\[\int_{\mR_+\times\mR^d}(\hat{V}_{\delta,1}^m-V_1)(\pt_t \zeta- \nabla\Psi\cdot\nabla\zeta +\Delta\zeta))dtdx=o_p(1).\]
For any $\xi\in\cC_0^\iy(\mR_+\times\mR^d)$, find $\zeta\in\cC_0^\iy(\mR_+\times\mR^d)$ such that
\[\xi=\pt_t\zeta-\nabla\Psi\cdot\nabla\zeta+\Delta\zeta,\]
then we get the desired result.

Next we will show the existence and uniqueness of the solution of $V_1$. First I will briefly introduce the solution to the heat equation and the relating SPDE. When solving the heat equation
\[
\pt_t u=\Delta u,\ \ , u(0,x)=u_0(x)
\]
we can use the heat semigroup $e^{t\Delta}u_0(x)=\int_{\mR^d} H(t,x-y)u_0(y)dy$ with heat kernel $H(t,x)=(4\pi t)^{-\frac{d}{2}}e^{-\frac{|x|^2}{4t}}$. For the heat SPDE
\[
\pt_t u=\Delta u+\xi(t,x),\ \ , u(0,x)=u_0(x)
\]
$\xi(t,x)$ is a space-time random process, e.g., white noise. The solution is defined by
\[
u=(\pt_t-\Delta)^{-1}\xi+e^{t\Delta}u_0
\]
where the inverted linear differential operator is defined by
\[
(\pt_t-\Delta)^{-1}\xi(t,x)=\int_0^t\int_{\mR^d}H(t-s,x-y)\xi(ds,dy)
\]
Now turn to the equation of $V_1(t,x)$ below,
\begin{equation}\label{Equa-V}
\pt_t V_1=\Div(V_1\nabla\Psi)+\Div[\rho\nabla \tau(x)t^{-1}W(t)]+\Delta V_1, V_1(0,x)=0,
\end{equation}

analog to the heat kernel $H(t,x)$, we consider the solution to the Fokker-Planck equation:
\[
\pt_t P=\Div(P\nabla\Psi)+\Delta P, \ \ \ P(0,\cdot)=\delta_0
\]
Then the solution with initial distribution $\rho_0$ can be written as
\begin{equation}\label{eqa-0}
\rho(t,x)=\int_{\mR^d} P(t,x-y)\rho_0(y)dy
\end{equation}
And we can give the explicit expression of $V_1$:
\begin{equation}\label{eqa-1}
V_1(t,x)=\int_0^t\int_{\mR^d} P(t-s,x-y)\Div[\rho(s,y)\nabla \tau(y)s^{-1}W(s)]dsdy
\end{equation}
We can show that $V_1$ is a.s. finite. By (A8), we can choose $a<1/2$, such that for any $s<t$,  
\[\int_{\mR^d} |\Div[\rho(s,y)\nabla \tau(y)]|dy=O(s^{-a})\]
Then we can show 
\[\int_{\mR^d} P(t-s,x-y)|\Div[\rho(s,y)\nabla \tau(y)]|dy=O(s^{-a})\]
Since when $t-s>\eta>0$, $P(t-s,x-y)$ is bounded, and when $s\ra t$, The LHS converges to $|\Div[\rho(t,x)\nabla \tau(x)]|$ which is bounded. Let $b\in (a,1/2)$, then $\sup_{[0,T]}\frac{|W(s)|}{s^b}=O_P(1)$, and
\[
\begin{aligned}
&\left|\int_0^t\left[\int_{\mR^d} P(t-s,x-y)\Div[\rho(s,y)\nabla \tau(y)]dy\right]s^{-1}W(s)ds\right| \\
\leq &C\int_0^t s^{-a}s^{-1}\cC s^b ds \\
=& O_P(1)
\end{aligned}
\]
If we replace $P$ with $\pt_t P, \nabla P, \Delta P$ in (\ref{eqa-1}), the resulting integral is also a.s. finite. Then the direct calculation gives:
\[
\begin{aligned}
\pt_t V_1&=\Div[\rho(t,x)\nabla \tau(x)t^{-1}W(t)]+\int_0^t\int_{\mR^d} \pt_t P(t-s,x-y)\Div[\rho(s,y)\nabla \tau(y)s^{-1}W(s)]dsdy \\
&=\Div[\rho(t,x)\nabla \tau(x)t^{-1}W(t)]+\Div(V_1\nabla\Psi)+\Delta V_1
\end{aligned}
\]
Finally, the difference of two solutions to (\ref{Equa-V}) will solve the original Fokker-Planck equation with $\rho_0=0$. By the explicit solution of Fokker-Planck equation (\ref{eqa-0}), The solution will be zero when the initial value is zero, so the solution to (\ref{Equa-V}) must be unique, see also \cite{otto1996l1}.

If we only assume (A7),(A8), then by the Fokker-Planck equation of $\tilde{\rho}_1^m$, we can obtain the following equality, which is similar to (\ref{eq520n3}) without the $O_P(\delta)$ term.
\begin{equation}\label{eq520nn}
\begin{split}
& \int_{\mR_+\times\mR^d}(\tilde{\rho}_{1}^{m}-\rho)(\pt_t\zeta-\nabla\Psi\cdot\nabla\zeta+\Delta\zeta) dtdx \\
=&\int_{\mR_+\times \mR^d}\tilde{\rho}_{1}^{m}(\nabla_x\hat{\Psi}_{\lceil t/\delta\rceil}-\nabla\Psi) \cdot\nabla\zeta dtdx.
\end{split}
\end{equation}
Then use the same argument we can prove $\tilde{V}_1^m$ also converges to $V_1$ as $\delta m^{1+2\alpha}\ra\iy$. (Note that here we do not need to assume $\delta m\ra 0$ and $\delta m^{1/(2-2\gamma)}>C>0$). Furthermore, if we do not assume (A7) (only assume (A8)), then in (\ref{eq520n3}), we have
\[m^{1/2}(\nabla_x\hat{\Psi}_{k}-\nabla\Psi)-\nabla\tau(x)\bar{\bZ}_k=O_P(m^{-1/2}),\]
then as $\delta\ra 0, m\ra\iy, \delta m\ra\iy$, $\tilde{V}_1^m$ still converges to $ V_1$.

\begin{remark}
Assumption (A7): $\left|E(\sum_{i=1}^m\nabla_x\Psi_{\theta}(X(i\eta)))\right|=O(m^{-\alpha})$ is important to control the bias $\hat{\theta}^k-\theta$ in a small order. Here is a case when (A7) is also a necessary condition. Consider $\Psi_\theta(x)=\frac{1}{2}|x-\theta|^2$, in this case, $\hat{\theta}^k=\bar{X}$ which is the average of all $mk$ samples. Since $E(X(t))=\theta+e^{-t}(E(X_0)-\theta)$, 
\[E(\hat{\theta}^k)=\theta+\frac{1}{m}\sum_{i=1}^m e^{-i\eta}(E(X_0)-\theta)=\theta+\frac{e^{-\eta}(1-e^{-m\eta})}{m(1-e^{-\eta})}(E(X_0)-\theta).\]
If $E(X_0)=\theta$, then (A7) is true for arbitrary $\alpha$; if $E(X_0)\neq\theta$, then
\[E\left(\sum_{i=1}^m\nabla_x\Psi_{\theta}(X(i\eta))\right)=m(E(\hat{\theta}^k)-\theta)=\Theta(1)\]
We can only take $\alpha=0$ in (A7), which means $\delta m$ must converge to infinity. From another aspect,
\[E(m^{1/2}(\nabla_x\hat{\Psi}_{k}-\nabla\Psi)-\nabla\tau(x)\bar{\bZ}_k)=m^{1/2}E(\theta-\hat{\theta}^k)=\Theta(m^{-1/2})\]
This is the same as the case when we only use $m$ samples to estimate $\theta$ (it means using $km$ samples to estimate $\theta$ does not decrease the bias). As a result, in (\ref{eq520n3}), we will need $\delta m\ra\iy$, which is a contradicted with $\delta m\ra 0$. So $E(X_0)=\theta$ in this case is also a necessary condition.

We can solve the SPDE of $V_1$ in certain examples. e.g. If $d=1, \Psi(x)=\frac{1}{2}x^2, \beta=1$, the solution of the Langevin equation is $X_t=e^{-t}X_0+\sqrt{2}e^{-t}\int_0^t e^sdB_s$. When $X_0=0$, $X_t$ follows Gaussian distribution $N(0, 1-e^{-2t})$. So
\[P(t,x)=[2\pi(1-e^{-2t})]^{-1/2}\exp{\left(-\frac{x^2}{2(1-e^{-2t})}\right)}\]
For simplicity, assume $\rho^0\sim\pi=N(0,1)$, then $\rho(t,\cdot)\sim N(0,1)$, $\nabla\tau(x)=-\gamma_\theta=-\sqrt{\frac{1+e^{-\eta}}{1-e^{-\eta}}}$. Then the explicit solution of $V_1$ can be written as:
\[V_1(t,x)=\int_0^t\int_{\mR} (2\pi)^{-1}\left[1-e^{-2(t-s)}\right]^{-1/2}\exp{\left(-\frac{(x-y)^2}{2(1-e^{-2(t-s)})}-\frac{y^2}{2}\right)}y\gamma_\theta s^{-1}W(s)dyds\]
Since
\[\int_{\mR}y\exp{\left(-\frac{(x-y)^2}{2\sigma^2}-\frac{y^2}{2}\right)}dy=(2\pi\sigma^2)^{1/2}(\sigma^2+1)^{-3/2}x\exp{\left(-\frac{x^2}{2(\sigma^2+1)}\right)}\]
We obtain
\[V_1(t,x)=x\gamma_\theta(2\pi)^{-1/2}\int_0^t \left[2-e^{-2(t-s)}\right]^{-3/2}\exp{\left(-\frac{x^2}{2(2-e^{-2(t-s)})}\right)} s^{-1}W(s)ds\]
Therefore $V_1(t,x)$ follows Gaussian distribution and $V_1(t,-x)=-V_1(t,x)$. We show the graph of the function $V_1$ for an example in Section 7 of the main manuscript.
\end{remark}

%\subsection{Proof of Proposition \ref{prop-eu}}

%\begin{remark}
%Notice that $\nabla\tau(x)$ is a matrix, so $\Div(\rho\nabla\tau(x))$ can be interpreted as the operator
%\[(\pt_{x_1},\pt_{x_2},\dots,\pt_{x_d})(\rho\nabla\tau(x))\]
%which is a row vector, thus can multiply by the column vector $W$.

%We cannot prove the existence of the solution using stochastic equicontinuity, consider the solution to the following SPDE:
%\begin{equation}\label{eqr2}
%\pt_t V_\eta(t,x) = \Div \left( V_\eta \nabla \Psi \right) +   \Div\left( \rho \nabla  \tau(x) (t\vee\eta)^{-1}W(t)  \right) + \beta^{-1} \Delta V_\eta , V_\eta(0,x)=0.
%\end{equation}
%when $\eta$ goes to zero, $\eta^{-1}W(\eta)$ is not bounded in probability, so $\pt_t V_\eta(\eta,x)$ is not bounded neither. So we cannot expect the stochastic equicontinuity result like:
%\[\sup_{t,x}|\pt_t V_\eta(t,x)|=O_P(1)\]
%\end{remark}

\subsection{Proof of Theorem 5.2}%\ref{prop-44}}
\label{appB6}
Assume $\beta=1$, for any $\zeta\in\cC_0^\iy(\mR_+\times\mR^d)$, the support of $\zeta$ is a subset of $[0,T]\times D$, where $D$ is a compact subset of $\mR^d$. By Lemma \ref{thm-41},
\begin{equation}\label{eq44-0}
\int_{\mR_+\times\mR^d}\hat{\rho}_{\delta,2}^{m}(\pt_t\zeta-\nabla\hat{\Psi}_m^{(\lceil t/\delta\rceil)}\cdot\nabla\zeta+\Delta\zeta) dtdx=-\int_{\mR^d}\zeta(0,x)\rho^0dx+O_P(\delta),
\end{equation}
together with (\ref{eq-38}), we get
\begin{equation}\label{eq44-1}
\begin{split}
& \int_{\mR_+\times\mR^d}\hat{V}_{\delta,2}^m(\pt_t \zeta- \nabla\Psi\cdot\nabla\zeta +\Delta\zeta))dt dx  \\
=& \int_{\mR_+\times\mR^d}\rho(m/\delta)^{1/2}(\nabla\hat{\Psi}_m^{(\lceil t/\delta\rceil)}-\nabla\Psi) \cdot\nabla\zeta dtdx+\\
&\int_{\mR_+\times \mR^d}[\hat{\rho}_{\delta,2}^{m}-\rho](m/\delta)^{1/2}(\nabla\hat{\Psi}_m^{(\lceil t/\delta\rceil)}-\nabla\Psi) \cdot\nabla\zeta dtdx+O_P((m\delta)^{1/2}).
\end{split}
\end{equation}
Next we can show a similar result as Lemma \ref{lem5-1}, let 
$\epsilon_{m,j}=\sqrt{m}(\nabla_x\Psi_{\hat{\theta}^{(j)}_m}(x)-\nabla_x\Psi_\theta(x))-\nabla\tau(x)\bZ_j$, then by (A9),
\[|E\epsilon_{m,j}|=O(m^{-1/2-\alpha}), E|\epsilon_{m,j}|^2=o(1)\]
Since $\epsilon_{m,j}$ are independent, when $k=o(m^{1+2\alpha})$, similar calculation gives
\[E\left|\sqrt{mk}(\nabla_x\hat{\Psi}_m^{(k)}(x)-\nabla_x\Psi_\theta(x))-\sqrt{k}\nabla\tau(x)\bar{\bZ}_k\right|^2=E\left|k^{-1/2}\sum_{j=1}^k\epsilon_{m,j}\right|^2=o(1)\]
Then as $\delta\ra 0, m\ra\iy, \delta m^{1+2\alpha}\ra\iy$, we have
%First fix $\delta$, using the same argument in the proof of Theorem \ref{thm-320}, we find that, as $n\ra\iy$,
\begin{equation}\label{eq44-2}
\begin{split}
&E\left|\int_{[0,T]\times D}\rho(m/\delta)^{1/2}(\nabla\hat{\Psi}_m^{(\lceil t/\delta\rceil)}-\nabla\Psi) \cdot\nabla\zeta dtdx-\sum_{k=1}^{\lceil T/\delta\rceil}\int_{t_{k-1}}^{t_k}\int_D[\rho\delta^{-1/2}\nabla\tau(x)\bar{\bZ}_k]\cdot\nabla\zeta dxdt\right| \\
\leq&\sum_{k=1}^{\lceil T/\delta\rceil}\int_{t_{k-1}}^{t_k}\int_D\rho\delta^{-1/2}E\left|m^{1/2}(\nabla\hat{\Psi}_m^{(k)}-\nabla\Psi)-\nabla\tau(x)\bar{\bZ}_k\right| \cdot|\nabla\zeta| dxdt\\
\leq&\sum_{k=1}^{\lceil T/\delta\rceil}\cC\delta^{1/2}o(k^{-1/2})\\
=&o(1)
\end{split}
\end{equation}
%where $\bar{\bZ}_k=\frac{1}{k}\sum_{j=1}^k\bZ_j$. Next, for fixed $\delta$, since $\nabla\hat{\Psi}_n^{(k)}(x)$ a.s. converges to $\nabla\Psi(x)$ for $k\leq \lceil T/\delta\rceil$, $\hat{\rho}_{\delta,2}^{n*}(t,x)$ will pointwisely-converge to $\rho(t,x)$ almost surely, by Scheffe's theorem, $\int_{[0,T]\times\mR^d}|\hat{\rho}_{\delta,2}^{n*}-\rho|dtdx\stackrel{a.s.}{\ra} 0$. Since $\delta$ is fixed, $\max_{x \in D,k\leq \lceil T/\delta\rceil} \sqrt{n}| \nabla\hat{\Psi}_n^{(k)}(x) - \nabla \Psi(x) | = O_P(1)$, then we can show that
and
\begin{equation}\label{eq44-3}
\int_{\mR_+\times \mR^d}[\hat{\rho}_{3}^{m}-\rho](m/\delta)^{1/2}(\nabla\hat{\Psi}_m^{(\lceil t/\delta\rceil)}-\nabla\Psi) \cdot\nabla\zeta dtdx=o_P(1)
\end{equation}
%\[\mathop{\mbox{lim-p}}\limits_{n\ra\iy}\int_{\mR_+\times\mR^d}\hat{V}_{\delta,2}^n(\pt_t \zeta- \nabla\Psi\cdot\nabla\zeta +\Delta\zeta))dtdx=\sum_{k=1}^{\lceil T/\delta\rceil}\int_{t_{k-1}}^{t_k}\int_D[\rho\delta^{-1/2}\nabla\tau(x)\bar{\bZ}_k]\cdot\nabla\zeta dxdt\]
As $\delta\ra 0$, 
\begin{equation}\label{eq44-41}
\sum_{k=1}^{\lceil T/\delta\rceil}\int_{t_{k-1}}^{t_k}\int_D[\rho\delta^{-1/2}\nabla\tau\bar{\bZ}_k]\cdot\nabla\zeta dxdt  \ra\int_{\mR_+\times\mR^d}V_1(\pt_t \zeta-\nabla\Psi\cdot\nabla\zeta+\Delta\zeta)dtdx
\end{equation}
Combining (\ref{eq44-1}), (\ref{eq44-2}), (\ref{eq44-3}) and (\ref{eq44-41}), as $\delta m\ra 0$, we get
\[\int_{\mR_+\times\mR^d}(\hat{V}_{\delta,2}^m-V_1)(\pt_t \zeta- \nabla\Psi\cdot\nabla\zeta +\Delta\zeta))dtdx=o_p(1).\]
For any $\xi\in\cC_0^\iy(\mR_+\times\mR^d)$, find $\zeta\in\cC_0^\iy(\mR_+\times\mR^d)$ such that
\[\xi=\pt_t\zeta-\nabla\Psi\cdot\nabla\zeta+\Delta\zeta,\]
then we get the desired result. 

If we only assume (A8),(A9), then by the Fokker-Planck equation of $\tilde{\rho}_2^m$, we can obtain the similar equality as (\ref{eq44-0}), where we can replace $\hat{\rho}_{\delta,2}^m$ with $\tilde{\rho}_2^m$ and remove the $O_P(\delta)$ term. Then use the same argument we can prove $\tilde{V}_2^m$ also converges to $V_1$ as $\delta m^{1+2\alpha}\ra\iy$. Furthermore, if we do not assume (A9) (only assume (A8)), then in (\ref{eq44-2}), we have
\[m^{1/2}(\nabla\hat{\Psi}_m^{(k)}-\nabla\Psi)-\nabla\tau(x)\bar{\bZ}_k=O_P(m^{-1/2}),\]
then as $\delta\ra 0, m\ra\iy,\delta m\ra\iy$, $\tilde{V}_2^m$ still converges to $ V_1$.

\subsection{Proof of Proposition 5.1}%\ref{prop-42}}
\label{appB4}
Assume $\beta=1$, For any $\zeta\in\cC_0^\iy(\mR_+\times\mR^d)$, the support of $\zeta$ is a subset of $[0,T]\times D$, where $D$ is a compact subset of $\mR^d$. Note that
\[
\int_{\mR_+\times\mR^d}\tilde{\rho}_{3}^{m}(\pt_t\zeta-\nabla_x\Psi_{\hat{\theta}^{(\lceil t/\delta\rceil)}_m}\cdot\nabla\zeta+\Delta\zeta) dtdx=-\int_{\mR^d}\zeta(0,x)\rho^0dx,
\]
together with (\ref{eq-38}), we get
\begin{equation}\label{eq42-1}
\begin{split}
& \int_{\mR_+\times\mR^d}\tilde{V}_{3}^m(\pt_t \zeta- \nabla\Psi\cdot\nabla\zeta +\Delta\zeta))dt dx  \\
=& \int_{\mR_+\times\mR^d}\rho(m/\delta)^{1/2}(\nabla_x\Psi_{\hat{\theta}^{(\lceil t/\delta\rceil)}_m}-\nabla\Psi) \cdot\nabla\zeta dtdx+\\
&\int_{\mR_+\times \mR^d}[\tilde{\rho}_{3}^{m}-\rho](m/\delta)^{1/2}(\nabla_x\Psi_{\hat{\theta}^{(\lceil t/\delta\rceil)}_m}-\nabla\Psi) \cdot\nabla\zeta dtdx.
\end{split}
\end{equation}
%First fix $\delta$, using the same argument in the proof of Theorem \ref{thm-320}, we find that, as $n\ra\iy$,
As $\delta\ra 0, m\ra\iy, \delta m\ra\iy$, we have
\begin{equation}\label{eq42-2}
\begin{split}
&\left|\int_{[0,T]\times D}\rho(m/\delta)^{1/2}(\nabla_x\Psi_{\hat{\theta}^{(\lceil t/\delta\rceil)}_m}-\nabla\Psi) \cdot\nabla\zeta dtdx-\sum_{k=1}^{\lceil T/\delta\rceil}\int_{t_{k-1}}^{t_k}\int_D[\rho\delta^{-1/2}\nabla\tau(x)\bZ_k]\cdot\nabla\zeta dxdt\right|\\
\leq&\sum_{k=1}^{\lceil T/\delta\rceil}\int_{t_{k-1}}^{t_k}\int_D\rho\delta^{-1/2}\left|m^{1/2}(\nabla_x\Psi_{\hat{\theta}^{(k)}_m}-\nabla\Psi)-\nabla\tau(x)\bZ_k\right| \cdot|\nabla\zeta| dxdt\\
\leq&\sum_{k=1}^{\lceil T/\delta\rceil}\cC\delta^{1/2}m^{-1/2}\\
=&o_P(1)
\end{split}
\end{equation}
and
%Next, for fixed $\delta$, since $\nabla_x\Psi_{\hat{\theta}_n^{(k)}}(x)$ a.s. converges to $\nabla\Psi(x)$ for $k\leq \lceil T/\delta\rceil$, $\hat{\rho}_{\delta,1}^{n*}(t,x)$ will pointwisely-converge to $\rho(t,x)$ almost surely, by Scheffe's theorem, $\int_{[0,T]\times\mR^d}|\hat{\rho}_{\delta,1}^{n*}-\rho|dtdx\stackrel{a.s.}{\ra} 0$. Since $\delta$ is fixed, $\max_{x \in D,k\leq \lceil T/\delta\rceil} \sqrt{n}| \nabla_x\Psi_{\hat{\theta}_n^{(k)}}(x) - \nabla \Psi(x) | = O_P(1)$, then we can show that
\begin{equation}\label{eq42-3}
\int_{\mR_+\times \mR^d}[\tilde{\rho}_{3}^{m}-\rho](m/\delta)^{1/2}(\nabla_x\Psi_{\hat{\theta}^{(\lceil t/\delta\rceil)}_m}-\nabla\Psi) \cdot\nabla\zeta dtdx=o_P(1)
\end{equation}
%\[\mathop{\mbox{lim-p}}\limits_{n\ra\iy}\int_{\mR_+\times\mR^d}\hat{V}_{\delta,1}^n(\pt_t \zeta- \nabla\Psi\cdot\nabla\zeta +\Delta\zeta))dtdx=\sum_{k=1}^{\lceil T/\delta\rceil}\int_{t_{k-1}}^{t_k}\int_D[\rho\delta^{-1/2}\nabla\tau(x)\bZ_k]\cdot\nabla\zeta dxdt\]
As $\delta\ra 0$, we have 
\begin{equation}\label{eq42-4}
\begin{split}
& \sum_{k=1}^{\lceil T/\delta\rceil}\int_{t_{k-1}}^{t_k}\int_D[\rho\delta^{-1/2}\nabla\tau(x)\bZ_k]\cdot\nabla\zeta dxdt  \\
=& \sum_{k=1}^{\lceil T/\delta\rceil}\int_{t_{k-1}}^{t_k}\left\{\int_D(\nabla\zeta)'\rho\nabla\tau(x)dx\right\}\delta^{-1/2}\bZ_kdt\\
\ra&\int_0^T\left\{\int_D(\nabla\zeta)'\rho\nabla\tau(x)dx\right\}dW_t\\
=&\int_{\mR_+\times\mR^d}(\nabla\zeta)'\rho\nabla\tau(x)\dot{W}(t)dtdx\\
=&\int_{\mR_+\times\mR^d}-\Div(\rho\nabla\tau(x)\dot{W}(t))\zeta dtdx\\
=&\int_{\mR_+\times\mR^d}(-\pt_t V_2 + \Div ( V_2 \nabla \Psi ) + \Delta V_2)\zeta dtdx\\
=&\int_{\mR_+\times\mR^d}V_2(\pt_t \zeta-\nabla\Psi\cdot\nabla\zeta+\Delta\zeta)dtdx
\end{split}
\end{equation}
Combining (\ref{eq42-1}), (\ref{eq42-2}), (\ref{eq42-3}) and (\ref{eq42-4}), we get
\[\int_{\mR_+\times\mR^d}(\tilde{V}_{3}^m-V_2)(\pt_t \zeta- \nabla\Psi\cdot\nabla\zeta +\Delta\zeta))dtdx=o_p(1).\]
For any $\xi\in\cC_0^\iy(\mR_+\times\mR^d)$, find $\zeta\in\cC_0^\iy(\mR_+\times\mR^d)$ such that
\[\xi=\pt_t\zeta-\nabla\Psi\cdot\nabla\zeta+\Delta\zeta,\]
then we get the desired result. 

We can prove the solution is a.s. finite and unique using the same argument in the proof of Theorem 5.1, %\ref{thm-2}, 
we only need to replace $t^{-1}W(s)$ with $\dot{W}(t)$ therein, and notice that for $a<1/2$,
\[\int_0^t s^{-a}\dot{W}(t)dt=\int_0^t s^{-a}dW(t)=O_P(1)\]

%\begin{remark}
%$V_1$ is derived when we use all samples before step $k$ to estimate $\hat{\Psi}_k$. In contrast, $V_2$ is derived when we only use samples in step $k$ to estimate $\hat{\Psi}_k$. At first sight, average estimators should be better, in the sense that the estimator has smaller variance in each step, however, it will cause the overlap of random elements when integrate it together, thus increase the overall variance.

%Here we use a simple example to explain this point. Let $G(t)=\int_0^ts^{-1}W(s)ds$ and $W(t)=\int_0^t\dot{W}(s)ds$. Consider any small interval $[(k-1)\delta,k\delta]$, $\int_{(k-1)\delta}^{k\delta}s^{-1}W(s)ds\approx\delta(k\delta)^{-1}W(k\delta)$ has variance $\delta/k$, while $\int_{(k-1)\delta}^{k\delta}\dot{W}(s)ds=W(k\delta)-W((k-1)\delta)$ has variance $\delta$. However, the variance of $G(t)$ is $\int_0^t(\log(t)-\log(s))^2ds=2t$, which is twice of the variance of $W(t)$. This implies that using the average estimator $\hat{\Psi}_{m}^{(k)}(x)$ may cause larger variance than using the independent estimator $\Psi_{\hat{\theta}^{(k)}_m}$.
%\end{remark}

%\subsection{Proof of Proposition \ref{prop-46}}\label{appB7}

\subsection{Proof of Proposition 5.2}%\ref{prop-47}}
\label{appB8}

Recall that, for fixed $\eta$, we have the CLT
\[(t/\delta)^{1/2}(\hat{\theta}_\delta(t)-\theta)\stackrel{d}{\ra} N(0,\gamma_\theta^2)\]
%To study the asymptotic property of the process $\{\hat{\theta}_\delta(t), t\in[0,T]\}$, we will also evaluate the covariance between $\hat{\theta}_\delta(t)$ and $\hat{\theta}_\delta(s)$, we claim that:
In the first part of the proof, we will show that there exists a Brownian motion $W(t)$, such that as $\delta\ra 0$,
%This result provide clear evidence that the process $\{t\delta^{-1/2}\gamma_\theta^{-1}(\hat{\theta}_\delta(t)-\theta), t\in [0,T]\}$ will weakly converge to a $d$-dimensional Brownian motion $W(t)$ on $\cC([0,T])$, say
\[t\delta^{-1/2}\gamma_\theta^{-1}(\hat{\theta}_\delta(t)-\theta)\Rightarrow W(t)\]
%and
%\[t\delta^{-1/2}(\nabla_x\Psi_{\hat{\theta}_\delta(t)}-\nabla\Psi)\ra \nabla\tau(x)W(t)\]
Indeed, denote $Y_i=f(X(i\eta))=\nabla_x\Psi_\theta(X(i\eta))$, $E_\pi Y_i=0$, denote the autocovariance $\gamma_k=\cov_\pi(Y_i,Y_{i+k})$ (this should be distinguished from $\gamma_\theta$), denote $S_n=\sum_{i=1}^n Y_i$, then for $m\geq 2n$, 
\[\cov_\pi(S_n,S_m)=n\gamma_0+\sum_{i=1}^{n-1}(2n-i)\gamma_i+n\sum_{i=n}^{m-n}\gamma_i+\sum_{i=m-n+1}^{m-1}(m-i)\gamma_i\]
for $n\leq m<2n$,
\[\cov_\pi(S_n,S_m)=n\gamma_0+\sum_{i=1}^{m-n}(2n-i)\gamma_i+\sum_{i=m-n+1}^{n-1}(m+n-2i)\gamma_i+\sum_{i=n}^{m-1}(m-i)\gamma_i\]
In both cases, when $\{Y_i\}$ satisfy certain mixing conditions (which is guaranteed under the assumptions of Proposition \ref{prop-th}), we will have
\[n^{-1}\cov_\pi(S_n,S_m)\ra \gamma_0+2\sum_{i=1}^{\iy}\gamma_i=\Sigma_f\]
Thus for $0<s\leq t$, as $\delta\ra 0$,
\[\delta\cov_\pi(S_{\lceil s/\delta\rceil},S_{\lceil t/\delta\rceil})\ra s\Sigma_f\]
Therefore
\begin{equation}
\left(
\begin{array}{cc}
     \delta^{1/2}S_{\lceil t/\delta\rceil}  \\
     \delta^{1/2}S_{\lceil s/\delta\rceil}
\end{array}
\right)\stackrel{d}{\ra}
N\left(0,\left(
\begin{array}{cc}
     t\Sigma_f &\  s\Sigma_f  \\
     s\Sigma_f &\  s\Sigma_f
\end{array}
\right)\right)
\end{equation}
Use the same argument in the proof of Proposition \ref{prop-th}, we get
\begin{equation}
\left(
\begin{array}{cc}
     \delta^{1/2}\sum_{i=1}^{\lceil t/\delta\rceil}\nabla_\theta'f(X(i\eta))\cdot(\hat{\theta}_\delta(t)-\theta)  \\
     \delta^{1/2}\sum_{i=1}^{\lceil s/\delta\rceil}\nabla_\theta'f(X(i\eta))\cdot(\hat{\theta}_\delta(s)-\theta)
\end{array}
\right)\stackrel{d}{\ra}
N\left(0,\left(
\begin{array}{cc}
     t\Sigma_f &\  s\Sigma_f  \\
     s\Sigma_f &\  s\Sigma_f
\end{array}
\right)\right)
\end{equation}
Since $\frac{1}{t/\delta}\sum_{i=1}^{\lceil t/\delta\rceil}\nabla_\theta'f(X(i\eta))\ra E_\pi(\nabla_\theta'f(X))=A$, then
\begin{equation}
\left(
\begin{array}{cc}
     t\delta^{-1/2}(\hat{\theta}_\delta(t)-\theta)  \\
     s\delta^{-1/2}(\hat{\theta}_\delta(s)-\theta)
\end{array}
\right)\stackrel{d}{\ra}
N\left(0,\left(
\begin{array}{cc}
     t\Gamma_\theta &\  s\Gamma_\theta  \\
     s\Gamma_\theta &\  s\Gamma_\theta
\end{array}
\right)\right)
\end{equation}
Left-multiplied by $\gamma_\theta^{-1}$, we get the finite-dimensional convergence:
\begin{equation}
\left(
\begin{array}{cc}
     t\delta^{-1/2}\gamma_\theta^{-1}(\hat{\theta}_\delta(t)-\theta)  \\
     s\delta^{-1/2}\gamma_\theta^{-1}(\hat{\theta}_\delta(s)-\theta)
\end{array}
\right)\stackrel{d}{\ra}
\left(
\begin{array}{cc}
     W(t)  \\
     W(s)
\end{array}
\right)
\end{equation}
Let $W_{\delta,1}(t)=t\delta^{-1/2}\gamma_\theta^{-1}(\hat{\theta}_\delta(t)-\theta)$, since $W_{\delta,1}(t)-W_{\delta,1}(s)$ converges to $W(t)-W(s)$, we can show the tightness of $W_{\delta,1}(t)-W_{\delta,1}(s)$. Thus $W_{\delta,1}(t)$ weakly converges to the Brownian motion $W(t)$. By Skorohord's representation theorem, we can realize them on the same probability space and get the almost surely convergence.

Assume $\beta=1$, for any $\zeta\in\cC_0^\iy(\mR_+\times\mR^d)$, assume the support of $\zeta$ is a subset of $[0,T]\times D$, where $D$ is a compact subset of $\mR^d$. By Taylor expansion, for $x\in D$, we have uniformly
\[t\delta^{-1/2}(\nabla_x\Psi_{\hat{\theta}_\delta(t)}-\nabla\Psi)-\nabla\tau(x)W(t)=o_P(1)\]
Note that
\[\int_{\mR_+\times\mR^d}\tilde{\rho}_2(\pt_t\zeta-\nabla_x\Psi_{\hat{\theta}_\delta(t)}\cdot\nabla\zeta+\Delta\zeta) dtdx=-\int_{\mR^d}\zeta(0,x)\rho^0(x)dx\]
together with (\ref{eq-38}), we get
\begin{equation}\label{eq46-1}
\begin{split}
& \int_{\mR_+\times\mR^d}\tilde{V}_2(\pt_t \zeta- \nabla\Psi\cdot\nabla\zeta +\Delta\zeta))dt dx  \\
=& \int_{\mR_+\times\mR^d}\rho\delta^{-1/2}(\nabla_x\Psi_{\hat{\theta}_\delta(t)}-\nabla\Psi)\cdot\nabla\zeta dtdx+\\
&\int_{\mR_+\times \mR^d}[\tilde{\rho}_2-\rho]\delta^{-1/2}(\nabla_x\Psi_{\hat{\theta}_\delta(t)}-\nabla\Psi) \cdot\nabla\zeta dtdx\\
\ra&\int_{\mR_+\times\mR^d}\rho\nabla\tau(x)t^{-1}W(t)\cdot\nabla\zeta dtdx \\
=&\int_{\mR_+\times\mR^d}V_1(\pt_t \zeta-\nabla\Psi\cdot\nabla\zeta+\Delta\zeta)dtdx
\end{split}
\end{equation}
For the convergence in the fourth line, we can use the same argument as in the proof of Theorem 5.1%\ref{thm-2}
, that is, we can prove both integrals are negligible around 0 and show the convergence of the integral on $t\in[\eta,T]$. 

Finally for any $\xi\in\cC_0^\iy(\mR_+\times\mR^d)$, find $\zeta\in\cC_0^\iy(\mR_+\times\mR^d)$ such that
\[\xi=\pt_t\zeta-\nabla\Psi\cdot\nabla\zeta+\Delta\zeta,\]
then we get the desired result.

\subsection{Proof of Proposition 5.3}%\ref{prop-lst}}
\label{appB9}
Solving the Langevin equation (\ref{Lang}), we get
\[X_t=\theta+e^{-t}(X_0-\theta)+\sqrt{2}e^{-t}\int_0^t e^sdB_s,\]
\[\hat{\theta}(t)-\theta=\frac{1}{t}\left((1-e^{-t})(X_0-\theta)+\sqrt{2}\int_0^t (1-e^{u-t})dB_u\right),\]
Thus it is biased. One may try to subtract the bias to create an unbiased estimator of $\theta$, e.g. let 
\[\hat{\theta}'(t)=\frac{\hat{\theta}(t)-a_tX_0}{1-a_t},\]
where $a_t=(1-e^{-t})/t$, then
\[\hat{\theta}'(t)-\theta=\frac{\sqrt{2}}{t(1-a_t)}\int_0^t (1-e^{u-t})dB_u,\]
which is unbiased but still has non-negligible variance. 

Next, let $Y_t=X_t-\theta$, we have
\[Y_t=e^{-t}Y_0+\sqrt{2}e^{-t}\int_0^t e^sdB_s\]
We plug in the solution of $Y_t$ to the expression of $\hat{\theta}_\delta(t)$ and $\hat{\theta}(t)$, note that $Y_0=0$,
\[\hat{\theta}_\delta(t)-\theta=\frac{1}{n}\sum_{i=1}^n\left(\sqrt{2}e^{-i\delta}\int_0^{i\delta}e^sdB_s\right)\]
\[\hat{\theta}(t)-\theta=\frac{1}{t}\left(\sqrt{2}\int_0^t (1-e^{s-t})dB_s\right)\]
then 
\[\hat{\theta}_\delta(t)-\hat{\theta}(t)=\sqrt{2}\sum_{i=0}^{n-1}\int_{i\delta}^{(i+1)\delta}\left[\frac{e^s}{n}(e^{-(i+1)\delta}+\cdots+e^{-n\delta})-\frac{1-e^{s-t}}{t}\right]dB_s\]
Now we analyze the variance of $t(\hat{\theta}_\delta(t)-\hat{\theta}(t))$, consider the integral below
\[\begin{split}
J_i&=\int_{i\delta}^{(i+1)\delta}\left[\frac{\delta e^se^{-(i+1)\delta}(1-e^{-(n-i)\delta})}{1-e^{-\delta}}-(1-e^{s-t})\right]^2ds\\
&=\int_{0}^{\delta}\left[\frac{\delta e^s(1-e^{i\delta-t})}{e^\delta-1}+e^{s+i\delta-t}-1\right]^2ds
\end{split}\]
Denote $a_i=\frac{\delta(1-e^{i\delta-t})}{e^\delta-1}+e^{i\delta-t}$ (the subscript $i$ is omitted below for convenience), then
\[J_i=\int_{0}^{\delta}(ae^s-1)^2ds=\frac{a^2}{2}(e^{2\delta}-1)-2a(e^{\delta}-1)+\delta\]
Since $\frac{\delta}{e^{\delta}-1}=1-\delta/2+o(\delta)$, $a-1=(1-e^{i\delta-t})[-1/2+o(1)]\delta$, $a=1+o(1)$, we have
\[\begin{split}
J_i&=\frac{a^2}{2}(2\delta+2\delta^2+8\delta^3/6+o(\delta^3))-2a(\delta+\delta^2/2+\delta^3/6+o(\delta^3))+\delta\\
&=(a-1)^2\delta+a(a-1)\delta^2+(2a^2/3-a/3)\delta^3+o(\delta^3)\\
&=(1-e^{i\delta-t})^2[-1/2+o(1)]^2\delta^3+[1+o(1)](1-e^{i\delta-t})[-1/2+o(1)]\delta^3+\\
&[1/3+o(1)]\delta^3+o(\delta^3)\\
&=((1-e^{i\delta-t})^2/4-(1-e^{i\delta-t})/2+1/3)\delta^3+o(\delta^3)\\
&=\frac{1}{12}(1+3e^{2(i\delta-t)})\delta^3+o(\delta^3)
\end{split}\]
The variance of $n(\hat{\theta}_\delta(t)-\hat{\theta}(t))$ is $2\sum_{i=0}^{n-1}J_i/\delta^2$, when $\delta=t/n\ra 0$, 
\[2\sum_{i=0}^{n-1}J_i/\delta^2=\sum_{i=0}^{n-1}\left[\frac{1}{6}(1+3e^{2(i\delta-t)})\delta+o(\delta)\right]\ra\frac{1}{6}\int_0^t[1+3e^{2(s-t)}]ds=\frac{1}{6}t+\frac{1}{4}(1-e^{-2t})\]
Since $n(\hat{\theta}_\delta(t)-\hat{\theta}(t))$ is normally distributed with mean 0,
\[n(\hat{\theta}_\delta(t)-\hat{\theta}(t))\stackrel{d}{\ra} N\left(0, \frac{1}{6}t+\frac{1}{4}(1-e^{-2t})\right).\]

However, the variance of $t/\delta(\hat{\theta}_\delta(t)-\hat{\theta}(t))$ does not converge for arbitrary $\delta\ra 0$, e.g., take $t=1$, $\delta_m=0.0001m$, $m=1,2,\dots,100$, we calculate and plot the variance in Figure \ref{fig0}. We find that only when $m$ is a factor of $10000$, that is $m=1,2,4,5,8,10,16,20,25,40,50,80,100$, $t/\delta$ is integer, the corresponding variances converges to $\frac{1}{6}+\frac{1}{4}(1-e^{-2})\approx 0.3828$, for other $m$, the variances do not converge, this is because the ceil of $t/\delta$ creates an irregular bias $\lceil t/\delta\rceil\delta-t$, which is of order $\delta$. If we change the order of $\delta$ in the CLT, the limiting distribution will be either $0$ or $\iy$, which is trivial. Thus we conclude that there are no non-trivial higher-order convergence result for $\hat{\theta}_\delta(t)-\hat{\theta}(t)$ for arbitrary sequence $\{\delta\ra 0\}$.

\begin{figure}[htbp]
\centering
\includegraphics[width=0.88\textwidth]{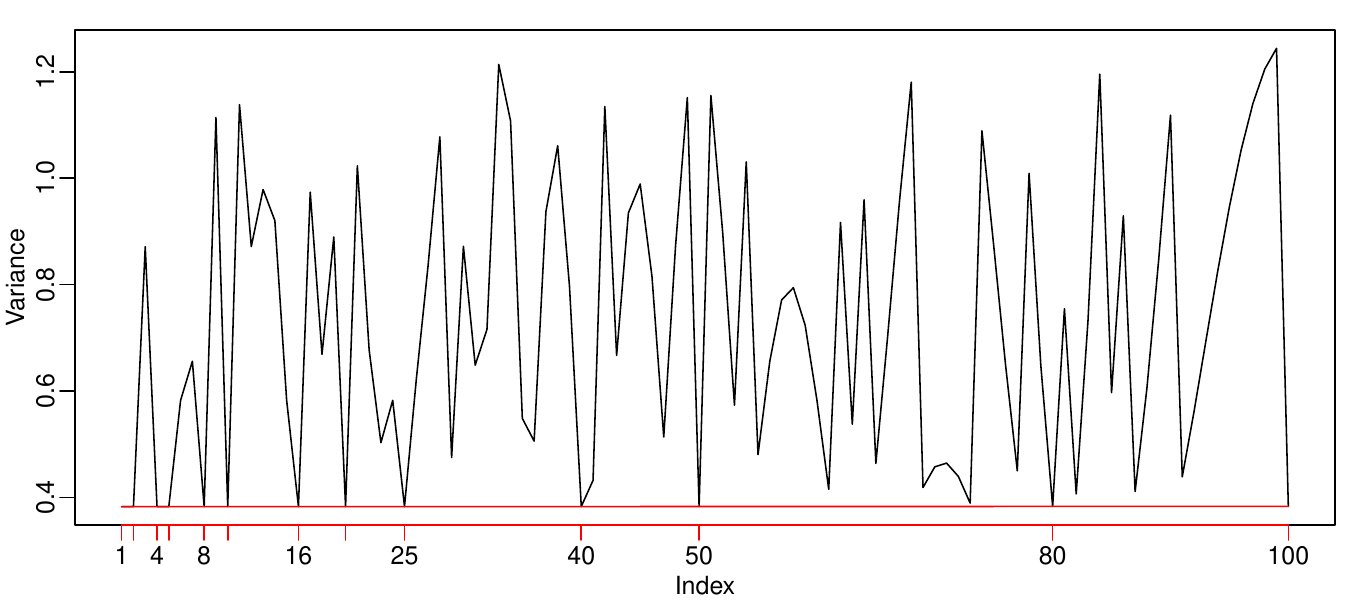}
\caption{Variance of $t/\delta(\hat{\theta}_\delta(t)-\hat{\theta}(t))$ for $\delta_m=0.0001m$, $m=1,2,\dots,100$}
\label{fig0}
\end{figure}

For $s$ and $t$ such that $t/s$ is an irrational number, it is impossible to find $\delta$ such that both $s/\delta$ and $t/\delta$ are integers. This means we cannot establish the finite dimensional convergence for $(s/\delta(\hat{\theta}_\delta(s)-\hat{\theta}(s)),t/\delta(\hat{\theta}_\delta(t)-\hat{\theta}(t)))$, even for a particular convergent sequence of $\delta$. Therefore we cannot derive any higher-order convergence result for the process $\{\hat{\theta}_\delta(t)-\hat{\theta}(t)\}$.

\newpage
\section{Proof in Section 6}%\ref{SEC-APP}}

First we show a result which controls the difference between the algorithm and the ODE, 

\begin{prop}\label{prop66}
Assume $|\nabla\Psi(x)|\leq C(1+|x|^p)$ for some $p>0$, and for fixed $T>0$, $\Sigma_t$ is non-degenerated over the interval $[0,T]$, then we have
\[\max_{t\in[0,T]}|\mu_\delta(t)-\mu(t)|=O(\delta), \max_{t\in[0,T]}|\Sigma_\delta(t)-\Sigma(t)|=O(\delta).\]
\end{prop}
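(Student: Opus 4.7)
The plan is to show that the JKO iterates $(\mu_\delta^{(k)}, \Sigma_\delta^{(k)})$ form, up to a per-step error of $O(\delta^2)$, a backward-Euler discretization of the ODE system \eqref{eqe1}, and then apply a standard discrete Gronwall argument after establishing a priori bounds on the iterates.

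First I would derive the Euler--Lagrange conditions for \eqref{JKO-BW1} on the Bures--Wasserstein space. Using the closed-form expression
\[ d_W^2(N(\mu_1,\Sigma_1), N(\mu_2,\Sigma_2)) = |\mu_1 - \mu_2|^2 + \mathrm{tr}\bigl(\Sigma_1 + \Sigma_2 - 2(\Sigma_1^{1/2}\Sigma_2\Sigma_1^{1/2})^{1/2}\bigr), \]
differentiating in $\mu$ and $\Sigma$ (as in \cite{lambert2022variational}) produces the implicit fixed-point equations
\[ \mu_\delta^{(k+1)} - \mu_\delta^{(k)} = -\delta\,\mE_{p^{(k+1)}_\delta}[\nabla\Psi(X)], \]
\[ \Sigma_\delta^{(k+1)} - \Sigma_\delta^{(k)} = 2\delta I - \delta\bigl(\mE_{p^{(k+1)}_\delta}[\nabla\Psi(X)(X-\mu_\delta^{(k+1)})'] + \text{transpose}\bigr) + R_k, \]
where $R_k = O(\delta^2)$ arises from the Taylor expansion of the matrix square root around $\Sigma_\delta^{(k)}$ (here the non-degeneracy of $\Sigma_t$ is used to ensure the square root is smooth). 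Hence, writing $z = (\mu, \mathrm{vec}(\Sigma))$ and letting $F$ denote the right-hand side of \eqref{eqe1}, the iterates satisfy $z_\delta^{(k+1)} = z_\delta^{(k)} + \delta F(z_\delta^{(k+1)}) + O(\delta^2)$.

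Next I would establish a priori bounds on the iterates. Using the growth condition $|\nabla\Psi(x)| \leq C(1+|x|^p)$ with Gaussian moments, both $|\mE_p[\nabla\Psi(X)]|$ and the corresponding matrix expectation are bounded by polynomials in $|\mu|$ and $\|\Sigma\|$; combined with an induction on $k$ and the fact that the $2\delta I$ term in the covariance update acts as a lower-bound regularizer, this keeps $(\mu_\delta^{(k)}, \Sigma_\delta^{(k)})$ in a compact set $K \subset \mathbb{R}^d \times \mathrm{S}_{++}^d$ that also contains the continuous trajectory $(\mu_t, \Sigma_t)_{t\in[0,T]}$. On $K$, the map $F$ is Lipschitz, since differentiating $\mE_{N(\mu,\Sigma)}[g(X)]$ in $\mu$ and $\Sigma$ introduces polynomial factors in $X$ that remain integrable against the Gaussian.

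Finally, Taylor-expanding the exact ODE gives $z((k+1)\delta) = z(k\delta) + \delta F(z((k+1)\delta)) + O(\delta^2)$, so the error $e_k = z_\delta^{(k)} - z(k\delta)$ satisfies
\[ e_{k+1} = e_k + \delta\bigl(F(z_\delta^{(k+1)}) - F(z((k+1)\delta))\bigr) + O(\delta^2). \]
Lipschitzness of $F$ on $K$ plus the discrete Gronwall inequality yields $\max_{k\delta \leq T} |e_k| = O(\delta)$, which is the claim. The main obstacle I anticipate is quantitatively certifying that the iterates remain inside the non-degenerate compact set $K$ uniformly in small $\delta$: controlling the smallest eigenvalue of $\Sigma_\delta^{(k)}$ away from zero, and preventing $|\mu_\delta^{(k)}|$ from blowing up, requires combining the $2\delta I$ regularization in the covariance update with a careful bootstrap argument that uses the polynomial-growth bound on $\nabla\Psi$ only on $K$, closing the induction.
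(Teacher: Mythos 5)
Your proposal follows essentially the same route as the paper's proof: a per-step consistency error of $O(\delta^2)$ for the Bures--Wasserstein JKO update (the paper cites \cite{lambert2022variational} for this expansion rather than re-deriving the Euler--Lagrange conditions as you do), Lipschitz continuity of the drift in $(\mu,\Sigma)$ obtained from the Gaussian density derivatives together with the polynomial growth of $\nabla\Psi$ and the non-degeneracy of $\Sigma_t$ on $[0,T]$, and a discrete Gronwall recursion on $a_k=|\mu_\delta(k\delta)-\mu(k\delta)|$, $b_k=|\Sigma_\delta(k\delta)-\Sigma(k\delta)|$. The compactness/non-degeneracy bootstrap you flag as the main remaining obstacle is not treated in any more detail in the paper's proof either, which evaluates the Lipschitz bound at intermediate points between the discrete iterates and the continuous trajectory and relies on the $O(\delta)$ per-step movement from \cite{lambert2022variational}.
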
 

\begin{proof}
The partial derivative of $p(x;\mu_,\Sigma)$ w.r.t. $\mu$ and $\Sigma$ are:
\begin{equation}\label{eqe2}
\begin{aligned}
\nabla_\mu p(x;\mu_,\Sigma)&=p\Sigma^{-1}(x-\mu) \\
\nabla_\Sigma p(x;\mu_,\Sigma)&=\frac{1}{2}p(-\Sigma^{-1}+\Sigma^{-1}(x-\mu)(x-\mu)'\Sigma^{-1})
\end{aligned}
\end{equation}
When $\Sigma_t$ is non-degenerated and continuous over a fixed interval $[0,T]$, all eigenvalues of $\Sigma_t$ are bounded in $[C_1,C_2], (C_1>0)$. Then $\nabla_\mu p(x;\mu_,\Sigma)$ and $\nabla_\Sigma p(x;\mu_,\Sigma)$ are bounded, and the normal density function $p$ is a Lipschitz function of $\mu$ and $\Sigma$, that is, there exists $L>0$, such that,
\begin{equation}\label{eqe3}
|p(x;\mu_1,\Sigma_1)-p(x;\mu_2,\Sigma_2)|\leq L(|\mu_1-\mu_2|+\|\Sigma_1-\Sigma_2\|_F)
\end{equation}
Moreover, if $|\nabla\Psi(x)|\leq C(1+|x|^p)$, then
\begin{equation}\label{eqe31}
\begin{aligned}
&\mE_{p_1}(\nabla\Psi(X))-\mE_{p_2}(\nabla\Psi(X)) \\
=&\int_{\mR^d}(p(x;\mu_1,\Sigma_1)-p(x;\mu_2,\Sigma_2))\nabla\Psi(x)dx \\
\leq&\int_{\mR^d}(|\nabla_\mu p(x;\tilde{\mu},\tilde{\Sigma})|\cdot|\mu_1-\mu_2|+\|\nabla_\Sigma p(x;\tilde{\mu},\tilde{\Sigma})\|_F\cdot\|\Sigma_1-\Sigma_2\|_F)\nabla\Psi(x)dx \\
\leq&C(|\mu_1-\mu_2|+\|\Sigma_1-\Sigma_2\|_F)
\end{aligned}
\end{equation}

Now we look at the difference of $\mu$ and $\Sigma$ between consecutive steps in the algorithm. In the current step, the distribution is denoted by $p_0=N(\mu_0,\Sigma_0)$, and in the next step, we take  $p=N(\mu,\Sigma)$ as the minimizer of $\frac{1}{2} \left[ d_W \left(p, p_0  \right)\right]^2 + \delta \mF_\Psi(p)$, then by \cite{lambert2022variational}, $\mu-\mu_0=O(\delta)$ and $\Sigma-\Sigma_0=O(\delta)$, and by taking partial derivative in the JKO scheme, we get
\[\frac{\mu-\mu_0}{\delta}=-\mE_{p}(\nabla\Psi(X))=-\mE_{p_0}(\nabla\Psi(X))+O(\delta)\]
Furthermore, we have
\begin{equation}\label{eqe4}
\begin{aligned}
\mu-\mu_0&=-\mE_{p_0}(\nabla\Psi(X))\delta+O(\delta^2) \\
\Sigma-\Sigma_0&=[2I-\mE_{p_0}(\nabla\Psi(X)(X-\mu_0)'+(X-\mu_0)\nabla '\Psi(X))]\delta+O(\delta^2) 
\end{aligned}
\end{equation}

Let $a_k=|\mu_\delta(k\delta)-\mu(k\delta)|$, $b_k=|\Sigma_\delta(k\delta)-\Sigma(k\delta)|$, then $a_0=b_0=0$, and
\begin{equation}\label{eqe5}
\begin{aligned}
a_{k+1}&\leq a_k+|\mu_\delta((k+1)\delta)-\mu_\delta(k\delta)-[\mu((k+1)\delta)-\mu(k\delta)]| \\
&=a_k+\left|-\mE_{p_\delta(k\delta)}(\nabla\Psi(X))\delta+O(\delta^2)+\int_{k\delta}^{(k+1)\delta}\mE_{p(s)}(\nabla\Psi(X))ds\right| \\
&=a_k+|-(\mE_{p_\delta(k\delta)}(\nabla\Psi(X))-\mE_{p(k\delta)}(\nabla\Psi(X)))\delta+O(\delta^2)| \\
&\leq a_k+C\delta(a_k+b_k)+O(\delta^2)
\end{aligned}
\end{equation}
Similarly, we have
\[b_{k+1}\leq b_k+C\delta(a_k+b_k)+O(\delta^2)\]
then if $k<T/\delta$,
\[a_{k+1}+b_{k+1}\leq (1+C\delta)(a_k+b_k)+O(\delta^2)\leq T/\delta(1+C\delta)^{T/\delta}O(\delta^2)=O(\delta) \]
\end{proof}

\subsection{Proof of Theorem 6.1}%\ref{thm-s6}}
Recall that
\[ \sqrt{n} [\nabla \hat{\Psi}_n(x) - \nabla\Psi(x)] \stackrel{a.s.}{\longrightarrow}  \nabla\tau(x) \bZ.\]
Since $\dot{\mu}_t=-\mE_{p_t}(\nabla\Psi(X)), \dot{\mu}_t^n=-\mE_{p_t^n}(\nabla\hat{\Psi}_n(X))$, we have
\begin{equation}\label{eqe7}
\begin{aligned}
\dot V_\mu^n(t)&=-\int_{\mR^d} \sqrt{n}[\nabla \hat{\Psi}_n(x) - \nabla\Psi(x)]p_t^n(x)dx-\int_{\mR^d}\nabla\Psi(x)\sqrt{n}(p_t^n(x)-p_t(x))dx \\
&=O_P(1)-\int_{\mR^d}\nabla\Psi(x) \sqrt{n}\left[\nabla_\mu p(x;\tilde{\mu},\tilde{\Sigma})\cdot(\mu_t^n-\mu_t)+\nabla_\Sigma p(x;\tilde{\mu},\tilde{\Sigma})\cdot(\Sigma_t^n-\Sigma_t)\right]dx,
\end{aligned}
\end{equation}
then 
\[|\dot V_\mu^n(t)|\leq O_P(1)+O_P(1)(|V_\mu^n(t)|+|V_\Sigma^n(t)|)\]
Similarly,
\[|\dot V_\Sigma^n(t)|\leq O_P(1)+O_P(1)(|V_\mu^n(t)|+|V_\Sigma^n(t)|),\]
using Gronwall's inequality, we can show that
\begin{equation}\label{eqe8}
\max_{t\in[0,T]}(|V_\mu^n(t)|+|V_\Sigma^n(t)|)=O_P(1), \max_{t\in[0,T]}(|\dot V_\mu^n(t)|+|\dot V_\Sigma^n(t)|)=O_P(1)
\end{equation}
Then by the first line of (\ref{eqe7}),
\begin{equation}\label{eqe9}
\dot V_\mu^n(t)=-\int_{\mR^d} \nabla\tau(x) \bZ p_t(x)dx-\int_{\mR^d}\nabla\Psi(x)\left[\nabla_\mu p(x;\mu_t,\Sigma_t)\cdot V_\mu^n(t)+\nabla_\Sigma p(x;\mu_t,\Sigma_t)\cdot V_\Sigma^n(t)\right]dx+o_p(1)
\end{equation}
For $V_\Sigma^n(t)$, we can also get
\begin{equation}\label{eqe10}
\begin{aligned}
\dot V_\Sigma^n(t)=&-\int_{\mR^d} [\nabla\tau(x) \bZ(x-\mu_t)'-\nabla\Psi(x)V_\mu^n(t)']p_t(x)dx- \\
&\int_{\mR^d}\nabla\Psi(x)(x-\mu_t)'[\nabla_\mu p_t(x)\cdot V_\mu^n(t)+\nabla_\Sigma p_t(x)\cdot V_\Sigma^n(t)]dx- \\
&\int_{\mR^d} [(x-\mu_t)(\nabla\tau(x) \bZ)'-V_\mu^n(t)\nabla'\Psi(x)]p_t(x)dx- \\
&\int_{\mR^d}(x-\mu_t)\nabla'\Psi(x)[\nabla_\mu p_t(x)\cdot V_\mu^n(t)+\nabla_\Sigma p_t(x)\cdot V_\Sigma^n(t)]dx+o_p(1)
\end{aligned}
\end{equation}
then as $n\ra\iy$, $V_\mu^n(t)$ and $V_\Sigma^n(t)$ will converge to the solution to $V_\mu(t)$ and $V_\Sigma(t)$. By the tightness (\ref{eqe8}), we can also show the weak convergence result. 

Since $\frac{|\nabla\hat{\Psi}_n(x)|}{1+|x|^p}=O_P(1)$,
we can prove Proposition \ref{prop66} with $O(\delta)$ replaced by $O_P(\delta)$. If $\delta\sqrt{n}\ra 0$, by Delta method we have
\[\sqrt{n}(\mu_\delta^n(t)-\mu(t))\ra V_\mu(t), \sqrt{n}(\Sigma_\delta^n(t)-\Sigma(t))\ra V_\Sigma(t).\] 

\subsection{Proof of Theorem 6.2}%\ref{thm-s7}}
Given (A7) and $\delta m^{1+2\alpha}\ra \iy$, use the same technique in the proof of Theorem 5.1%\ref{thm-2}
, we can also show:
\begin{equation}\label{eqe13}
\int_{[0,T]\times\mR^d} \left[(m/\delta)^{1/2}(\nabla \hat{\Psi}_{\lceil t/\delta \rceil}(x) - \nabla\Psi(x))-\delta^{-1/2}\nabla\tau(x) \bar{\bZ}_{\lceil t/\delta \rceil}\right] p_t(x)dtdx=o_P(1),
\end{equation}
and
\begin{equation}\label{eqe15}
\int_{[0,T]\times\mR^d}\delta^{-1/2}\nabla\tau(x) \bar{\bZ}_{\lceil t/\delta \rceil} p_t(x)dtdx\ra \int_{[0,T]\times\mR^d}\nabla\tau(x)t^{-1}W_tp_t(x)dtdx=O_P(1)
\end{equation}
By (\ref{eqe13}) and (\ref{eqe15}), we have
\[\int_{[0,T]\times\mR^d} (m/\delta)^{1/2}(\nabla \hat{\Psi}_{\lceil t/\delta \rceil}(x) - \nabla\Psi(x))p_t(x)dtdx=O_P(1)\]
Just like (\ref{eqe7}), we get
\begin{equation}\label{eqf1}
\begin{aligned}
V_{\mu,1}^m(T)=&-\int_{[0,T]\times\mR^d} (m/\delta)^{1/2}[\nabla \hat{\Psi}_{\lceil t/\delta \rceil}(x) -\nabla\Psi(x)]p_1^m(t,x)dtdx-\\
&\int_{[0,T]\times\mR^d}\nabla\Psi(x)(m/\delta)^{1/2}(p_1^m(t,x)-p_t(x))dtdx \\
=&O_P(1)-\int_{[0,T]\times\mR^d}\nabla\Psi(x) (m/\delta)^{1/2}\left[\nabla_\mu p(x;\tilde{\mu},\tilde{\Sigma})\cdot(\mu_1^m(t)-\mu(t))+\right.\\
&\left.\nabla_\Sigma p(x;\tilde{\mu},\tilde{\Sigma})\cdot(\Sigma_1^m(t)-\Sigma(t))\right]dtdx \\
\leq &O_P(1)+\int_0^t O_P(1)(|V_{\mu,1}^{m}(t)|+|V_{\Sigma,1}^{m}(t)|)dt
\end{aligned}
\end{equation}
Similarly, we have
\[|V_{\Sigma,1}^{m}(T)|\leq O_P(1)+\int_0^T O_P(1)(|V_{\mu,1}^{m}(t)|+|V_{\Sigma,1}^{m}(t)|)ds,\]
By Gronwall's inequality,
\begin{equation}\label{eqf2}
\max_{t\in[0,T]}(|V_{\mu,1}^{m}(t)|+|V_{\Sigma,1}^{m}(t)|)=O_P(1)
\end{equation}
Then by (\ref{eqe13}), (\ref{eqe15}), (\ref{eqf1}) and (\ref{eqf2}),
\[\begin{split}
V_{\mu,1}^m(T)=&-\int_{[0,T]\times\mR^d} \nabla\tau(x) t^{-1}W_t p_t(x)dtdx- \\
&\int_{[0,T]\times\mR^d}\nabla\Psi(x)\left[\nabla_\mu p_t(x)\cdot V_{\mu,1}^m(t)+\nabla_\Sigma p_t(x)\cdot V_{\Sigma,1}^m(t)\right]dtdx+o_P(1)
\end{split}\]
Then $V_{\mu,1}^m(t)$ will converge to $V_{\mu,1}(t)$, similarly we can show $V_{\Sigma,1}^{m}(t)$ converges to $V_{\Sigma,1}(t)$.

Since $\frac{\hat{\Psi}_k(x)}{1+|x|^p}=O_P(1)$, Proposition \ref{prop66} still holds with $O_P(\delta)$, then if $\delta m\ra 0$, by Delta method we have
\[(m/\delta)^{1/2}(\mu_{\delta,1}^{m}(t)-\mu(t))\ra V_{\mu,1}(t), (n/\delta)^{1/2}(\Sigma_{\delta,1}^{m}(t)-\Sigma(t))\ra V_{\Sigma,1}(t).\] 

\end{document}